\icmltitlerunning{Model-Targeted Poisoning Attacks with Provable Convergence}
\newcommand{\cD}{\mathcal{D}}
\newcommand{\cX}{\mathcal{X}}
\newcommand{\cY}{\mathcal{Y}}
\newcommand{\cL}{\mathcal{L}}
\newcommand{\Regret}{\mathsf{Regret}}
\newcommand{\set}[1]{\{#1\}}
\newcommand{\MNIST}{{MNIST~1\nobreakdash--7}}
\newcommand{\State}{\STATE}
\newcommand{\While}[1]{\WHILE{#1}}
\newcommand{\EndWhile}[1]{\ENDWHILE{#1}}
  \newcommand{\Return}[1]{\textbf{return} #1}
\theoremstyle{definition} 
\newtheorem{theorem}{Theorem}
\newtheorem{definition}{Definition}
\newtheorem{remark}{Remark}
\newtheorem{corollary}{Corollary}
\newtheorem{lemma}{Lemma}
\newtheorem{setting}{Setting}
\newcommand\shortsection[1]{\vspace{6pt}{\noindent\bf #1.}}
\newcommand\shortsectionnp[1]{\vspace{6pt}{\noindent\bf #1}}
\def\eqref#1{equation~\ref{#1}}
\def\1{\bm{1}}
\DeclareMathAlphabet{\mathsfit}{\encodingdefault}{\sfdefault}{m}{sl}
\SetMathAlphabet{\mathsfit}{bold}{\encodingdefault}{\sfdefault}{bx}{n}
\newcommand{\R}{\mathbb{R}}
\DeclareMathOperator*{\argmax}{arg\,max}
\DeclareMathOperator*{\argmin}{arg\,min}
\DeclareMathOperator{\sign}{sign}
\begin{document}

\onecolumn{
\icmltitle{Model-Targeted Poisoning Attacks with Provable Convergence}
}


\icmlsetsymbol{equal}{*}

\begin{icmlauthorlist}
\icmlauthor{Fnu Suya}{va}
\icmlauthor{Saeed Mahloujifar}{pri}
\icmlauthor{Anshuman Suri}{va}
\icmlauthor{David Evans}{va}
\icmlauthor{Yuan Tian}{va}
\end{icmlauthorlist}

\icmlaffiliation{va}{University of Virginia}
\icmlaffiliation{pri}{Princeton University}

\icmlcorrespondingauthor{Fnu Suya}{suya@virginia.edu}
\icmlcorrespondingauthor{Saeed Mahloujifar}{sfar@princeton.edu}

\icmlkeywords{Machine Learning, ICML}

\vskip 0.3in


\printAffiliationsAndNotice{}  

\begin{abstract}
In a poisoning attack, an adversary with control over a small fraction of the training data attempts to select that data in a way that induces a corrupted model that misbehaves in favor of the adversary. We consider poisoning attacks against convex machine learning models and propose an efficient poisoning attack designed to induce a specified model. Unlike previous model-targeted poisoning attacks, our attack comes with provable convergence to {\it any} attainable target classifier. The distance from the induced classifier to the target classifier is inversely proportional to the square root of the number of poisoning points. We also provide a lower bound on the minimum number of poisoning points needed to achieve a given target classifier. Our method uses online convex optimization, so finds poisoning points incrementally. This provides more flexibility than previous attacks which require a priori assumption about the number of poisoning points.
Our attack is the first model-targeted poisoning attack that provides provable convergence for convex models, and in our experiments, it either exceeds or matches state-of-the-art attacks in terms of attack success rate and distance to the target model. 
\end{abstract}
\section{Introduction}
Machine learning often requires a large amount of labeled training data, which is collected from untrusted sources. A typical application is email spam filtering, where a spam detector filters out spam messages based on features (e.g., presence of certain words) and periodically updates the model based on newly received emails labeled by users. In such a setting, spammers can generate spam messages that inject benign words likely to occur in legitimate emails, and when models are trained on these spam messages, the filtering accuracy drops significantly~\citep{nelson2008exploiting,huang2011adversarial}. Such attacks are known as \textit{poisoning attacks}, and a training process that uses labels or data from untrusted sources is potentially vulnerable to them.

Poisoning attacks can be categorized as {\it objective-driven} 
or {\it model-targeted}. 
Objective-driven poisoning attacks have a specified attacker objective (such as reducing the overall accuracy of the victim model) and aim to induce a model that maximizes that objective. Model-targeted attacks have a specific target model in mind and aim to induce a victim model as close as possible to that target model. 
Objective-driven attacks are most commonly studied in the existing literature, and indeed, it is natural to think about attacks in terms of the goals of an adversary. We argue, though, that breaking poisoning attacks into the two steps of first finding a model to target and then selecting poisoning points to induce that model has significant advantages. This view leads to improvements in our understanding of poisoning attacks and simplifies the task of designing effective attacks for a variety of different objectives. Importantly, it can also lead to more effective poisoning attacks. 

Attacker objectives for realistic attacks are diverse, and designing a unified and effective attack strategy for different attacker objectives is hard.  Most work has considered one of two extremal attacker objectives: \emph{indiscriminate} attacks, where the adversary's goal is simply to decrease the overall accuracy of the model~\citep{biggio2012poisoning,xiao2012adversarial,mei2015using,steinhardt2017certified,koh2018stronger}; and \emph{instance-targeted} attacks, where the goal is to induce a classifier that misclassifies a particular known input~\citep{shafahi2018poison,zhu2019transferable,koh2017understanding,geiping2020witches,huang2020metapoison}.
Recently,~\citet{jagielski2019subpop} introduced a more realistic attacker objective known as a \emph{subpopulation} attack, where the goal is to increase the error rate or obtain a particular output for a defined subpopulation of the data distribution. Gradient-based local optimization is most commonly used to construct poisoning points for a particular attacker objective~\citep{biggio2012poisoning,xiao2012adversarial,mei2015using,koh2017understanding,shafahi2018poison,zhu2019transferable}. These attacks can be modified to fit other attacker objectives, but since they are based on local optimization techniques they often get stuck into bad local optima and fail to find effective sets of poisoning points~\citep{steinhardt2017certified,koh2018stronger}. To circumvent the issue of local optima, ~\citet{steinhardt2017certified} formulate the indiscriminate attack as a min-max optimization and solve it efficiently using online convex optimization techniques. However, this attack only applies to the indiscriminate setting. 

In contrast, \emph{model-targeted attacks} incorporate the attacker objective into a target model and hence, the target model can reflect any attacker objective. Thus, the same model-targeted attack methods can be directly applied to a range of indiscriminate and subpopulation attacks just by finding a suitable target model. 
\citet{mei2015using} first introduced a target model into a poisoning attack and then utilized the KKT condition to transform the problem into a tractable form, but their attack is still based on gradient-based local optimization techniques and suffers from bad local optima~\citep{steinhardt2017certified,koh2018stronger}. \citet{koh2018stronger} proposed the KKT attack, which converts the complicated bi-level optimization into a simple convex optimization problem utilizing the KKT condition and the  Carath\'eodory number of the set of scaled gradients, avoiding the local optima issues. However, their attack only works for margin-based losses and does not provide any guarantee on the number of poisoning points required to converge to the target classifier. Additionally, these attacks require knowing the number of poisoning points before running the attack, which is often not available in practical applications. 

We study poisoning attacks on simple convex models because poisoning attacks are still not fully understood in these settings. In addition, many important industrial applications continue to rely on simple models due to their easiness in model debugging, low computational cost, and for many applications, such simple convex models also have either comparable or better performances than the complex deep neural networks~\citep{dacrema2019we,tramer2020differentially}.


\shortsection{Contributions}
Our main contribution is a principled and general model-targeted poisoning method, along with proof that the model it induces converges to the target model. 
In this work, we focus on effectiveness in inducing a given target model and defer to future work a full exploration of how to select good target models for particular attacker objectives. Our focus also aligns with the goal of previous model-targeted poisoning attacks~\citep{koh2018stronger,mei2015using}.

We prove, for settings where the loss function is convex and proper regularization is adopted in training, that the model induced by training on the original training data with these points added, converges to the target classifier as the number of poison points increases (Theorem~\ref{theorem:convergence_main}). Previous model-targeted attacks lack such convergence guarantees. We then prove a lower bound on the minimum number of poisoning points needed to reach the target model (Theorem~\ref{theorem:lower_bound}).
{Such a lower bound can be used to estimate the optimality of model-targeted poisoning attacks and also indicate the intrinsic hardness of attacking different targets.} 
Our attack applies to incremental poisoning scenarios as it works in an online fashion to find effective poisoning points without a predetermined poisoning rate. Previous model-targeted attacks assume a priori number of poisoning points. 

We evaluate our attack and compare it to the state-of-the-art model-targeted attack~\citep{koh2018stronger}. We evaluate the convergence of our attack to the target model and find that for the same number of poisoning points, our attack is able to induce models closer to the target model, for all target classifiers we tried. The success rate of our attack exceeds that of the state-of-the-art attack in subpopulation attack scenarios and is comparable for indiscriminate attacks (Section~\ref{sec:experiments}).  


\section{Problem Setup}

The poisoning attack proposed in this paper applies to multi-class prediction tasks or regression problems (by treating the response variable as an additional data feature), but for simplicity of presentation we consider a binary prediction task, $h:\mathcal{X}\rightarrow \mathcal{Y}$, where $X\subseteq \R^{d}$ and $\mathcal{Y} = \{+1,-1\}$. The prediction model $h$ is characterized by parameters $\theta \in \Theta\subseteq \R^{d}$. We define the non-negative convex loss on an individual point, $(x,y)$, as $l(\theta;x,y)$ (e.g., hinge loss for SVM model). We also define the empirical loss over a set of points $A$ as $L(\theta;A)=\sum_{(x,y)\in A}l(\theta;x,y)$. 

We adopt the game-theoretic formalization of the poisoning attack process from \citet{steinhardt2017certified} to describe our model-targeted attack scenario:
\begin{enumerate}
    \item $N$ data points are drawn uniformly at random from the true data distribution over $\mathcal{X}\times\mathcal{Y}$ and form the clean training set, $\cD_c$.
    \item The adversary, with knowledge of $\cD_c$, the model training process and the model space $\Theta$, generates a target classifier $\theta_{p}\in \Theta$ that satisfies the attack goal. 
    
    \item The adversary produces a set of poisoning points, $\cD_p$, with the knowledge of $\cD_c$, model training process, $\Theta$ and $\theta_p$. 
    \item Model builder trains the model on $\cD_c\cup\cD_p$ and produces a classifier, $\theta_{\mathit{atk}}$. 
\end{enumerate}

The adversary's goal is that the induced classifier, $\theta_{\mathit{atk}}$, is close to the desired target classifier, $\theta_p$ (Section~\ref{sec:converge_proof} discusses how this distance is measured). Step 2 corresponds to the target classifier generation process. Our attack works for any target classifier, and in the paper we do not focus on the question of how to find the best target classifier to achieve a particular adversarial goal but simply adopt the heuristic target classifier generation process from \citet{koh2018stronger}. 
Step 3 corresponds to our model-targeted poisoning attack and is also the main contribution of the paper.  

We assume the model builder trains a model through empirical risk minimization (ERM) and the training process details
are known to the attacker:
\begin{equation}
       \theta_c = \argmin_{\theta\in \Theta}\frac{1}{|\cD_c|} L(\theta;\cD_c) + C_{R} \cdot R(\theta)
 \label{eq:erm}
 \end{equation}

where $R(\theta)$ is the nonnegative regularization function (e.g., $\frac{1}{2}\|\theta\|^{2}_2$ for SVM model). 

\shortsection{Threat Model} 
We assume an adversary with full knowledge of training data, model space, and training process. Although this may be unrealistic for many scenarios, this setting allows us to focus on a particular aspect of poisoning attacks and is the setting used in many prior works~\citep{biggio2011support,mei2015using,steinhardt2017certified,koh2018stronger,shafahi2018poison}. 
We assume an addition-only attack where the attacker only adds poisoning points into the clean training set. A stronger attacker may be able to modify or remove existing points, but this typically requires administrative access to the system. The added points are unconstrained, other than being value elements of the input space. They can have arbitrary features and labels, which enables us to perform the worst-case analysis on the robustness of models against addition-only poisoning attacks. Although some previous works also allow arbitrary selection of the poisoning points~\citep{biggio2011support,mei2015using,steinhardt2017certified,koh2018stronger}, others put different restrictions on the poisoning points.  A clean-label attack assumes adversaries can only perturb the features of the data, but the label is given by an oracle labeler~\citep{koh2017understanding,shafahi2018poison,zhu2019transferable,huang2020metapoison}. In label-flipping attacks, adversaries are only allowed to change the labels~\citep{biggio2011support,xiao2012adversarial,xiao2015support,jagielski2019subpop}. These restricted attacks are weaker than the poisoning attacks without restrictions~\citep{koh2018stronger,hong2020effectiveness}. 

\section{Related Work}
\label{sec:related}
The most commonly used poisoning strategy is gradient-based attack. Gradient-based attacks iteratively modify a candidate poisoning point $(\hat{x},\hat{y})$ in the set $\cD_p$ based on the test loss defined on $\hat{x}$ (keeping $\hat{y}$ fixed). This kind of attack was first studied on SVM models~\citep{biggio2012poisoning,demontis2019adversarial}, and later extended to linear and logistic regression~\citep{mei2015using,demontis2019adversarial}, and recently to larger neural network models~\citep{koh2017understanding,shafahi2018poison,zhu2019transferable,huang2020metapoison}. \citet{jagielski2018manipulating} studied gradient attacks and principled defenses on linear regression tasks. 
In addition to classification and regression tasks, gradient-based poisoning attacks are also applied to topic modeling~\citep{mei2015security}, collaborative filtering~\citep{li2016data} and algorithmic fairness~\citep{solans2020poisoning}. 

Besides the gradient-based attacks, researchers also utilize generative adversarial networks to craft poisoning points efficiently for larger neural networks, but with limited effectiveness~\citep{yang2017generative,munoz2019poisoning}. The strongest attacks so far are the KKT attack~\citep{koh2018stronger} and the min-max attack~\citep{steinhardt2017certified,koh2018stronger}. {However, the KKT attack cannot scale well for multi-class classification and is limited to margin-based losses~\citep{koh2018stronger}}. The min-max attack only works for indiscriminate attack setting, but additionally provides a certificate on worst-case test loss for a fixed number of poisoning points.
We are also inspired by~\citet{steinhardt2017certified} to adopt online convex optimization to instantiate our model-targeted attack, but now dealing with a more general attack scenario. We also distinguish ourselves from the poisoning attack against online learning~\citep{wang2018data}. The attack against online learning considers a setting where training data arrives in a streaming manner while we consider the offline setting with training data being fixed. Another line of work studies ``targeted'' poisoning attacks where an adversary guarantees to increase the probability of an arbitrary ``bad'' property~\citep{mahloujifar2019curse, mahloujifar2017learning, mahloujifar2019universal}, as long as that property has some non-negligible chance of naturally happening. These attacks cannot be applied in model-targeted setting as the probability of naturally producing a specific target model is often negligible. 
Related to our Theorem~\ref{theorem:lower_bound},~\citet{ma2019data} also derived a lower bound on the number of poisoning points (to induce a target model), but their lower bound only applies when differential privacy is deployed during the model training process (and hence hurts model utility), which is different from our problem setting.   



\section{Poisoning Attack with a Target Model}
\label{sec:model-targeted_poison}
Our new poisoning attack determines a target model and selects poisoning points to achieve that target model. The target model generation is not our focus and we adopt the heuristic approach proposed by~\citet{koh2018stronger}. For the new poisoning attack, we first show how the algorithm generates the poisoning points (Section~\ref{sec:online_lr}). Then, we prove that the generated poisoning points, once added to the clean data, can produce a classifier that asymptotically converges to the target classifier (Section~\ref{sec:converge_proof}).

\subsection{Model-Targeted Poisoning with Online Learning}
\label{sec:online_lr}
The main idea of our model-targeted poisoning attack, as outlined in Algorithm~\ref{algorithm}, is to sequentially add a point into the training set that has maximum loss-difference between the intermediate model obtained so far and the target model. By training models on the updated training set we actually minimize the gap in the loss of the intermediate classifier and the target classifier. Repeating the process then eventually generates classifiers that have similar loss distribution as the target classifier. We show in Section~\ref{sec:converge_proof} why similar loss distribution implies convergence.

\begin{center}
\begin{minipage}{0.45\textwidth}
\vspace{-1.5em}
\begin{algorithm}[H]
    \caption{$\mathsf{ModelTargeted Poisoning}$\label{algorithm}}
    \textbf{Input:} $\cD_c$, the loss functions ($L$ and $l$), $\theta_p$ \\
    \textbf{Output:} $\cD_p$
    \begin{algorithmic}[1]
        \State $\cD_p=\emptyset$
        \While{stop criteria not met}
        \State $\theta_{t}=\argmin L(\theta;\cD_c \cup \cD_p)$
        \State $(x^{*},y^{*})=\argmax_{\cX\times\cY} l(\theta_{t}; x,y) - l(\theta_p; x, y)$\label{line:max_loss}
        \State $\cD_p=\cD_p\cup \set{(x^*,y^*)}$ 
        \EndWhile\\
        \Return $\cD_p$
    \end{algorithmic}
\end{algorithm}
\end{minipage}
\end{center}

Algorithm 1 requires the input of clean training set $\cD_c$, the Loss function ($L$ for a set of points and $l$ for individual point), and the target model $\theta_p$. The output from Algorithm $1$ will be the set of poisoning points $\cD_p$. The algorithm is simple: first, adversaries train the intermediate model $\theta_t$ on the mixture of clean and poisoning points $\cD_c\cup\cD_p$ with $\cD_p$ an empty set in the first iteration (Line 3). The adversary then searches for the point that maximizes the loss difference between $\theta_t$ and $\theta_p$ (Line 4). After the point of maximum loss difference is found, it is added to the poisoning set $\cD_p$ (Line 5). The whole process repeats until the stop condition is satisfied in Line 2. The stop condition is flexible and it can take various forms: 1) adversary has a budget $T$ on the number of poisoning points, and the algorithm halts when the algorithm runs for $T$ iterations; 2) the intermediate classifier $\theta_t$ is closer to the target classifier (than a preset threshold $\epsilon$) in terms of the maximum loss difference, and more details regarding this distance metric will be introduced in Section~\ref{sec:converge_proof}; 3) adversary has some requirement on the accuracy and the algorithm terminates when $\theta_t$ satisfies the accuracy requirement. Since we focus on producing a classifier close to the target model, we adopt the second stop criterion that measures the distance with respect to the maximum loss difference, and report results based on this criterion in Section~\ref{sec:experiments}. 

A nice property of Algorithm~\ref{algorithm} is that the classifier $\theta_{atk}$ trained on $\cD_c\cup\cD_p$ is close to the target model $\theta_p$ and asymptotically converges to $\theta_p$. Details of the convergence will be shown in the next section. The algorithm may appear to be slow, particularly for larger models due to the requirement of repeatedly training a model in line 3. However, this is not an issue. First, as will be shown in the next section, the algorithm is an online optimization process and line 3 corresponds to solving the online optimization problem exactly. However, people often use the very efficient online gradient descent method to approximately solve the problem and its asymptotic performance is the same~\citep{shalev2012online}. Second, if we solve the optimization problem exactly, we can add multiple copies of $(x^*,y^*)$ into $\cD_p$ each time. This reduces the overall iteration number, and hence reduces the number of times retraining models. The proof of convergence will be similar. For simplicity in interpreting the results, we do not use this in our experiments and add only one copy of $(x^*,y^*)$ each iteration. However, we also tested the performance by adding two copies of $(x^*,y^*)$ and find that the attack results are nearly the same while the efficiency is improved significantly. For example, for experiments on \MNIST\ dataset, by adding 2 copies of points, with the same number of poisoning points, the attack success rate decreases at most by 0.7\% while the execution time is reduced approximately by half.

\subsection{Convergence of Our Poisoning Attack}
\label{sec:converge_proof}
Before proving the convergence of Algorithm~\ref{algorithm}, we need to measure the distance of the model $\theta_{atk}$ trained on $\cD_c\cup\cD_p$ to the target model $\theta_p$. First, we define a general closeness measure based on their prediction performance which we will use to state our convergence theorem:

\begin{definition}[Loss-based distance and $\epsilon$-close]
\label{def:eps_close}
For two models $\theta_1$ and $\theta_2$, a space $\cX\times \cY$ and a loss $l(\theta; x,y)$, we define \emph{loss-based distance} $D_{l,\cX,\cY}\colon \Theta\times \Theta\to R$ as 
$$D_{l,\cX,\cY}(\theta_1,\theta_2) = \max_{(x,y) \in \cX\times \cY} l(\theta_1; x, y) - l(\theta_2; x,y),$$

and we say model $\theta_1$ is \emph{$\epsilon$-close} to model $\theta_2$ when the loss-based distance from $\theta_1$ to $\theta_2$ is upper bounded by $\epsilon$.
\end{definition}

\paragraph{Measuring model distance}  We use loss-based distance to capture the ``behavioral'' distance between two models. Namely, if $\theta_1$ is $\epsilon$-close (as measured by loss-based distance) to $\theta_2$ and vice versa, then $\theta_1$ and $\theta_2$ would have an almost equal loss on all the points, meaning that they have almost the same behavior across all the space. Note that our general definition of loss-based distance does not have the symmetry property of metrics and hence is not a metric. However, it has some other properties of metrics in the space of attainable\footnote{Attainable models are models that can be obtained by training on some data from the input space. See formal definition in Appendix~\ref{sec:proofs}.} models. For example, if some model  $\theta$ is attainable using ERM, no model could have a negative distance to it. To further show the value of this distance notion, in Appendix \ref{sec:closeness} we demonstrate an $O(\epsilon)$ upper bound on the $\ell_1$-norm of difference between two models that are $\epsilon$-close with respect to loss-based distance for the special case of Hinge loss. For Hinge loss, it also satisfies the {\it bi-directional closeness}, that is if $\theta_1$ is $\epsilon$-close to $\theta_2$, then $\theta_2$ is $O(\epsilon)$-close to $\theta$ (details can be found in Corollary~\ref{cor:bidirection-close}), and the proof details can be found in Appendix~\ref{sec:closeness}. In the rest of the paper, we will use the terms $\epsilon$-close or $\epsilon$-closeness to denote that a model is $\epsilon$ away from another model based on the loss-based distance. 

Our convergence theorem uses the loss-based distance to establish that the attack of Algorithm \ref{algorithm} produces model that converges to the target classifier:

\begin{theorem}
\label{theorem:convergence_main}
    After at most $T$ steps, Algorithm~\ref{algorithm} will produce the poisoning set $\cD_p$ and the classifier trained on $\cD_c\cup\cD_p$ is $\epsilon$-close to $\theta_{p}$, with respect to loss-based distance, $D_{l, \cX, \cY}$, for
    \begin{align*}
        \epsilon =\frac{\alpha(T) + L(\theta_p; D_c) - L(\theta_c;D_c)}{T\cdot \gamma}
    \end{align*}
    where, $\gamma$ is a constant for a given $\theta_p$ and classification task, and $\alpha(T)$ is the regret of the online algorithm when the loss function used for training is convex. 
\end{theorem}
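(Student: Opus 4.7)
The plan is to recast Algorithm~\ref{algorithm} as an instance of Follow-The-Regularized-Leader (FTRL) in an online convex optimization game, and then invoke the standard FTRL regret bound to control the accumulated loss-based distance. At round $t$, interpret the ``player'' as playing the model $\theta_t$, and the ``adversary'' as revealing the per-round loss $f_t(\theta) = l(\theta; x_t^*, y_t^*)$ where $(x_t^*,y_t^*)$ is the point computed in Line~\ref{line:max_loss}. Because Line~3 sets $\theta_t = \argmin_\theta L(\theta; \cD_c \cup \cD_p^{(t-1)}) = \argmin_\theta \bigl[L(\theta;\cD_c) + \sum_{s<t} f_s(\theta)\bigr]$, the update rule is exactly FTRL with regularizer $R(\theta)=L(\theta;\cD_c)$ initialized at $\theta_c = \argmin_\theta R(\theta)$. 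Convexity of $l$ ensures that each $f_t$ is convex, so standard FTRL analysis applies.

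Next, I would apply the textbook FTRL regret bound against the comparator $\theta_p$. Since $\theta_p$ is not in general the minimizer of the regularizer, the comparison term picks up an offset $R(\theta_p) - R(\theta_c)$, yielding
\begin{equation*}
\sum_{t=1}^{T} f_t(\theta_t) - \sum_{t=1}^{T} f_t(\theta_p) \;\leq\; \alpha(T) + L(\theta_p;\cD_c) - L(\theta_c;\cD_c),
\end{equation*}
where $\alpha(T)$ is the stability/regret term of the online optimizer and is sublinear in $T$ under the stated convexity and regularization assumptions. The key structural observation is that by the very choice of $(x_t^*,y_t^*)$ in Line~\ref{line:max_loss},
\begin{equation*}
f_t(\theta_t) - f_t(\theta_p) \;=\; l(\theta_t; x_t^*, y_t^*) - l(\theta_p; x_t^*, y_t^*) \;=\; D_{l,\cX,\cY}(\theta_t,\theta_p),
\end{equation*}
so summing gives $\sum_{t=1}^{T} D_{l,\cX,\cY}(\theta_t,\theta_p) \leq \alpha(T) + L(\theta_p;\cD_c) - L(\theta_c;\cD_c)$. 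An averaging/pigeonhole argument then guarantees that at least one iterate $\theta_t$ in the first $T$ rounds is $\epsilon$-close to $\theta_p$ with $\epsilon$ at most $\frac{\alpha(T) + L(\theta_p;\cD_c) - L(\theta_c;\cD_c)}{T}$, matching the numerator of the theorem.

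The last step, which I expect to be the main obstacle, is to translate this per-iterate average bound into a statement about the classifier actually returned by Algorithm~\ref{algorithm}---i.e., the one trained on the final $\cD_c\cup\cD_p$---and to pin down the constant $\gamma$. My plan is to use the stopping rule (the algorithm halts once $D_{l,\cX,\cY}(\theta_t,\theta_p)$ drops below the target threshold) together with a contrapositive argument: if \emph{every} iterate in the first $T$ steps had loss-based distance exceeding $\epsilon$, the summed distance would contradict the regret bound. The factor $\gamma$ should emerge here as a problem-dependent scaling: it converts the average-iterate guarantee into a guarantee on the returned classifier, and captures the ``effective slack'' between the comparator $\theta_p$ and the best attainable model on $\cD_c$ (which depends on $\theta_p$ and the classification task but not on $T$). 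Establishing that $\gamma$ is indeed a positive constant under the stated regularization and convexity assumptions, and that it is independent of $T$, is the delicate part; everything else reduces to routine FTRL bookkeeping.
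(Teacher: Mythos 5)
Your reduction to online learning, the regret bound against the comparator $\theta_p$ with offset $L(\theta_p;\cD_c)-L(\theta_c;\cD_c)$, and the final pigeonhole step all match the skeleton of the paper's proof (which uses follow-the-leader with the clean-data loss as the round-$0$ loss). The gap is in how you handle regularization, and consequently in where $\gamma$ comes from. As you set things up, the per-round losses are $f_t(\theta)=l(\theta;x_t^*,y_t^*)$ and the regularizer is $L(\theta;\cD_c)$, so your online iterate minimizes $L(\theta;\cD_c)+\sum_{s<t}f_s(\theta)$. But the victim trains by regularized ERM, $\argmin_\theta \frac{1}{|\cD|}L(\theta;\cD)+C_R\cdot R(\theta)$, so the model actually trained on $\cD_c\cup\cD_p^{(t-1)}$ minimizes $L(\theta;\cD_c\cup\cD_p^{(t-1)})+(N+t-1)\,C_R\cdot R(\theta)$. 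To make the online iterates coincide with the trained models you must fold $C_R\cdot R(\theta)$ into every per-round loss (and $N\,C_R\cdot R(\theta)$ into the round-$0$ loss); this is also what supplies the strong convexity needed for $\alpha(T)=O(\log T)$ when $l$ is merely convex. Once you do that, your key identity breaks: $f_t(\theta_t)-f_t(\theta_p)=D_{l,\cX,\cY}(\theta_t,\theta_p)+C_R\bigl(R(\theta_t)-R(\theta_p)\bigr)$, and the second term can be negative, so the regret sum no longer directly upper-bounds $\sum_t D_{l,\cX,\cY}(\theta_t,\theta_p)$.

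This is exactly where $\gamma$ enters, and it is not, as you conjecture, a device for converting an average-iterate guarantee into a guarantee on the returned classifier---the stopping rule plus pigeonhole already handles that part, as you correctly note. The paper proves a separate lemma: for any two $C_R$-attainable models with $R(\theta_1)>R(\theta_2)$, one has $\sup_{x,y}\bigl(l(\theta_2;x,y)-l(\theta_1;x,y)\bigr)/\bigl(R(\theta_1)-R(\theta_2)\bigr)>C_R$ (using that $\theta_1$ uniquely minimizes its own regularized empirical risk), and from this derives $D_{l,\cX,\cY}(\theta_t,\theta_p)+C_R\bigl(R(\theta_t)-R(\theta_p)\bigr)>\gamma\cdot D_{l,\cX,\cY}(\theta_t,\theta_p)$ with $\gamma=1-C_R/C>0$, where $C$ is an infimum of that ratio over attainable models. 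Without this attainability argument your plan has no way to produce a positive, $T$-independent $\gamma$; and with the implicit $\gamma=1$ of your unregularized formulation, the iterates analyzed are not the models Algorithm~\ref{algorithm} actually trains. The missing idea is therefore concrete: include the regularizer in the per-round losses so the reduction is faithful to the ERM in the problem setup, then prove the multiplicative lower bound relating the regularized loss gap to the pure loss gap over attainable models.
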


\begin{remark}
\label{remark:convergence}
Online learning algorithms with sublinear regret bound can be applied to show the convergence. Here, we adopt results from ~\citet{mcmahan2017survey}. Specifically, $\alpha(T)$ is in the order of $O(\log T)$) and we have $\epsilon\leq O(\frac{\log T}{T})$ when the loss function is additionally Lipschitz continuous and the regularizer $R(\theta)$ is strongly convex, and $\epsilon \rightarrow 0$ when $T\rightarrow +\infty$. $\alpha(T)$ is also in the order of $O(\log T)$ when the loss function used for training is strongly convex and the regularizer is convex. 
\end{remark}

\shortsection{Proof idea} The full proof of Theorem~\ref{theorem:convergence_main} is in Appendix \ref{sec:proofs}. Here, we only summarize the high-level proof idea. The key idea is to frame the poisoning problem as an online learning problem. In this formulation, each step of the online learning problem corresponds to the $i^{th}$ poison point $(x_i,y_i)$. In particular, the loss function at iteration $i$ of the online learning problem is set to $l(\cdot; x_i, y_i)$. Then, we show that by defining the parameters of the online learning problem carefully, the output of the follow-the-leader (FTL) algorithm~\citep{shalev2012online} at iteration $i$ is a model that is identical to training a model on a dataset consisting of the clean points and the first $i-1$ poisoning points. On the other hand, the way the poisoning points are selected, we can show that at the $i^{th}$ iteration the maximum loss difference between the target model and the best induced model so far would be smaller than the regret of the FTL algorithm divided by the number of poisoning points. The convergence bound of Theorem~\ref{theorem:convergence_main} boils down to regret analysis of the algorithm based on the loss function. Since we are assuming the loss function is convex with a strongly convex regularizer (or a strongly convex loss function with a convex regularizer), we can show that the regret is bounded by $O(\log T)$ and hence the loss distance between the induced model and the target model converges to 0.

\paragraph{Implications of Theorem \ref{theorem:convergence_main}}  The theorem says that the loss-based distance of the model trained on $\cD_c\cup\cD_p$ to the target model correlates to the loss difference between the target model and the clean model $\theta_c$ (trained on $\cD_c$) on $\cD_c$, and correlates inversely with the number of poisoning points. Therefore, it implies 1) if the target classifier $\theta_p$ has a lower loss on $\cD_c$, then it is easier to achieve the target model, and 2) with more poisoning points, we get closer to the target classifier and our attack will be more effective. The theorem also justifies the motivation behind the heuristic method in~\citet{koh2018stronger} to select a target classifier with a lower loss on clean data. For the indiscriminate attack scenario, we also improve the heuristic approach by adaptively updating the model and producing target classifiers with a much lower loss on the clean set. This helps to empirically validate our theorem. Details of the original and improved heuristic approach and relevant experiments are in Appendix~\ref{ssec:bettertarget}. 

\subsection{Lower Bound on the Number of Poisoning Points}\label{sec:lowerbound}

We first provide the lower bound on the number of poisoning points required for producing the target classifier in the addition-only setting (Theorem~\ref{theorem:lower_bound}) and then explain how the lower bound estimation can be incorporated into Algorithm~\ref{algorithm}. 
The intuition behind the theorem below is, when the number of poisoning points added to the clean training set is smaller than the lower bound, there always exists a classifier $\theta$ with lower loss compared to $\theta_p$ and hence the target classifier cannot be attained. The full proof of the theorem can be found in Appendix~\ref{sec:proofs}.

\begin{theorem}[Lower Bound]\label{thm:lowerbounds}
    Given a target classifier $\theta_{p}$, to reproduce $\theta_{p}$ by adding the poisoning set $\mathcal{D}_{p}$ into $\mathcal{D}_{c}$, the number of poisoning points $|\mathcal{D}_{p}|$ cannot be lower than 
    \begin{align*}
    \sup_{\theta} z(\theta) = \frac{L(\theta_{p};\mathcal{D}_{c})-L(\theta;\mathcal{D}_{c}) + NC_{R}(R(\theta_p)-R(\theta))}{\sup_{x,y}\big(l(\theta;x,y)-l(\theta_{p};x,y)\big) + C_{R}(R(\theta)-R(\theta_{p}))}.
    \end{align*}
    \label{theorem:lower_bound}
\end{theorem}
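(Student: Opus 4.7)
The plan is to exploit the ERM optimality of $\theta_p$ on the combined dataset $\mathcal{D}_c\cup\mathcal{D}_p$. The premise of the theorem is that the model builder, running ERM (Equation~\ref{eq:erm}) on $\mathcal{D}_c\cup\mathcal{D}_p$, returns exactly $\theta_p$; this translates to the minimizer inequality
\begin{equation*}
\tfrac{1}{N+n_p}\bigl(L(\theta_p;\mathcal{D}_c)+L(\theta_p;\mathcal{D}_p)\bigr)+C_R R(\theta_p)\;\leq\;\tfrac{1}{N+n_p}\bigl(L(\theta;\mathcal{D}_c)+L(\theta;\mathcal{D}_p)\bigr)+C_R R(\theta)
\end{equation*}
for every competitor $\theta\in\Theta$, where $N=|\mathcal{D}_c|$ and $n_p=|\mathcal{D}_p|$. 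This single inequality is the only structural fact needed; everything that follows is algebraic rearrangement.

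First, I would clear the $1/(N+n_p)$ factor and move all $\mathcal{D}_p$-dependent terms to one side, yielding $L(\theta;\mathcal{D}_p)-L(\theta_p;\mathcal{D}_p)\geq L(\theta_p;\mathcal{D}_c)-L(\theta;\mathcal{D}_c)+(N+n_p)C_R(R(\theta_p)-R(\theta))$. Second, I would relax the left-hand side via $L(\theta;\mathcal{D}_p)-L(\theta_p;\mathcal{D}_p)\leq n_p\sup_{(x,y)\in\mathcal{X}\times\mathcal{Y}}\bigl(l(\theta;x,y)-l(\theta_p;x,y)\bigr)$, valid because $\mathcal{D}_p$ has $n_p$ points and the attacker can choose each freely in $\mathcal{X}\times\mathcal{Y}$. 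Third, I would split the factor $(N+n_p)C_R(R(\theta_p)-R(\theta))$ into its $N$ and $n_p$ pieces and collect all $n_p$-proportional terms on the left, producing the bracketed factor $\sup_{x,y}\bigl(l(\theta;x,y)-l(\theta_p;x,y)\bigr)+C_R\bigl(R(\theta)-R(\theta_p)\bigr)$. Dividing through by that factor gives $n_p\geq z(\theta)$, and taking the supremum over $\theta$ delivers the claimed lower bound.

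The main obstacle is the sign of that bracketed denominator. When it is non-positive the division step either flips or trivializes the inequality, so the supremum in the statement must implicitly be restricted to $\theta$ for which $z(\theta)$ is a meaningful (non-negative) bound; outside that region $n_p\geq 0$ already suffices. A secondary subtlety is that the relaxation in the second step is tight exactly when the attacker places all $n_p$ points at a worst-case location $\arg\sup_{x,y}\bigl(l(\theta;x,y)-l(\theta_p;x,y)\bigr)$, which mirrors what Algorithm~\ref{algorithm} does and explains why the bound is in principle approachable. If the attack model restricts the poisoning region (as in clean-label or label-flip settings), the same derivation goes through over the smaller feasible set and yields a larger — hence stronger — lower bound.
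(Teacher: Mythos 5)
Your derivation follows the same route as the paper's proof: both rest solely on the ERM optimality of $\theta_p$ on $\mathcal{D}_c\cup\mathcal{D}_p$, relax $L(\theta;\mathcal{D}_p)-L(\theta_p;\mathcal{D}_p)$ by $n_p$ times the pointwise supremum, and rearrange (the paper phrases it as a contradiction, you run it forwards; that difference is cosmetic). The one place where your argument has a real hole is exactly the ``main obstacle'' you flag: the sign of the denominator $D(\theta):=\sup_{x,y}\bigl(l(\theta;x,y)-l(\theta_{p};x,y)\bigr)+C_{R}\bigl(R(\theta)-R(\theta_{p})\bigr)$. Your resolution --- restrict the supremum to $\theta$ with $D(\theta)>0$ and claim that elsewhere ``$n_p\ge 0$ already suffices'' --- does not work. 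If $D(\theta)<0$, your inequality $n_p\,D(\theta)\ge \mathrm{num}(\theta)$ (with $\mathrm{num}(\theta)$ the numerator of $z(\theta)$) forces $\mathrm{num}(\theta)<0$ as well, so $z(\theta)=\mathrm{num}(\theta)/D(\theta)$ is \emph{positive}, and dividing by the negative $D(\theta)$ flips the inequality to $n_p\le z(\theta)$ --- the opposite of what the theorem asserts. If that case could occur, the theorem as stated (a supremum over all $\theta$, with no restriction) would simply be false, so the case must be excluded, not declared trivial.

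The missing ingredient, which the paper supplies, is that $D(\theta)>0$ for every competitor $\theta$ whenever $\theta_p$ is actually attained on $\mathcal{D}_c\cup\mathcal{D}_p$. It follows from the very optimality inequality you already use, combined with an averaging argument:
\begin{align*}
\sup_{x,y}\bigl(l(\theta;x,y)-l(\theta_{p};x,y)\bigr)\;\ge\;\frac{L(\theta;\mathcal{D}_c\cup\mathcal{D}_p)-L(\theta_{p};\mathcal{D}_c\cup\mathcal{D}_p)}{|\mathcal{D}_c\cup\mathcal{D}_p|}\;>\;C_{R}\bigl(R(\theta_{p})-R(\theta)\bigr),
\end{align*}
where the strict inequality is the (unique-)minimizer property of $\theta_p$ on the poisoned set. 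Inserting this one step closes the gap; after that your rearrangement, the division by $D(\theta)$, and the final supremum over $\theta$ go through exactly as you describe. Your closing remarks --- tightness when all $n_p$ points sit at a maximizer of the loss difference, and the strengthening of the bound under a restricted poisoning region --- are correct and consistent with the paper.
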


\begin{corollary}
\label{cor:eps_lower_bound}
If we further assume bi-directional closeness in the loss-based distance, we can also derive the lower bound on number of poisoning points needed to induce models that are $\epsilon$-close to the target model.  More precisely, if $\theta_1$ being $\epsilon$-close to $\theta_2$ implies that $\theta_2$ is also $k\cdot \epsilon$ close to $\theta_1$, then we have, 
\begin{align*}
\sup_{\theta}z'(\theta) = \frac{L(\theta_{p};\mathcal{D}_{c})-L(\theta;\mathcal{D}_{c}) - NC_{R}\cdot R^{*} -Nk\epsilon}{\sup_{x,y}\big(l(\theta;x,y)-l(\theta_{p};x,y)\big) + C_{R}\cdot R^{*} + k\epsilon}.
\end{align*}
where $R^*$ is an upper bound on the regularizer $R(\theta)$.
\end{corollary}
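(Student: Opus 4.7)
\textbf{Proof proposal for Corollary~\ref{cor:eps_lower_bound}.} The plan is to adapt the argument behind Theorem~\ref{theorem:lower_bound} by replacing the hypothesis ``the induced classifier equals $\theta_p$'' with ``the induced classifier $\hat\theta$ is $\epsilon$-close to $\theta_p$ in loss-based distance.'' The bi-directional closeness assumption is precisely the tool that lets us transfer optimality statements about $\hat\theta$ (which come from ERM) into statements about $\theta_p$ (which appear in the target bound). The remaining regularizer terms involving $\hat\theta$ will be absorbed into $R^{*}$.

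First I would fix the poisoning set $\mathcal{D}_p$ with $n=|\mathcal{D}_p|$ and let $\hat\theta$ be the ERM solution of \eqref{eq:erm} on $\mathcal{D}_c\cup\mathcal{D}_p$. Optimality of $\hat\theta$ means that, for every $\theta\in\Theta$,
\begin{equation*}
L(\hat\theta;\mathcal{D}_c\cup\mathcal{D}_p)+(N+n)\,C_R R(\hat\theta)\;\le\;L(\theta;\mathcal{D}_c\cup\mathcal{D}_p)+(N+n)\,C_R R(\theta).
\end{equation*}
Next, I invoke the bi-directional closeness: since $\hat\theta$ is $\epsilon$-close to $\theta_p$, Definition~\ref{def:eps_close} applied in the reverse direction gives $l(\theta_p;x,y)\le l(\hat\theta;x,y)+k\epsilon$ for every $(x,y)\in\mathcal{X}\times\mathcal{Y}$, and summing over any finite set $\mathcal{D}$ yields $L(\theta_p;\mathcal{D})\le L(\hat\theta;\mathcal{D})+|\mathcal{D}|\,k\epsilon$.

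Combining these two inequalities lets me replace $L(\hat\theta;\mathcal{D}_c\cup\mathcal{D}_p)$ by $L(\theta_p;\mathcal{D}_c\cup\mathcal{D}_p)-(N+n)k\epsilon$ on the left-hand side; using $R(\hat\theta)\ge 0$ and $R(\theta)\le R^{*}$ lets me bound $R(\hat\theta)-R(\theta)\ge-R^{*}$. Splitting the loss over $\mathcal{D}_c$ and $\mathcal{D}_p$, and bounding $L(\theta;\mathcal{D}_p)-L(\theta_p;\mathcal{D}_p)\le n\cdot\sup_{x,y}\bigl(l(\theta;x,y)-l(\theta_p;x,y)\bigr)$ in exactly the way used for Theorem~\ref{theorem:lower_bound}, I arrive at
\begin{equation*}
L(\theta_p;\mathcal{D}_c)-L(\theta;\mathcal{D}_c)-(N+n)C_R R^{*}-(N+n)k\epsilon\;\le\; n\cdot\sup_{x,y}\bigl(l(\theta;x,y)-l(\theta_p;x,y)\bigr).
\end{equation*}
Moving the $n$-proportional terms on the left to the right, then dividing by the resulting (positive) coefficient of $n$ and finally taking the supremum over $\theta$, yields the claimed bound $\sup_\theta z'(\theta)$.

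The only real subtlety, and the step I would think about most carefully, is the treatment of the regularizer. In Theorem~\ref{theorem:lower_bound}, optimality of $\theta_p$ itself gives direct access to $R(\theta_p)$ on both sides of the ERM inequality; here, optimality only controls $R(\hat\theta)$, and we have no direct handle on how $R(\hat\theta)$ compares to $R(\theta_p)$ or $R(\theta)$. Replacing the exact term $R(\theta_p)-R(\theta)$ by the one-sided bound $-R^{*}$ is what explains why the corollary's numerator and denominator each carry an $R^{*}$ rather than a $\pm(R(\theta_p)-R(\theta))$ term. The $k\epsilon$ adjustments then appear naturally: the $(N+n)k\epsilon$ slack from bi-directional closeness splits into an $Nk\epsilon$ in the numerator and a $k\epsilon$ (per poisoning point) in the denominator.
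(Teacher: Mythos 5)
Your proposal is correct and yields exactly the stated bound. The ingredients are the same as in the paper's proof --- bi-directional closeness to transfer statements about the actually induced model to statements about $\theta_p$, the bound $0\le R(\theta)\le R^{*}$ to absorb the uncontrolled regularizer terms, and the per-point supremum bound $L(\theta;\cD_p)-L(\theta_p;\cD_p)\le |\cD_p|\cdot\sup_{x,y}\bigl(l(\theta;x,y)-l(\theta_p;x,y)\bigr)$ --- but your organization differs from the paper's. The paper treats the induced model $\theta'$ as a surrogate target, invokes Theorem~\ref{theorem:lower_bound} as a black box to get the exact min-max bound $\inf_{\theta'}\sup_\theta z(\theta,\theta')$, and then separately relaxes the numerator and denominator of the fraction $z(\theta,\theta')$ down to $z'(\theta)$. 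You instead inline the argument: starting from the ERM optimality inequality for $\hat\theta$, you inject the $k\epsilon$ slack and the $R^{*}$ bound directly, obtain a single linear inequality in $n=|\cD_p|$, and divide at the end. Your route is arguably cleaner in one respect: manipulating the linear inequality sidesteps the sign bookkeeping needed to justify that shrinking the numerator and enlarging the denominator of $z(\theta,\theta')$ can only decrease the fraction (which requires the denominator to be positive and, strictly, a case split on the sign of the relaxed numerator --- harmless, since a negative lower bound is vacuous, but glossed over in the paper). The one point you flag only parenthetically and should still verify is the positivity of the coefficient of $n$, namely $\sup_{x,y}\bigl(l(\theta;x,y)-l(\theta_p;x,y)\bigr)+C_R R^{*}+k\epsilon>0$; this follows from the attainability of $\hat\theta$ (the Lemma~\ref{lemma:c>c_r}-style argument applied to $\hat\theta$) together with $l(\theta_p;\cdot)\le l(\hat\theta;\cdot)+k\epsilon$ and $C_R R^{*}\ge C_R\bigl(R(\theta)-R(\hat\theta)\bigr)$.
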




The formula for the lower bound in Theorem~\ref{theorem:lower_bound} (and also the lower bound in Corollary~\ref{cor:eps_lower_bound}) can be easily incorporated into Algorithm~\ref{algorithm} to obtain a tighter theoretical lower bound. We simply need to check all of the intermediate classifiers $\theta_t$ produced during the attack process and replace $\theta$ with $\theta_t$, 
and the lower bound can be computed for the pair of $\theta_t$ and $\theta_p$. Algorithm~\ref{algorithm} then additionally returns the lower bound, which is the highest lower bound computed from our poisoning procedure. 

\section{Experiments}
\label{sec:experiments}


We present the experimental results by showing the convergence of Algorithm~\ref{algorithm}, the comparison of attack success rates to state-of-the-art model-targeted poisoning attack, and the theoretical lower bound for inducing a given target classifier and its gap to the number of poisoning points used by our attack. All of our evaluation code is available at: \url{https://github.com/suyeecav/model-targeted-poisoning}.

\shortsection{Datasets and Subpopulations} We experiment on both the practical subpopulation and the conventional indiscriminate attack scenarios. { 
We selected datasets and models for our experiments based on evaluations of previous poisoning attacks~\citep{biggio2012poisoning,mei2015security,koh2018stronger,steinhardt2017certified,koh2017understanding,jagielski2019subpop}. 
For the subpopulation attack experiments, we use the Adult dataset~\citep{Dua:2019}, which was used for evaluation by~\citep{jagielski2019subpop}.
We downsampled the Adult dataset to make it class-balanced and ended up with 15,682 training and 7,692 test examples. Each example has the dimension of 57 after one-hot encoding the categorical attributes. For the indiscriminate setting, we use the Dogfish~\citep{koh2017understanding} and \MNIST\ datasets~\citep{lecun1998mnist}\footnote{\MNIST\ dataset is a subset of the well-known MNIST dataset that only contains digit 1 and 7.}. The Dogfish dataset contains 1,800 training and 600 test samples. We use the same Inception-v3 features~\citep{szegedy2016rethinking} as in~\citet{koh2017understanding,steinhardt2017certified,koh2018stronger} and each image is represented by a 2,048-dimensional vector. The \MNIST\ dataset contains 13,007 training and 2,163 test samples, and each image is flattened to a 784-dimensional vector. 

We identify the subpopulations for the Adult dataset using $k$-means clustering techniques (ClusterMatch in \citet{jagielski2019subpop}) to obtain different clusters ($k=20$). For each cluster, we select instances with the label ``$\leq$ 50K'' to form the subpopulation (indicating all instances in the subpopulation are in the low-income group). This way of defining subpopulation is rather arbitrary (in contrast to a more likely attack goal which would select subpopulations based on demographic characteristics), but enables us to simplify analyses. From the $20$ subpopulations obtained, we select three subpopulations with the highest test accuracy on the clean model. They all have 100\% test accuracy, indicating all instances in these subpopulations are correctly classified as low income. This enables us to use ``attack success rate'' and ``accuracy'' without any ambiguity on the subpopulation---for each of our subpopulations, all instances are originally classified as low income, and the simulated attacker's goal is to have them classified as high income. 

\shortsection{Models and Attacks}
We conduct experiments on linear SVM and logistic regression (LR) models. Although our theoretical results do not apply to non-convex models, for curiosity we also tested our attack on deep neural networks and report results in Appendix~\ref{sec:dnn_results}.

We use the heuristic approach from~\citet{koh2018stronger} to generate target classifiers for both attack settings. In the subpopulation setting, for each subpopulation, we generate a target model that has 0\% accuracy (100\% attacker success) on the subpopulation, indicating that all subpopulation instances are now classified as high income. In the indiscriminate setting, for \MNIST, we aim to generate three target classifiers with overall test errors of 5\%, 10\%, and 15\%. For SVM, we obtained target models of test accuracies of 94.0\%, 88.8\%, and 83.3\%, and for LR, the target models are of test accuracies of 94.7\%, 89.0\%, and 84.5\%. For Dogfish, we aim to generate target models with overall test errors of 10\%, 20\%, and 30\%. For SVM, we obtained target models of test accuracies of 89.3\%, 78.3\%, and 67.2\% and for logistic regression, we obtained target models of test accuracies of 89.0\% 79.5\%, and 67.3\%. The test accuracy of the clean SVM model is 78.5\% on Adult, 98.9\% on \MNIST\, and is 98.5\% on Dogfish. The test accuracy of clean LR model is 79.9\% on Adult, 99.1\% on \MNIST\, and 98.5\% on Dogfish.

We compare our model-targeted poisoning attack in Algorithm~\ref{algorithm} to the state-of-the-art KKT attack~\citep{koh2018stronger}. We do not include the model-targeted attack from~\citet{mei2015using} because there is no open source implementation and this attack is also reported to underperform the KKT attack \citep{koh2018stronger}. Our main focus here is on comparing to other model-targeted attacks in terms of achieving the target models, but we also do include experiments (in Appendix~\ref{sec:model_vs_objective}) comparing our attack to existing objective-driven attacks, where the target model for our attack is selected to achieve that objective. With a carefully selected target model, our attack can also outperform the state-of-the-art objective-driven attack. 

}


Both our attack and the KKT attack take as input a target model and the original training data, and output a set of poisoning points intended to induce a model as close as possible to the target model when the poisoning points are added to the original training data. We compare the effectiveness of the attacks by testing them using the same target model and measuring convergence of their induced models to the target model.

The KKT attack requires the number of poisoning points as an input, while our attack is more flexible and can produce poisoning points in priority order without a preset number. As a stopping condition for our experiments, we use either a target number of poisoning points or a threshold for $\epsilon$-close distance to the target model. Since we do not know the number of poisoning points needed to reach some attacker goal in advance for the KKT attack, we first run our attack and produce a classifier that satisfies the selected $\epsilon$-close distance threshold. The loss function is hinge loss for SVM and logistic loss for LR. For SVM model, we set $\epsilon$ as 0.01 on Adult, 0.1 on \MNIST\, and 2.0 on Dogfish dataset. For LR model, we set $\epsilon$ as 0.05 on Adult, 0.1 on \MNIST\, and 1.0 on Dogfish. Then, we use the size of the poisoning set returned from our attack (denoted by $n_p$) as the input to the KKT attack for the target number of poisons needed. We also compare the two attacks with varying numbers of poisoning points up to $n_p$. 
For the KKT attack, its entire optimization process must be rerun
whenever the target number of poisoning points changes. Hence, it is infeasible to evaluate the KKT attack on many different poisoning set sizes. In our experiments, we run the KKT attack five poisoning set sizes: $0.2 \cdot n_p$, $0.4 \cdot n_p$, $0.6 \cdot n_p$, $0.8 \cdot n_p$, and $n_p$. For our attack, we simply run iterations up to the maximum number of poisoning points, collecting a data point for each iteration up to $n_p$. In Appendix~\ref{sec:additional_exps}, we also plot the performance of our attack with respect to the number of poisoning points added across iterations.

    \begin{figure*}[tb]
        \centering
        \begin{subfigure}[b]{0.45\textwidth} 
            \centering 
            \includegraphics[width=\textwidth]{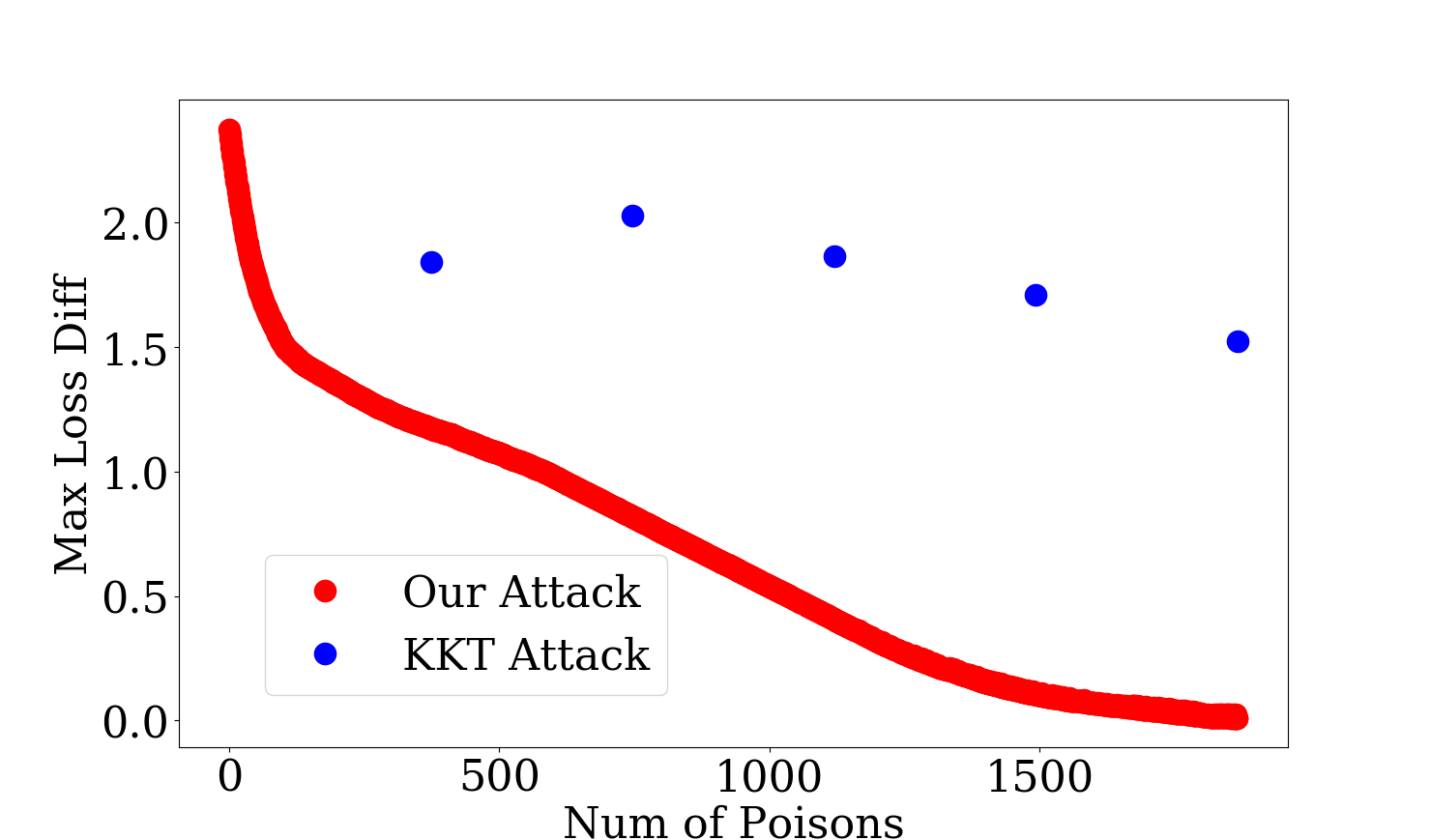}
            \caption[]%
            {Max Loss Difference}    
            \label{fig:adult_svm_subpop0_max_loss_diff}
        \end{subfigure}
        \begin{subfigure}[b]{0.45\textwidth} 
            \centering 
        \includegraphics[width=\textwidth]{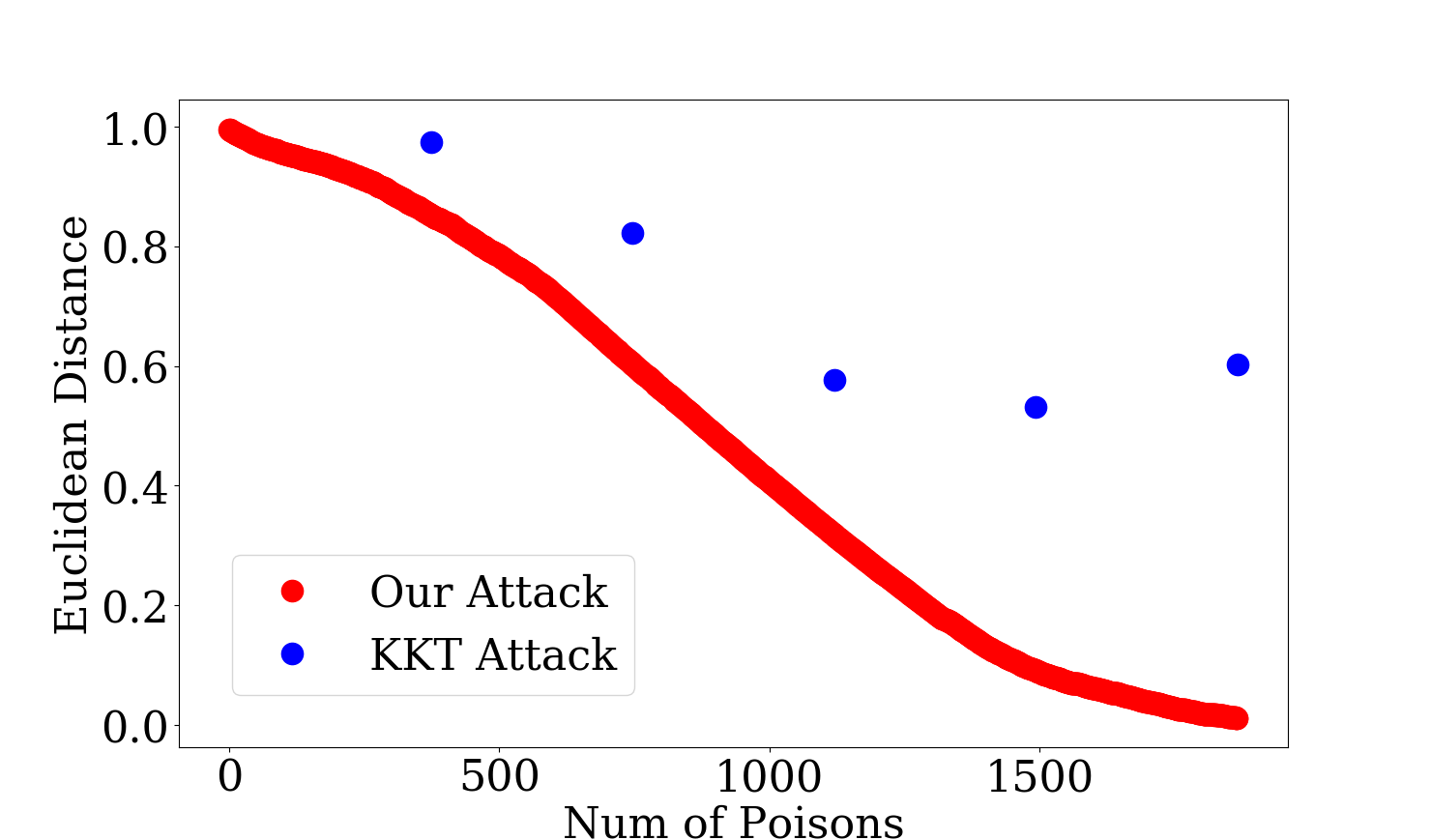}
            \caption[]%
            {Euclidean Distance}    
            \label{fig:adult_svm_subpop0_norm_diff}
        \end{subfigure}
        \caption[]
        {Convergence to the target model. Results shown are for the first subpopulation (Cluster 0) and the model is SVM. 
        }
        \label{fig:adult_svm_subpop0_convergence}
    \end{figure*}

\begin{table*}[h]
\centering
\scalebox{0.73}{
\begin{tabular}{cccccccccccccc}
\toprule
\multirow{2}{*}{\begin{tabular}[c]{@{}c@{}}Model/\\ Dataset\end{tabular}} & \multirow{2}{*}{\begin{tabular}[c]{@{}c@{}}Target\\ Model\end{tabular}} & \multirow{2}{*}{$n_p$} & \multirow{2}{*}{\begin{tabular}[c]{@{}c@{}}Lower\\ Bound\end{tabular}} & \multicolumn{2}{c}{$0.2n_p$} & \multicolumn{2}{c}{$0.4n_p$} & \multicolumn{2}{c}{$0.6n_p$} & \multicolumn{2}{c}{$0.8n_p$} & \multicolumn{2}{c}{$n_p$} \\
 &  &  &  & KKT & Ours & KKT & Ours & KKT & Ours & KKT & Ours & KKT & Ours \\\midrule
\multirow{3}{*}{\begin{tabular}[c]{@{}c@{}}SVM/\\ Adult\end{tabular}} & Cluster 0 & 1,866 & 1,667$\pm$2.4 & 96.8 & 98.4 & 65.4 & \textbf{51.6} & 14.9 & 35.6 & 1.1 & 2.7 & 15.4 & \textbf{0.5} \\
 & Cluster 1 & 2,097 & 1,831.4$\pm$5.0 & 72.2 & 77.1 & 41.0 & \textbf{23.6} & 2.8 & \textbf{0.7} & 1.4 & \textbf{0.7} & 6.9 & \textbf{0.0} \\
 & Cluster 2 & 2,163.3$\pm$2.5 & 1,863.0$\pm$9.2 & 94.9 $\pm$ 0.7 & \textbf{24.3$\pm$ 0.3} & 15.9 & 20.3 & 34.3 $\pm$ 0.2 & \textbf{12.1} & 21.6 $\pm$ 0.1  & \textbf{0.3} & 20.3 $\pm$ 0.7 & \textbf{0.3} \\\midrule
\multirow{3}{*}{\begin{tabular}[c]{@{}c@{}}LR/\\ Adult\end{tabular}} & Cluster 0 & 2,005 & N/A & 82.1 $\pm$ 1.0 & \textbf{75.6} & 71.7 $\pm$ 0.4 & \textbf{42.2} & 46.7 & \textbf{15.9} & 36.6 $\pm$ 2.1 & \textbf{1.9} & 24.3 $\pm$ 0.8 & \textbf{0.4} \\
 & Cluster 1 & 1,630$\pm$1.1 & N/A & 98.1 & \textbf{94.9} & 97.2 & \textbf{79.0} & 96.7 & \textbf{34.1} & 95.8 & \textbf{6.1} & 95.8 & \textbf{0.5} \\
 & Cluster 2 & 2,428 & N/A & 97.9 & \textbf{94.5} & 93.9 & \textbf{45.8} & 89.8 $\pm$ 0.7 & \textbf{5.8} & 79.2 $\pm$ 4.6 & \textbf{0.6} & 60.2 $\pm$ 1.7 & \textbf{0.6}\\
 \bottomrule
\end{tabular}
}
\caption{Subpopulation attack on Adult: comparison of test accuracies on subpopulations (\%). Target models for Adult dataset consist of models with 0\% accuracy on the selected subpopulations (Cluster 0 - Cluster 2). $n_p$ denotes the maximum number of poisoning points used by our attack, and $xn_p$ denotes comparing the two attacks at $xn_p$ poisoning points. $n_p$ is set by running our attack till the induced model becomes $0.01$-close to the target model. All results are averaged over 4 runs and standard error is 0 (exact same number of misclassifications across the runs), except where reported. We do not show lower bound for LR because we can only compute an approximate maximum loss difference and the lower bound will no longer be valid.
}
\label{tab:model-targeted-comparison-subpop}
\end{table*}

\begin{table*}[!tbh]
\centering
\scalebox{0.78}{
\begin{tabular}{cccccccccccccc}
\toprule
 &  &  &  & \multicolumn{2}{c}{$0.2n_p$} & \multicolumn{2}{c}{$0.4n_p$} & \multicolumn{2}{c}{$0.6n_p$} & \multicolumn{2}{c}{$0.8n_p$} & \multicolumn{2}{c}{$n_p$} \\
\multirow{-2}{*}{\begin{tabular}[c]{@{}c@{}}Model/\\ Dataset\end{tabular}} & \multirow{-2}{*}{\begin{tabular}[c]{@{}c@{}}Target\\ Model\end{tabular}} & \multirow{-2}{*}{$n_p$} & \multirow{-2}{*}{\begin{tabular}[c]{@{}c@{}}Lower\\ Bound\end{tabular}} & KKT & Ours & KKT & Ours & KKT & Ours & KKT & Ours & KKT & Ours \\ \midrule
 & 5\% Error & 1,737 & 874 & 97.3 & \textbf{97.1} & 96.4 & \textbf{96.1} & 95.7 & 95.7 & 94.9 & 94.9 & 94.3 & 94.6 \\
 & 10\% Error & 5,458 & 3,850.4$\pm$0.8 & 95.8 & \textbf{95.5} & 93.4 & \textbf{92.1} & 92.7 & \textbf{90.9} & 91.1 & \textbf{90.7} & 90.2 & 90.2 \\ 
\multirow{-3}{*}{\begin{tabular}[c]{@{}c@{}}SVM/\\ MNIST 1-7\end{tabular}} & 15\% Error & 6,192 & 4,904 & 98.3 & \textbf{97.8} & 96.3 & 98.1 & 97.2 & 97.3 & 98.3 & \textbf{92.7} & 82.7 & 85.9 \\ \midrule
 & 10\% Error & 32 & 15 & 97.0 & \textbf{95.8} & 94.0 & \textbf{93.3} & 92.2 & \textbf{91.2} & 90.7 & \textbf{90.2} & 90.3 & \textbf{89.8} \\
 & 20\% Error & 89 & 45 & 95.5 & 95.7 & 92.5 & \textbf{92.2} & 90.3 & \textbf{88.7} & 84.7 & 84.7 & 82.3 & \textbf{82.0} \\ 
\multirow{-3}{*}{\begin{tabular}[c]{@{}c@{}}SVM/\\ Dogfish\end{tabular}} & 30\% Error & 169 & 83 & 95.5 & 95.7 & 93.5 & \textbf{90.7} & 88.3 & \textbf{82.5} & 78.3 & \textbf{75.2} & 71.8 & \textbf{71.7} \\ \midrule
 & 5\% Error & 756 & N/A & 97.5 & \textbf{96.9} & 97.4 & \textbf{96.5} & 97.2 & \textbf{96.0} & 96.9 & \textbf{95.7} & 96.9 & \textbf{95.2} \\
 & 10\% Error & 2,113 & N/A & 97.0 & \textbf{95.7} & 96.9 & \textbf{93.8} & 96.8 & \textbf{92.3} & 96.2 & \textbf{91.4} & 96.4 & \textbf{90.4} \\ 
\multirow{-3}{*}{\begin{tabular}[c]{@{}c@{}}LR/\\ MNIST 1-7\end{tabular}} & 15\% Error & 3,907 & N/A & 96.9 & \textbf{95.4} & 97.0 & \textbf{93.3} & 97.1 & \textbf{90.7} & 97.1 & \textbf{88.3} & 97.1 & \textbf{87.1} \\ \midrule
 & 10\% Error & 62 & N/A & 98.8 & \textbf{93.0} & 98.5 $\pm$ 0.1 & \textbf{89.7 $\pm$ 0.3} & 98.8 & \textbf{89.2} & 98.8 & \textbf{89.2} & 98.8 & \textbf{89.0} \\ 
 & 20\% Error & 120 & N/A & 98.5 & \textbf{93.2} & 99.2 & \textbf{88.2} & 99.3 & \textbf{85.3} & 99.5 & \textbf{83.0} & 99.5 & \textbf{80.7} \\
\multirow{-3}{*}{\begin{tabular}[c]{@{}c@{}}LR/\\ Dogfish\end{tabular}} & 30\% Error & 181 & N/A & 97.8 & \textbf{92.3} & 98.8 & \textbf{85.7 $\pm$ 0.2} &99.2 & \textbf{81.3 $\pm$ 0.3} & 99.5 & \textbf{75.7} & 99.5 & \textbf{72.5 $\pm$ 0.2} \\
\bottomrule
\end{tabular}
}
\caption{Indiscriminate attack on \MNIST\ and Dogfish: comparison of overall test accuracies (\%). The target models are of certain overall test errors. $n_p$ is set by running our attack till the induced model becomes $\epsilon$-close to the target model and we set $\epsilon$ as 0.1 for \MNIST\ and 2.0 for Dogfish dataset. All results are averaged over 4 runs and standard error is 0 (exact same number of misclassifications across the runs), except where reported.
}
\label{tab:model-targeted-comparison-indis}
\end{table*}

\shortsection{Convergence}
Figure~\ref{fig:adult_svm_subpop0_convergence} shows the convergence of Algorithm~\ref{algorithm} using both maximum loss difference and Euclidean distance to the target, and the result is reported on the first subpopulation (Cluster 0) of Adult and the model is SVM. The maximum number of poisoning points ($n_p$) for the experiments is obtained when the classifier from Algorithm~\ref{algorithm} is $0.01$-close to the target classifier. Our attack steadily reduces the maximum loss difference and Euclidean distance to the target model, in contrast to the KKT attack which does not seem to converge towards the target model reliably. 
Concretely, at the maximum number of poisons in Figure~\ref{fig:adult_svm_subpop0_convergence}, both the maximum loss difference and Euclidean distance of our attack (to the target) are less than 2\% of the corresponding distances of the KKT attack. Similar observations are also observed for the indiscriminate and other subpopulation attack settings, see Appendix~\ref{sec:additional_exps}. 

We believe our attack outperforms the KKT attack in terms of convergence to the target model because it approaches the target classifier differently. The foundation of the KKT attack is that for binary classification, for any target classifier generated by training on a set $D_c\cup D_p$ with $|D_p|=n$, the (exact) same classifier can also be obtained by training on set $D_c \cup D'_p$ with $|D'_p|\leq n$ and this poisoning set $D'_p$ only contains two distinct points, one from each class. In practice, the KKT attack often aims to induce the exact same classifier with much fewer poisoning points, which may not be feasible and leads the KKT attack to fail. In contrast, our attack does not try to obtain the exact target model but just selects each poisoning point in turn as the one with the best expected impact. Hence, our attack  gets close to the target model with fewer poisoning points than the number of points used to exactly produce the target model. 

\shortsection{Attack Success}
Next, we compare the classifiers induced by the two attacks in terms of the attacker's goal. Table~\ref{tab:model-targeted-comparison-subpop} summarize the results of the subpopulation attacks, where attack success is measured on the targeted cluster. At the maximum number of poisons, our attack is much more successful than the KKT attack, for both the SVM and LR models. For example, on Cluster 1 with LR, the induced classifier from our attack has 0.5\% accuracy compared to the 95.8\% accuracy of KKT.
Table~\ref{tab:model-targeted-comparison-indis} shows the results of indiscriminate attacks on \MNIST\ and Dogfish, and the attack success is the overall test error. 
For the indiscriminate attack on SVM, both on \MNIST\ and Dogfish, the two attacks have similar performance while for LR, our attack is much better than the KKT attack. The reason of KKT failing on LR is, its objective function becomes highly non-convex and is very hard to optimize. More details about the formulation can be found in \citet{koh2018stronger}. For logistic loss, our attack also needs to maximize a non-concave maximum loss difference\footnote{We use Adam optimizer~\citep{kingma2014adam} with random restarts to solve this maximization problem approximately.} (Step 4 in Algorithm~\ref{algorithm}). However, this objective is much easier to optimize than that of the KKT attack and our attack is still very effective on LR models.

\shortsection{Optimality of Our Attack} To check the optimality of our attack, we calculate a lower bound on the number of poisoning points needed to induce the model that is induced by the poisoning points  found by our attack. We calculate this lower bound on the number of poisons using Theorem~\ref{thm:lowerbounds} (details in Section~\ref{sec:lowerbound}). Note that Theorem \ref{thm:lowerbounds} provides a valid lower bound based on any intermediate model. To get a lower bound on the number of poisoning points, we only need to use Theorem \ref{thm:lowerbounds} on the encountered intermediate models and report the best one. We do this by running Algorithm~\ref{algorithm} using the induced model (and not the previous target model) as the target model, terminating when the induced classifier is $\epsilon$-close to the given target model. Note that for LR, maximizing the loss difference is not concave and therefore, we cannot obtain the actual maximum loss difference, which is required in the denominator in Theorem~\ref{theorem:lower_bound}. Therefore, we only report results on SVM. For the subpopulation attack on Adult, we set $\epsilon=0.01$ and for the indiscriminate attack on \MNIST\ and Dogfish, we set $\epsilon$ to 0.1 and 2.0 respectively. We then consider all the intermediate classifiers that the algorithm induced across the iterations. 
Our calculated lower bound in Table~\ref{tab:model-targeted-comparison-subpop} (Column 3-4) shows that for the Adult dataset, the gap between the lower bound and the number of used poisoning points is relatively small. This means our attack is nearly optimal in terms of minimizing the number of poisoning points needed to induce the target classifier. However, for the \MNIST\ and Dogfish datasets in Table~\ref{tab:model-targeted-comparison-indis}, there still exists some gap between the lower bound and the number of poisoning points used by our attack, indicating there might exist more efficient model-targeted poisoning attacks. 

\section{Conclusion} 

We propose a general poisoning framework with provable guarantees to approach any attainable target classifier, along with a lower and upper bound on the number of poisoning points needed. Our attack is a generic tool that first captures the adversary's goal as a target model and then focuses on the power of attacks to induce that model. This separation enables future work to explore the effectiveness of poisoning attacks corresponding to different adversarial goals. We have not considered defenses in this work, and it is an important and interesting direction to study the effectiveness of our attack against data poisoning defenses. Defenses may be designed to limit the search space of the points with maximum loss difference and hence increase the number of poisoning points needed. We also leave the investigation of the application of our model-targeted attacks in other attacker objectives, e.g. backdoor attacks and privacy attacks, for future work.

\section*{Acknowledgements}

This work was partially funded by awards from the National Science Foundation (NSF) SaTC program (Center for Trustworthy Machine Learning, \#1804603), NSF Office of Advanced Cyberinfrastructure (\#2002985) and Amazon research award.


\bibliography{icml2021}
\bibliographystyle{icml2021}

\clearpage
\appendix
\section{Proofs}
\label{sec:proofs}
In this section, we provide the proofs of the main theorems shown in this paper. For convenience, we restate all the theorems below while also referencing to the main paper. 

Before proving the main theorem, we introduce two new  definitions and several lemmas to assist with the proof.

\begin{definition}[Attainable models]\label{defn:attainable}
We say $\theta$ is $C_R$-attainable with respect to loss function $l$ and regularization function $R$ if there exists a training set $\cD$ such that 
$$\theta= \argmin_{\theta\in\Theta} \frac{1}{|D|}\cdot L(\theta;\cD) +C_R\cdot R(\theta) $$
\end{definition}


\begin{lemma}\label{lemma:c>c_r}
Let $\theta_1$ and $\theta_2$ be two $C_R$-attainable parameters for some $C_R>0$ such that $R(\theta_1)>R(\theta_2)$. Then,
$$\sup_{x,y} \big(l(\theta_2; x,y) - l(\theta_1; x,y)\big)/\big(R(\theta_1)-R(\theta_2)\big)> C_R.$$
\end{lemma}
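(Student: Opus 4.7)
The plan is to exploit the optimality that comes with $C_R$-attainability of $\theta_1$. Since $\theta_1$ is $C_R$-attainable, there is some dataset $\cD_1$ such that
$$\theta_1 = \argmin_{\theta\in\Theta} \frac{1}{|\cD_1|} L(\theta;\cD_1) + C_R \cdot R(\theta).$$
The hypothesis $R(\theta_1) > R(\theta_2)$ automatically gives $\theta_1 \neq \theta_2$. Interpreting $\argmin$ as selecting the unique minimizer (which is consistent with the strongly-convex regularized ERM setting used throughout the paper, e.g.\ $R(\theta)=\tfrac{1}{2}\|\theta\|_2^2$ for SVM), I would conclude the \emph{strict} inequality
$$\frac{1}{|\cD_1|} L(\theta_1;\cD_1) + C_R R(\theta_1) \;<\; \frac{1}{|\cD_1|} L(\theta_2;\cD_1) + C_R R(\theta_2).$$

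The next step is purely algebraic rearrangement. Moving the regularization terms to one side and the loss terms to the other, this becomes
$$C_R\bigl(R(\theta_1) - R(\theta_2)\bigr) \;<\; \frac{1}{|\cD_1|} \sum_{(x,y)\in \cD_1} \bigl(l(\theta_2;x,y) - l(\theta_1;x,y)\bigr).$$
The right-hand side is an average of quantities of the form $l(\theta_2;x,y)-l(\theta_1;x,y)$, so it is upper bounded by the supremum over $\cX\times\cY$ of the same quantity. Combining gives
$$C_R\bigl(R(\theta_1)-R(\theta_2)\bigr) \;<\; \sup_{x,y}\bigl(l(\theta_2;x,y) - l(\theta_1;x,y)\bigr).$$
Finally, because the assumption $R(\theta_1) > R(\theta_2)$ makes $R(\theta_1)-R(\theta_2)$ strictly positive, I can divide through and recover the claimed inequality.

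The only subtle point, and where I would be most careful in the write-up, is justifying the strict inequality in the first display. If the minimizer in the definition of $C_R$-attainable is only a minimizer rather than the unique minimizer, one would need an extra argument, for example invoking strong convexity of the regularized objective (which follows from convex $l$ together with a strongly convex $R$ at any $C_R>0$, or from strongly convex $l$ with convex $R$, matching the setting of Theorem~\ref{theorem:convergence_main}). Given the conventions already adopted in the paper, this is immediate, but I would state it explicitly so the strictness step is airtight; the remaining inequalities are just averaging and positivity and require no further care. Note that we never need to use $C_R$-attainability of $\theta_2$, only that of $\theta_1$.
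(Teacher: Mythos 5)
Your proposal is correct and follows essentially the same route as the paper's own proof: pick the dataset $\cD_1$ witnessing $C_R$-attainability of $\theta_1$, use the strict optimality of the unique minimizer, rearrange, bound the average loss difference by the supremum, and divide by the positive quantity $R(\theta_1)-R(\theta_2)$. Your explicit remarks that the strictness hinges on uniqueness of the minimizer and that $C_R$-attainability of $\theta_2$ is never used are both accurate and consistent with the paper's argument.
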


\begin{proof}
 Consider any attainable pairs of $(\theta_1, \theta_2)$ such that $R(\theta_1) > R(\theta_2)$ and let $\cD_1$ to be training set that the training algorithm produces the unique minimizer $\theta_1$. Namely,
 $$\theta_1 = \argmin_{\theta} \frac{1}{|\cD_1|} \cdot L(\theta; \cD_1) + C_R\cdot R(\theta) $$
 
 Since $\theta_1$ minimizes the total loss on $\cD_1$ uniquely, we have
 $$\frac{1}{|\cD_1|} L(\theta_2; \cD_1) + C_R\cdot R(\theta_2) > \frac{1}{|\cD_1|} L(\theta_1; \cD_1) + C_R\cdot R(\theta_1)$$


By rearranging the above inequality and by an averaging argument, we have
 $$
  \sup_{x,y} \big(l(\theta_2; x,y) - l(\theta_1; x,y)\big) \geq \frac{1}{|\cD_1|} L(\theta_2; \cD_1) - \frac{1}{|\cD_1|} L(\theta_1; \cD_1) > C_R\cdot\big(R(\theta_1) - R(\theta_2)\big).
 $$
 
Now since $R(\theta_1)> R(\theta_2)$ we have
 $$
   \sup_{x,y} \big(l(\theta_2; x,y) - l(\theta_1; x,y)\big)/\big(R(\theta_1)-R(\theta_2)\big) > C_{R}. 
 $$
\end{proof}

\begin{lemma}
\label{lemma:max_loss_dff-reg_diff}
    Let $F$ be the family of all $C_R$-attainable models. For any $\theta_1 \in F$, there is a constant $\gamma$ where for all $\theta_2 \in F$ we have $$\sup_{x,y}\big( l(\theta_2;x,y)-l(\theta_1;x,y)\big) +C_R(R(\theta_2)-R(\theta_1)) > \gamma\cdot\sup_{x,y}\big( l(\theta_2; x,y)-l(\theta_1;x,y)\big)$$ 
    {where $\gamma$ is a positive constant related to, $\theta_1$, $C_R$ and other model parameters (fixed for a given classification task)}.
\end{lemma}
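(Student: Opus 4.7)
The plan is a case split on the sign of $B := R(\theta_2) - R(\theta_1)$. Writing $A := \sup_{x,y}\bigl(l(\theta_2;x,y) - l(\theta_1;x,y)\bigr)$, the target inequality reads $A + C_R B > \gamma\, A$, so the goal is to extract a uniform positive $\gamma$ (depending on $\theta_1$, $C_R$, and problem parameters) for all attainable $\theta_2 \neq \theta_1$.

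First, in the easy case $B \geq 0$, the term $C_R B$ is non-negative, so $A + C_R B \geq A$ and any $\gamma \leq 1$ suffices. The substantive case is $B < 0$, i.e.\ $R(\theta_2) < R(\theta_1)$. Here Lemma~\ref{lemma:c>c_r} already delivers the strict inequality $A > C_R(R(\theta_1) - R(\theta_2))$, equivalently $A + C_R B > 0$, but this is only pointwise strict and does not by itself produce a constant $\gamma > 0$. My plan is to sharpen the argument behind Lemma~\ref{lemma:c>c_r} using strong convexity of the regularizer (with constant $\mu$), so that the regularized empirical loss becomes $\mu C_R$-strongly convex. Letting $\cD_1$ be a witness dataset for which $\theta_1$ is the unique regularized minimizer and applying strong convexity at $\theta_1$ gives
\begin{equation*}
A \;\geq\; \tfrac{1}{|\cD_1|}\bigl(L(\theta_2;\cD_1) - L(\theta_1;\cD_1)\bigr) \;\geq\; C_R\bigl(R(\theta_1)-R(\theta_2)\bigr) + \tfrac{\mu C_R}{2}\|\theta_2-\theta_1\|^2,
\end{equation*}
so that $A + C_R B \geq \tfrac{\mu C_R}{2}\|\theta_2-\theta_1\|^2$. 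Combining this quadratic lower bound with a Lipschitz upper bound $A \leq L_{\mathrm{lip}}\|\theta_2-\theta_1\|$ (available on the bounded attainable region) lets me compare the two quantities and bound the ratio $(A+C_R B)/A$ from below by a constant built from $\mu$, $C_R$, $L_{\mathrm{lip}}$, and the diameter of the set of attainable parameters.

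The main obstacle is the regime $\theta_2 \to \theta_1$: both numerator and denominator vanish, and a direct quadratic-over-linear comparison gives a ratio proportional to $\|\theta_2-\theta_1\|$ that degenerates. I plan to handle this by a compactness/continuity argument on the set $F$ of attainable models, which is bounded because $R$ grows at infinity and any attainable $\theta$ must satisfy $C_R R(\theta) \leq \frac{1}{|\cD|}L(\vzero;\cD)$ for some witness $\cD$. Since $(A+C_R B)/A$ is continuous and strictly positive on $F\setminus\{\theta_1\}$, compactness yields a positive infimum outside any neighborhood of $\theta_1$; inside a small enough neighborhood, the strong convexity inequality combined with directional-derivative analysis of $A$ along $\theta_2 - \theta_1$ (using that the sup over $(x,y)$ is attained at points where $l(\cdot;x,y)$ has non-trivial first-order behavior, as is the case for hinge and logistic losses used in the paper) yields the matching lower bound. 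Finally I take $\gamma$ to be the minimum of the constants produced in the two cases.
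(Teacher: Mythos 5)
Your overall decomposition (a case split on the sign of $R(\theta_2)-R(\theta_1)$, with the case $R(\theta_2)\geq R(\theta_1)$ being immediate) matches the paper's, and your strong-convexity refinement of Lemma~\ref{lemma:c>c_r}, namely $\sup_{x,y}\big(l(\theta_2;x,y)-l(\theta_1;x,y)\big)+C_R\big(R(\theta_2)-R(\theta_1)\big)\geq \tfrac{\mu C_R}{2}\|\theta_2-\theta_1\|^2$, is correct as far as it goes. But there is a genuine gap exactly where you flag it: the regime $\theta_2\to\theta_1$. Comparing a quadratic lower bound on the numerator against a linear (Lipschitz) upper bound on the denominator gives a ratio of order $\|\theta_2-\theta_1\|$, which vanishes, and the ``directional-derivative analysis'' you invoke to close this does not deliver a uniform positive constant. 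Concretely, write $u=(\theta_2-\theta_1)/\|\theta_2-\theta_1\|$ and let $\cD_1$ be the witness set for $\theta_1$; first-order optimality gives $\tfrac{1}{|\cD_1|}\sum_{(x,y)\in\cD_1}\langle\nabla l(\theta_1;x,y),u\rangle=-C_R\langle\nabla R(\theta_1),u\rangle$, so $g(u):=\sup_{x,y}\langle\nabla l(\theta_1;x,y),u\rangle\geq -C_R\langle\nabla R(\theta_1),u\rangle$, and to first order the ratio you need to bound tends to $1-\big(-C_R\langle\nabla R(\theta_1),u\rangle\big)/g(u)$, which is only guaranteed to be $\geq 0$; it equals $0$ whenever the per-example directional gradients on $\cD_1$ all coincide with the supremum over $\cX\times\cY$. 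Nothing in your plan excludes this degeneracy, and the second-order strong-convexity term cannot rescue a first-order comparison. The compactness half also needs care: the set $F$ of attainable models need not be closed, so the positive infimum ``outside a neighborhood of $\theta_1$'' requires passing to the closure and establishing positivity of the ratio at limit points, which Lemma~\ref{lemma:c>c_r} does not directly provide.

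For comparison, the paper takes a different and shorter route: it defines $C$ as the infimum, over attainable $\theta_2$ with $R(\theta_2)<R(\theta_1)$, of the ratio $\sup_{x,y}\big(l(\theta_2;x,y)-l(\theta_1;x,y)\big)/\big(R(\theta_1)-R(\theta_2)\big)$, sets $\gamma=1-C_R/C$, and your Case~2 substitution becomes a one-line rearrangement; no strong convexity or compactness is needed. That argument, however, rests on asserting $C>C_R$ from the pointwise strict inequality of Lemma~\ref{lemma:c>c_r} --- which is precisely the uniformity question your $\theta_2\to\theta_1$ analysis exposes as the real content of the lemma. So your proposal has correctly located the crux but does not prove it; completing either argument requires an additional estimate or hypothesis yielding a uniform gap, e.g.\ a quantitative version of Lemma~\ref{lemma:c>c_r} whose right-hand side is $C_R+c$ for a fixed $c>0$ independent of $\theta_2$.
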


\begin{proof}
We prove the lemma for $\gamma = 1 - C_R/C$ for $$C=\left(\inf_{\substack{\theta_2\in F\\
\text{s.t. } R(\theta_1) > R(\theta_2) }}\sup_{x,y} (l(\theta_2;x,y)-l(\theta_1;x,y))/(R(\theta_1) - R(\theta_2))\right).$$  

First, note that by Lemma \ref{lemma:c>c_r} we have 
\begin{equation}\label{eq:00001}
    C>C_R\geq0.
\end{equation}
which implies $\gamma$ is positive.
    Now we consider two subcases based on the sign of $R(\theta_2)-R(\theta_1)$:
    
\shortsectionnp{Case 1:}    ${R(\theta_2)-R(\theta_1)\geq 0}$.\ In this case the inequality is straightforward:
    \begin{align*}
            \sup_{x,y}\big( l(\theta_2;x,y)-l(\theta_1;x,y)\big) +C_R\cdot(R(\theta_2)-R(\theta_1)) & \geq \sup_{x,y}\big( l(\theta_2;x,y)-l(\theta_1;x,y)\big)\\ 
            & > (1-C_R/C)\cdot \sup_{x,y}\big( l(\theta_2;x,y)-l(\theta_1;x,y)\big),     
    \end{align*}
    where the last inequality is based on~\eqref{eq:00001}.    

\shortsectionnp{Case 2:} ${R(\theta_2) - R(\theta_1) < 0}$.\ From the definition of $C$ we have 
    $$R(\theta_1)-R(\theta_2)\leq \frac{\sup_{x,y}\big( l(\theta_2;x,y)-l(\theta_1;x,y)\big)}{C}.$$
    Equivalently, we can say 
    $$R(\theta_2)-R(\theta_1)\geq -\frac{\sup_{x,y}\big( l(\theta_2;x,y)-l(\theta_1;x,y)\big)}{C}.
    $$
    Replacing $R(\theta_2)-R(\theta_1)$ with the lower bound above completes the proof, namely
    $$\sup_{x,y}\big( l(\theta_2;x,y)-l(\theta_1;x,y)\big) +C_R(R(\theta_2)-R(\theta_1))\geq (1-C_R/C)\cdot \sup_{x,y}\big( l(\theta_2;x,y)-l(\theta_1;x,y)\big).$$
\end{proof}

With Definition~\ref{defn:attainable} and the lemmas, we are ready to prove Theorem 4.1 (restating Theorem~\ref{theorem:convergence_main}, from Section~\ref{sec:converge_proof}):



\shortsection{Theorem \ref{theorem:convergence_main}} {\em
After at most $T$ steps, Algorithm~\ref{algorithm} will produce the poisoning set $\cD_p$ and the classifier trained on $\cD_c\cup\cD_p$ is $\epsilon$-close to $\theta_{p}$, with respect to loss-based distance, $D_{l, \cX, \cY}$, for
    \begin{align*}
        \epsilon =\frac{\alpha(T) + L(\theta_p; D_c) - L(\theta_c;D_c)}{T\cdot \gamma}
    \end{align*}
    where, $\gamma$ is a constant for a given $\theta_p$ and classification task, and $\alpha(T)$ is the regret of the online algorithm when the loss function used for training is convex. 
}



The goal of the adversary is to get $\epsilon$-close to $\theta_p$ (in terms of the loss-based distance) by injecting (potentially few) number of poisoned training data. The algorithm is in essence an online learning problem and we transform Algorithm~\ref{algorithm} into the form of standard online learning problem. Specifically, we adopt the {\it follow the leader} (FTL) framework to describe Algorithm~\ref{algorithm} in the language of standard online learning problem. We first describe the online learning setting considered in this paper and the notion of the regret. 


\begin{definition}
Let $\cL$ be a class of loss functions, $\Theta$ set of possible models, $A\colon (\Theta\times\cL)^* \to \Theta$ an online learner and $S \colon (\Theta \times \cL)^*\times \Theta \to \cL$ a strategy for picking loss functions in different rounds of online learning (adversarial environment in the context of online convex optimization). We use $\Regret(A, S, T)$ to denote the regret of $A$ against $S$, in $T$ rounds. Namely,

$$\Regret(A,S,T)=\sum_{j=0}^T l_j(\theta_j) - \min_{\theta \in \Theta} \sum_{j=0}^T l_j(\theta)$$

where 
\begin{align*}
    \theta_i=A\big((\theta_0,l_0),\dots, (\theta_{i-1}, l_{i-1})\big)
\text{~~~and~~~~}
l_i = S\big((\theta_0, l_0),\dots,(\theta_{i-1}, l_{i-1}), \theta_i\big).
\end{align*}
\end{definition}

With the online learning problem set up, we proceed to the main proof which first describes Algorithm~\ref{algorithm} in the FTL framework.
\begin{proof}[Proof of Theorem \ref{theorem:convergence_main}]

The FTL framework proceeds by solving all the functions incurred during the previous online optimization steps, namely, $A_\mathsf{FTL}((\theta_0,l_0),\dots, (\theta_{i},l_{i})) = \argmin_{\theta\in\Theta} \sum_{j=0}^i l_i(\theta)$.


Next, we describe how we design the $i$th loss function $l_i$ in each round of the online optimization. For the first choice, $A_\mathsf{FTL}$ chooses a random model $\theta_0\in\Theta$. In the first round (round 0), $S_{\theta_p}$ uses the clean training set $\cD_c$ and the loss is set as $$S_{\theta_p}(\theta_0)=l_0(\theta) = L(\theta; \cD_c) + N\cdot C_R\cdot R(\theta).$$ According to the FTL framework, $A_\mathsf{FTL}$ returns  model that minimizes the loss on the clean training set $\cD_c$ using the structural empirical risk minimization. For the subsequent iterations ($i\geq 1$), the loss functions is defined as, given the latest model $\theta_i$, $S_{\theta_p}$ first finds $(x^*_i,y^*_i)$ that maximizes the loss difference between $\theta_i$ and a target model $\theta_p$. Namely,
 $$(x^*_i,y^*_i) = \argmax_{(x,y)} l(\theta_i;x,y) - l(\theta_p;x,y)$$
 and then chooses the $i$th loss function as follows:
 $$S_{\theta_p}\big((\theta_0,l_0),\dots,(\theta_{i-1},l_{i-1}),\theta_i\big)=l_{i}(\theta)= l(\theta; x^*_i, y^*_i) + C_R\cdot R(\theta).$$


Now we will see how FTL framework behaves when working on these loss functions at different iterations. We use $D_p^i$ to denote the set $\set{(x^*_1,y^*_1),\dots,(x^*_i,y^*_i)}$. We have
 
 \begin{align*}
 \theta_{i}=A_\mathsf{FTL}((\theta_0,l_0),\dots,(\theta_{i-1},l_{i-1})) &= \argmin_{\theta\in\Theta} \sum_{j=0}^{i-1} l_j(\theta)\\
 &= \argmin_{\theta\in\Theta} L(\theta;\cD_c)+ N\cdot C_R\cdot R(\theta)\\
 &~~~~ +\sum_{j=1}^{i-1}  l(\theta;x^*_i,y^*_i)+ C_R\cdot R(\theta)\\
 &= \argmin_{\theta\in\Theta} L(\theta;\cD_c\cup \cD_p^{i-1})+ (N+i-1)\cdot C_R\cdot R(\theta)\\
 &=\argmin_{\theta\in\Theta}\frac{1}{|\cD_c\cup\cD_p^{i-1}|}L(\theta;\cD_c\cup \cD_p^{i-1})+C_R\cdot R(\theta)
 \end{align*}

This means that $A_\mathsf{FTL}$ algorithm, at each step, trains a new model over the combination of clean data and poison data so far ($i-1$ number of poisons). Now we want to see what is the translation of the $\Regret(A_\mathsf{FTL},S_{\theta_p},T)$. If we can prove an upper bound on regret, namely if we show $\Regret(A_\mathsf{FTL},S_{\theta_p},T)\leq \alpha(T)$ for some function $\alpha$, then we have
 
 \begin{align*}
 \sum_{j=0}^T l_j(\theta_j) - \sum_{j=0}^T l_j(\theta_p) \leq \sum_{j=0}^T l_j(\theta_j) - \min_{\theta \in \Theta} \sum_{j=0}^T l_j(\theta) \leq \alpha(T)
 \end{align*}
 
 which implies
 
 \begin{align*}
 \sum_{j=0}^T l_j(\theta_j) - \sum_{j=0}^T l_j(\theta_p) &= L(\theta_c; D_c) - L(\theta_p;D_c) + N\cdot C_R\cdot(R(\theta_c) -R(\theta_p))\\
 &~~~~ +   \sum_{j=1}^T l_j(\theta_j) - \sum_{j=1}^T l_j(\theta_p)\\
 &=L(\theta_c; D_c) - L(\theta_p;D_c)+ N\cdot C_R\cdot(R(\theta_c) -R(\theta_p))\\
 &~~~~ +   \sum_{j=1}^T \big[\max_{x,y}\big(l(\theta_j; x,y) -  l(\theta_p; x,y)\big)+ C_R\cdot(R(\theta_{j})-R(\theta_{p}))\big]\\
 &\leq \alpha(T)
 \end{align*}
 
 Therefore we have
 \begin{align*}
 \sum_{j=1}^T \big[\max_{x,y} \big(l(\theta_j; x,y) -  l(\theta_p; x,y)\big) + C_R\cdot(R(\theta_{j})-R(\theta_{p})) \big]
 &\leq \alpha(T) + L(\theta_p; D_c) - L(\theta_c;D_c)\\
 &+ N\cdot C_R\cdot (R(\theta_p) - R(\theta_c))
 \end{align*}


Based on Lemma~\ref{lemma:max_loss_dff-reg_diff}, we further have
 \begin{align*}
 \sum_{j=1}^T \gamma\cdot\big(\max_{x,y}l(\theta_j; x,y) -  l(\theta_p; x,y)\big) &\leq \alpha(T) + L(\theta_p; D_c) - L(\theta_c;D_c) \\
 &+ N\cdot C_R\cdot (R(\theta_p) - R(\theta_c))
 \end{align*}

Above inequality states that average of the maximum loss difference in all previous rounds is bounded from above. Therefore, we know that among the $T$ iterations, there exist an iteration $j^*\in [T]$ (with lowest maximum loss difference) such that the maximum loss difference of $\theta_{j^*}$ is $\epsilon$-close to $\theta_p$ with respect to the loss-based distance where 

\begin{align*}
    &\epsilon=\frac{\alpha(T) + L(\theta_p; D_c) - L(\theta_c;D_c)+ N\cdot C_R\cdot (R(\theta_p) -R(\theta_c))}{T\cdot\gamma}.
\end{align*}

\end{proof}

Theorem~\ref{theorem:convergence_main} characterizes the dependencies of $\epsilon$ on $\alpha(T)$ and the constant term $L(\theta_p; D_c) - L(\theta_c;D_c)+ N\cdot C_R\cdot (R(\theta_p) -R(\theta_c))$. To show the convergence of Algorithm~\ref{algorithm}, we need to ensure $\epsilon \rightarrow 0$ when $T\rightarrow +\infty$, which implies we need to show $\alpha(T) \leq O(\sqrt{T})$. Following remark (restating 
Remark~\ref{remark:convergence} in Section~\ref{sec:converge_proof}) and its proof shows the desired convergence. 

\shortsection{Remark~\ref{remark:convergence}} {\em 
Online learning algorithms with sublinear regret bound can be applied to show the convergence. Here, we adopt the regret analysis from~\citet{mcmahan2017survey}. Specifically, $\alpha(T)$ is in the order of $O(\log T)$) and we have $\epsilon\leq O(\frac{\log T}{T})$ when the loss function is Lipschitz continuous and the regularizer $R(\theta)$ is strongly convex, and $\epsilon \rightarrow 0$ when $T\rightarrow +\infty$. $\alpha(T)$ is also in the order of $O(\log T)$ when the loss function used for training is strongly convex and the regularizer is convex. 
}

Our FTL framework formulation can utilize the existing logarithmic regret bound of adaptive FTL algorithm when the objective functions are strongly convex with respect to some norm $\|\cdot\|$, as illustrated in Section 3.6 in~\citet{mcmahan2017survey}. For clarity in presentation, we first restate their related results below.

\begin{setting}[Setting 1 in~\citet{mcmahan2017survey}]
\label{setting}
Given a sequence of objective loss functions $f_1, f_2, ..., f_i$ and a sequence of incremental regularization functions $r_0, r_1, ..., r_i$ we consider an algorithm that selects the response point based on 
\begin{align*}
&\theta_1 = \argmin_{\theta\in \R^d}r_0(\theta)\\ &\theta_{i+1} = \argmin_{\theta\in \R^d}\sum_{j=1}^{i}f_{j}(\theta)+r_j(\theta) + r_0(\theta), \textup{for}~i=1,2,...
\end{align*}

We simplify the summation notation with $f_{1:i}(\theta) = \sum_{j=1}^{i}f_j(\theta)$. Assume that $r_i$ is a convex function and satisfy $r_i(\theta) \geq 0~\textup{for}~i \in \{0,1,2,...\}$, against a sequence of convex loss functions $f_i:\R^d \rightarrow R \cup \{\infty\}$. Further, letting $h_{0:i} = r_{0:i} + f_{1:i}$ we assume \textup{dom}~$h_{0:i}$ is non-empty. Recalling $\theta_i = \argmin_{\theta} h_{0:i-1}(\theta)$, we further assume $\partial f_i(\theta_i)$ is non-empty. We denote the dual norm of a norm $\|\cdot\|$ as $\|\cdot\|_{*}$.
\end{setting}

\begin{theorem}[Restatement of Theorem 1 in~\citet{mcmahan2017survey}]
\label{theorem:theorem_ftrl}
Consider Setting~\ref{setting}, and suppose the $r_i$ are chosen such that $r_{0:i} + f_{1:i+1}$ is 1-strongly-convex w.r.t. some norm $\|\cdot\|_{(i)\cdot}$. If we define the regret of the algorithm with respect to a selected point $\theta^*$ as 
$$
\Regret_{T}(\theta^*,f_i) \equiv \sum_{i=1}^{T}f_i(\theta_i) - \sum_{i=1}^{T}f_i(\theta^*). 
$$
Then, for any $\theta^{*}\in \R^{d}$ and for any $T > 0$, with $g_i\in \partial f_i(\theta_i)$, we have 

$$\Regret_{T}(\theta^*,f_i)\leq r_{0:T-1}(\theta^*)+\frac{1}{2}\|g_i\|^{2}_{(i-1),*}$$
\end{theorem}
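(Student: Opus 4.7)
The plan is to prove this FTRL regret bound via the classical two-part decomposition of the regret into a ``be-the-leader'' (BTL) term and a ``stability'' term. First I would rewrite
\begin{align*}
\Regret_T(\theta^*, f_i) = \underbrace{\sum_{i=1}^T \bigl[f_i(\theta_{i+1}) - f_i(\theta^*)\bigr]}_{\text{leader gap}} + \underbrace{\sum_{i=1}^T \bigl[f_i(\theta_i) - f_i(\theta_{i+1})\bigr]}_{\text{stability gap}},
\end{align*}
where $\theta_{T+1}$ is the natural extension of the iterate rule to one additional step. The first sum is the loss that an unrealistic one-step-lookahead learner would incur, while the second measures the per-round cost of not having seen the current loss yet.

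To bound the leader gap I would invoke the standard BTL lemma applied to the augmented sequence $\psi_0 := r_0$ and $\psi_j := f_j + r_j$ for $j \geq 1$, whose partial-sum minimizers coincide exactly with the FTRL iterates, $\theta_{j+1} = \argmin_\theta \psi_{0:j}(\theta)$. A short induction on the number of rounds gives $\sum_{j=0}^{T} \psi_j(\theta_{j+1}) \leq \sum_{j=0}^{T} \psi_j(\theta^*)$. Expanding $\psi_j$ and using nonnegativity of $r_0(\theta_1)$ and of each $r_j(\theta_{j+1})$ collapses the right-hand side to $r_{0:T-1}(\theta^*)$ after appropriate index bookkeeping.

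To bound the stability gap I would use the 1-strong-convexity hypothesis. After shifting the index, the assumption says $f_{1:i} + r_{0:i-1}$ is 1-strongly-convex w.r.t.\ $\|\cdot\|_{(i-1)}$, and adding the convex (and nonnegative) $r_i$ preserves strong convexity, so $R_{i+1} := f_{1:i} + r_{0:i}$ is 1-strongly-convex w.r.t.\ the same norm. Since $\theta_{i+1}$ is the unique minimizer of $R_{i+1}$, the strong-convexity inequality evaluated at $\theta_i$, together with $R_i(\theta_i) \leq R_i(\theta_{i+1})$ (from $\theta_i$ being the minimizer of $R_i$), gives $\tfrac{1}{2}\|\theta_i - \theta_{i+1}\|^2_{(i-1)} \leq (f_i + r_i)(\theta_i) - (f_i + r_i)(\theta_{i+1})$. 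Combining this with the subgradient inequality $f_i(\theta_i) - f_i(\theta_{i+1}) \leq \langle g_i, \theta_i - \theta_{i+1}\rangle$, H\"older's inequality in the dual pair $\|\cdot\|_{(i-1)}, \|\cdot\|_{(i-1),*}$, and Young's inequality $ab \leq \tfrac{a^2}{2} + \tfrac{b^2}{2}$ yields the per-round stability bound $f_i(\theta_i) - f_i(\theta_{i+1}) \leq \tfrac{1}{2}\|g_i\|^2_{(i-1),*}$.

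The main obstacle will be the careful bookkeeping of indices: the strong-convexity hypothesis is attached to $r_{0:i} + f_{1:i+1}$ while the function actually minimized at step $i+1$ is $r_{0:i} + f_{1:i}$, so one must shift by a step to produce the strong-convexity statement needed at round $i$. A secondary technicality is the residual $r_i(\theta_i) - r_i(\theta_{i+1})$ term that leaks out of the strong-convexity derivation; this is absorbed either by folding it into the BTL accounting (where $r_j(\theta_{j+1})$ terms already appear with the favorable sign) or by exploiting nonnegativity of $r_i$ on the favorable direction. Summing the per-round stability bound over $i = 1, \ldots, T$ and adding the leader-gap bound yields the claimed inequality, reading the stated right-hand side as $r_{0:T-1}(\theta^*) + \sum_{i=1}^T \tfrac{1}{2}\|g_i\|^2_{(i-1),*}$ since the summation sign on the second term appears to be missing in the statement.
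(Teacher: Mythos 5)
You should first note that the paper does not prove this statement at all: it is quoted verbatim (as a restatement) from Theorem 1 of McMahan's survey, so your attempt has to be measured against the standard proof there, which goes through the ``Strong FTRL Lemma'' rather than a black-box leader/stability split. Your regret decomposition, the BTL induction, and your observation that the statement is missing a summation sign are all fine, but the stability step contains a genuine error: the per-round bound $f_i(\theta_i)-f_i(\theta_{i+1})\leq \tfrac{1}{2}\|g_i\|^2_{(i-1),*}$ is \emph{false}, even in the pure FTL case with all $r_i=0$. Take $d=1$, $f_1(\theta)=\tfrac{1}{2}\theta^2$, $f_2(\theta)=\tfrac{1}{2}(\theta-10)^2$, so $\theta_2=0$ and $\theta_3=5$; here $f_{1:2}$ is 1-strongly convex w.r.t.\ $\|\cdot\|_{(1)}=\sqrt{2}\,|\cdot|$, $g_2=-10$, and $f_2(\theta_2)-f_2(\theta_3)=37.5>25=\tfrac{1}{2}\|g_2\|^2_{(1),*}$. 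The reason your ingredients cannot yield the constant $\tfrac{1}{2}$ is structural: from $\tfrac{1}{2}\delta^2\leq (f_i+r_i)(\theta_i)-(f_i+r_i)(\theta_{i+1})$ and $f_i(\theta_i)-f_i(\theta_{i+1})\leq\|g_i\|_{(i-1),*}\,\delta$ you can only conclude $\delta\leq 2\|g_i\|_{(i-1),*}$, hence a per-round bound $2\|g_i\|^2_{(i-1),*}$ (a factor 4 off), and Young's inequality is vacuous here because the $\tfrac{1}{2}\delta^2$ it produces is itself only bounded by the loss difference you are trying to control. With non-proximal $r_i\neq 0$ it is worse: the residual $r_i(\theta_i)-r_i(\theta_{i+1})$ can dominate, so $\delta$ is not controlled by $\|g_i\|$ at all and no per-round bound in $g_i$ alone holds (e.g.\ $r_0=\tfrac{1}{2}\theta^2$, $f_1(\theta)=\theta$, $r_1(\theta)=50(\theta+10)^2$ gives $f_1(\theta_1)-f_1(\theta_2)\approx 9.9$ against $\tfrac{1}{2}\|g_1\|^2_{(0),*}=\tfrac{1}{2}$), and neither of your proposed absorption mechanisms closes this. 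The theorem is nevertheless true because the correct per-round quantity is not $f_i(\theta_i)-f_i(\theta_{i+1})$ but $h_{0:i}(\theta_i)-h_{0:i}(\theta_{i+1})-r_i(\theta_i)$ with $h_{0:i}=r_{0:i}+f_{1:i}$: it differs from your term by the slack $h_{0:i-1}(\theta_i)-h_{0:i-1}(\theta_{i+1})\leq 0$ that the black-box BTL-plus-stability split throws away, and it is bounded by $\tfrac{1}{2}\|g_i\|^2_{(i-1),*}$ by applying to $\tilde h=r_{0:i-1}+f_{1:i}$ the lemma that a 1-strongly-convex $h$ with minimizer $x^\star$ satisfies $h(x)-h(x^\star)\leq\tfrac{1}{2}\|b\|^2_*$ for any $b\in\partial h(x)$, noting $g_i\in\partial\tilde h(\theta_i)$ because $0\in\partial(r_{0:i-1}+f_{1:i-1})(\theta_i)$. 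This ``keep the slack and linearize via the lemma'' step is the missing idea.

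A secondary, fixable gap is the comparator term. Your BTL bound yields $r_{0:T}(\theta^*)$, and since $r_T(\theta^*)\geq 0$ sits on the upper-bound side, nonnegativity cannot ``collapse'' it to $r_{0:T-1}(\theta^*)$; no index bookkeeping using the true iterate $\theta_{T+1}$ (whose defining objective contains $r_T$) will remove it. The standard fix is to observe that $\theta_1,\dots,\theta_T$, and hence the regret, do not depend on $r_T$, so one may run the entire analysis with $r_T$ replaced by $0$; equivalently, replace the final lookahead point by the auxiliary minimizer of $f_{1:T}+r_{0:T-1}$, whose 1-strong convexity w.r.t.\ $\|\cdot\|_{(T-1)}$ is exactly the hypothesis at index $T-1$. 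With that substitution your leader-gap argument gives $r_{0:T-1}(\theta^*)$ as claimed, but the stability step still needs to be replaced as described above.
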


\begin{corollary}[Formalization of FTL result in Section 3.6 in~\citet{mcmahan2017survey}]\label{corollary:ftl}
In the FTL framework (no individual regularizer is used in the optimization procedure), suppose each loss function $f_i$ is 1-strongly convex w.r.t. a norm $\|\cdot\|$, then we have
$$
\Regret_{T}(\theta^*,f_i)\leq \frac{1}{2}\sum_{i=1}^{T}\frac{1}{i}\|g_i\|^{2}_{*}\leq \frac{G^2}{2}(1+\log T) 
$$
with $\|g_i\|_{*} \leq G$.
\end{corollary}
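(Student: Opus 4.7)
The plan is to derive Corollary \ref{corollary:ftl} as a direct specialization of Theorem \ref{theorem:theorem_ftrl} by choosing the per-step regularizers to be identically zero. In the FTL framework the update is $\theta_{i+1} = \argmin_\theta f_{1:i}(\theta)$, which matches Setting \ref{setting} exactly when $r_j \equiv 0$ for all $j \geq 0$. This choice trivially satisfies $r_j \geq 0$ and $r_j$ convex, and makes the data-independent term $r_{0:T-1}(\theta^*)$ in the bound of Theorem \ref{theorem:theorem_ftrl} vanish, so the entire regret is absorbed into the gradient-norm sum $\tfrac{1}{2}\sum_{i=1}^T \|g_i\|^2_{(i-1),*}$.

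The key step is to identify the correct per-round norm $\|\cdot\|_{(i)}$ induced by the strong-convexity hypothesis. Since each $f_j$ is $1$-strongly-convex with respect to $\|\cdot\|$, the cumulative sum $r_{0:i} + f_{1:i+1} = f_{1:i+1}$ is $(i+1)$-strongly-convex with respect to the same norm. To rescale this to $1$-strong-convexity as required by the hypothesis of Theorem \ref{theorem:theorem_ftrl}, I would set $\|\cdot\|_{(i)} = \sqrt{i+1}\,\|\cdot\|$. A short duality computation then gives $\|\cdot\|_{(i),*} = \tfrac{1}{\sqrt{i+1}}\|\cdot\|_*$, and therefore $\|g_i\|^2_{(i-1),*} = \tfrac{1}{i}\|g_i\|^2_*$.

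Substituting into Theorem \ref{theorem:theorem_ftrl} yields
\begin{equation*}
\Regret_T(\theta^*, f_i) \leq \frac{1}{2}\sum_{i=1}^T \frac{1}{i}\|g_i\|_*^2,
\end{equation*}
which is the first inequality of the corollary. The second inequality follows immediately from $\|g_i\|_* \leq G$ together with the elementary harmonic-series bound $\sum_{i=1}^T 1/i \leq 1 + \log T$, obtained by comparing the sum with $\int_1^T dx/x$.

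The argument is essentially bookkeeping, and I do not expect any substantive obstacle. The only points that warrant verification are that the rescaled norms behave correctly under duality (standard) and that setting $r_j = 0$ is compatible with the non-emptiness of $\operatorname{dom} h_{0:i}$ in Setting \ref{setting}, which holds automatically since the $f_j$ are assumed to have nonempty domains and to admit subgradients at $\theta_i$.
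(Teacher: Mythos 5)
Your proposal is correct and follows essentially the same route as the paper's proof: set all per-step regularizers $r_j$ to zero, rescale the norm by the square root of the number of accumulated $1$-strongly-convex losses so that the cumulative objective is $1$-strongly convex in the rescaled norm, use duality to obtain the $\tfrac{1}{i}\|g_i\|_*^2$ terms, and finish with the harmonic-sum bound. The only (cosmetic) difference is your indexing convention $\|\cdot\|_{(i)}=\sqrt{i+1}\,\|\cdot\|$ versus the paper's $\sqrt{i}\,\|\cdot\|$, and yours actually tracks the hypothesis of Theorem~\ref{theorem:theorem_ftrl} (which concerns $r_{0:i}+f_{1:i+1}$ and evaluates $\|g_i\|_{(i-1),*}$) more faithfully.
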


\begin{proof} {\it The following proof is a restatement of the proof in Section 3.6 in~\citet{mcmahan2017survey}.} The proof follows from Theorem~\ref{theorem:theorem_ftrl}. Since we are considering the FTL framework, let $r_i(\theta) = 0$ for all $i$ and define $\|\theta\|_{(i)} = \sqrt{i}\|\theta\|$. Observe that $h_{0:i}$ (i.e., $f_{1:i}$) is 1-strongly convex with respect to $\|\theta\|_{(i)}$ (Lemma 3 in~\citet{mcmahan2017survey}), and we have $\|\theta\|_{(i),*} = \frac{1}{\sqrt{i}}\|\theta\|_{*}$. Then by applying Theorem~\ref{theorem:theorem_ftrl}, we have
    $$
\Regret_{T}(\theta^*,f_i)\leq \frac{1}{2}\sum_{i=1}^{T}\|g_i\|^{2}_{(i),*} = \frac{1}{2}\sum_{i=1}^{T}\frac{1}{i}\|g_i\|^{2}_{*}
$$

Based on the inequality of $\sum_{i=1}^{T}1/i\leq 1+\log T$ and if we further assume $\|g_i\|_{*}\leq G$, then we can have
$$
\frac{1}{2}\sum_{i=1}^{T}\frac{1}{i}\|g_i\|^{2}_{*}\leq \frac{G^2}{2}(1+\log T) 
$$
\end{proof}

\begin{proof}[Proof of Remark~\ref{remark:convergence}]
We will prove the logarithmic regret bound in Remark~\ref{remark:convergence} utilizing Corollary~\ref{corollary:ftl}. First of all, our online learning process fits into Setting~\ref{setting}. Specifically, we set $r_i(\theta)=0$ for all $i$. For $f_i(\theta)$, when $1\leq i\leq N$, we set $f_i(\theta)=\frac{1}{N}L(\theta;\cD_c)+C_R\cdot R(\theta)$ (evenly distributing the term $L(\theta;\cD_c) + N\cdot C_R \cdot R(\theta)$ across $N$ iterations) and when $i\geq N+1$, we set $f_i(\theta)=l_{i-N}(\theta)$. Details of $l_i$ can be referred from the proof of Theorem~\ref{theorem:convergence_main}. Therefore, $f_i$ is 1-strongly convex with respect to a norm $\|\cdot\|$ (the norm is determined by the regularizer $R(\theta)$ and $C_R$). Further, $l_{0:i}(\theta) = f_{1:N+i}(\theta)$. In addition, the assumption that \textup{dom}~$h_{0:i}$ is non-empty in Setting~\ref{setting} means when if we train a classifier on the poisoned data set, we can always return a model and hence the assumption is satisfied. The assumption of the existence of subgradient $\partial f_i(\theta_i)$ in Setting~\ref{setting} is also satisfied by the poisoning attack scenario. 

The logarithmic regret of $\Regret(A_\mathsf{FTL},S_{\theta_p},T)$ of our algorithm then follows from the result of $\Regret_{T}(\theta^*,f_i)$ in Corollary~\ref{corollary:ftl}. Specifically, $l_{0:i}(\theta) = f_{1:N+i}(\theta)$ is 1-strongly convex to norm $\|\cdot\|_{i} = \sqrt{N+i}\|\cdot\|$ and since we assume the loss function is $G$-Lipschitz, we have $\|g_i\|_{*}\leq G$. Therefore, we have the logarithmic regret bound as:

$$
\Regret(A_\mathsf{FTL},S_{\theta_p},T) \leq \alpha(T) = \frac{1}{2}\sum_{i=1}^{T}\frac{1}{i+N}\|g_i\|_{*}^{2}\leq \frac{1}{2}\sum_{i=1}^{T}\frac{1}{i}\|g_i\|_{*}^{2} \leq  \frac{G^2}{2}(1+\log T) \leq O(\log T).
$$

\end{proof}

We next provide the proof of the certified lower bound (restating Theorem~\ref{thm:lowerbounds} from Section~\ref{sec:lowerbound}): 

\paragraph{Theorem \ref{thm:lowerbounds}.}{\em
    Given a target classifier $\theta_{p}$, to reproduce $\theta_{p}$ by adding the poisoning set $\mathcal{D}_{p}$ into $\mathcal{D}_{c}$, the number of poisoning points $|\mathcal{D}_{p}|$ cannot be lower than
    }
    $$\sup_{\theta}z(\theta) = \frac{L(\theta_{p};\mathcal{D}_{c})-L(\theta;\mathcal{D}_{c}) + NC_{R}(R(\theta_p)-R(\theta))}{\sup_{x,y}\big(l(\theta;x,y)-l(\theta_{p};x,y)\big) + C_{R}(R(\theta)-R(\theta_{p}))}.$$

The main intuition behind the theorem is, when the the number of poisoning points added to the clean training set is lower than the certified lower bound, for structural empirical risk minimization problem (shown in~\eqref{eq:erm} in the main paper), then target classifier will always have higher loss than another classifier and hence cannot be achieved. 

\begin{proof}
    We first show that for all models $\theta$, we can derive a lower bound on the number of poison points required to get $\theta_p$. Then since these lower bounds all hold, we can take the maximum over all of them and get a valid lower bound. We first show that for any model $\theta$, the minimum number of poisoning points cannot be lower than
    $$z(\theta)=\frac{L(\theta_{p};\mathcal{D}_{c})-L(\theta;\mathcal{D}_{c}) + NC_{R}(R(\theta_p)-R(\theta))}{\sup_{x,y}\big(l(\theta;x,y)-l(\theta_{p};x,y)\big) + C_{R}(R(\theta)-R(\theta_{p}))}.$$
    
    
    Let us denote the point corresponding to the supremum of the loss difference between $\theta$ and $\theta_p$ as $(x^*,y^*)$~\footnote{In practice, the data space $\mathcal{X}$ is a closed convex set and hence, we can find $(x^*,y^*)$ using convex optimization. In other words, as we saw in experiments, calculating the lower bound is possible in practical scenarios.}. Namely, $l(\theta;x^*,y^*)-l(\theta_{p};x^*,y^*) = \sup_{x,y}\big(l(\theta;x,y)-l(\theta_{p};x,y)\big)$. Now suppose we can obtain $\theta_{p}$ with lower number of poisoning points $\underline{z} < z(\theta)$. Assume there is a poisoning set $\cD_p$ with size $\underline{z}$ such that when added to $\cD_c$ would result in $\theta_p$. We have 
    \begin{align*}
        \sup_{x,y} \big(l(\theta; x,y) - l(\theta_p; x,y)\big) \geq \frac{1}{|\cD_c\cup\cD_p|} L(\theta; \cD_c\cup \cD_p) - &\frac{1}{|\cD_c\cup \cD_p|} L(\theta_p; \cD_c\cup\cD_p)\\ 
        & > C_R\cdot\big(R(\theta_p) - R(\theta)\big),
    \end{align*}

 implying $\sup_{x,y} \big(l(\theta; x,y) - l(\theta_p; x,y)\big) + C_R\cdot(R(\theta)-R(\theta_p)) > 0$. Based on the assumption that $\underline{z} < z(\theta)$, and the fact that $\sup_{x,y} \big(l(\theta; x,y) - l(\theta_p; x,y)\big) + C_R\cdot(R(\theta)-R(\theta_p)) > 0$, we have  
    \begin{align*}
         \underline{z} \cdot\big(l(\theta;x^*,y^*)-l(\theta_{p};x^*,y^*)+C_{R}(R(\theta) - R(\theta_p))\big) & < z(\theta)\cdot\big(l(\theta;x^*,y^*)-l(\theta_{p};x^*,y^*) + C_{R}(R(\theta) - R(\theta_p))\big)\\ 
        & = L(\theta_{p};\mathcal{D}_{c})-L(\theta;\mathcal{D}_{c}) + NC_{R}(R(\theta_p) - R(\theta)).     
    \end{align*}
    where the equality is based on the definition of $z(\theta)$. On the other hand, by definition of $(x^*,y^*)$ for any $D_p$ of size $\underline{z}$, we have
    \begin{align*}
    & L(\theta;D_p) - L(\theta_p,D_p) + \underline{z}\cdot(C_R\cdot R(\theta) - C_R\cdot R(\theta_p))\leq \underline{z}\cdot\big(l(\theta;x^*,y^*)-l(\theta_{p};x^*,y^*)+C_{R}(R(\theta) - R(\theta_p))\big).    
    \end{align*}

    The above two inequalities imply that for any set $D_p$ with size $\underline{z}$ we have
    $$\frac{1}{|\cD_c\cup\cD_p|}L(\theta;\cD_c\cup\cD_p)+ C_R\cdot R(\theta)<\frac{1}{|\cD_c\cup\cD_p|}L(\theta_p;\cD_c\cup\cD_p) + C_R\cdot R(\theta_p).$$
    which indicates that adding $\cD_p$ poisoning points into the training set $\cD_c$, the model $\theta$ has lower loss compared to $\theta_{p}$, which is a contradiction to the assumption that $\theta_p$ has lowest loss on $\cD_c\cup\cD_p$ and can be achieved. Now, since $\theta_p$ needs to have lower loss on $\cD_c\cup\cD_p$ compared to any classifier $\theta\in\Theta$, the best lower bound is the supremum over all models in the model space $\Theta$.
    \label{proof:min_point}
\end{proof}

\shortsection{Corollary~\ref{cor:eps_lower_bound}} {\em 
If we further assume bi-directional closeness in the loss-based distance, we can also derive the lower bound on number of poisoning points needed to induce models that are $\epsilon$-close to the target model. More precisely, if $\theta_1$ being $\epsilon$-close to $\theta_2$ implies that $\theta_2$ is also $k\cdot \epsilon$ close to $\theta_1$, then we have, 
$$
\sup_{\theta}z^{'}(\theta) = \frac{L(\theta_{p};\mathcal{D}_{c})-L(\theta;\mathcal{D}_{c}) - NC_{R}\cdot R^{*} -Nk\epsilon}{\sup_{x,y}\big(l(\theta;x,y)-l(\theta_{p};x,y)\big) + C_{R}\cdot R^{*} + k\epsilon}.
$$
where $R^*$ is an upper bound on the nonnegative regularizer $R(\theta)$.
}

\begin{proof}[Proof of Corollary 4.2.1]

The lower bound for all $\epsilon$-close models to the target classifier is given exactly as follows:
    $$\inf_{\|\theta^{'}-\theta_p\|_{\cD_{l,\cX,\cY}}\leq \epsilon}\sup_{\theta}\Bigg(z(\theta,\theta') = \frac{L(\theta^{'};\mathcal{D}_{c})-L(\theta;\mathcal{D}_{c}) + NC_{R}(R(\theta^{'})-R(\theta))}{\sup_{x,y}\big(l(\theta;x,y)-l(\theta^{'};x,y)\big) + C_{R}(R(\theta)-R(\theta^{'}))}\Bigg),$$

where $\inf_{\|\theta^{'}-\theta_p\|_{\cD_{l,\cX,\cY}}\leq \epsilon}$ denotes $\theta^{'}$ is $\epsilon$-close to $\theta_p$ in the loss-based distance. However, the formulation above is a min-max optimization problem and hard to analytically compute the lower bound (by plugging the lower bound formula into Algorithm~\ref{algorithm}. Therefore, we need to make several relaxations such that the lower bound is computable. For any model $\theta^{'}$ that is $\epsilon$-close to $\theta_p$, based on the bi-directional assumption, then $\theta_p$ is $k\epsilon$-close to $\theta^{'}$. Therefore we have, 
$$
L(\theta^{'};\cD_c)-L(\theta;\cD_c) = L(\theta^{'};\cD_c) - L(\theta_{p};\cD_c) + L(\theta_{p};\cD_c) - L(\theta;\cD_c) \geq -Nk\epsilon + L(\theta_{p};\cD_c) - L(\theta;\cD_c)
$$
and
\begin{align*}
    \sup_{x,y}\big(l(\theta;x,y) - l(\theta^{'},x,y)\big)  &=\sup_{x,y}\big(l(\theta;x,y) - l(\theta_{p},x,y)\big) + \sup_{x,y}\big(l(\theta_{p},x,y) - l(\theta^{'};x,y)\big)\\ 
    &\leq \sup_{x,y}\big(l(\theta;x,y) - l(\theta_{p},x,y) + k\epsilon
\end{align*}
and the inequalities are all based on the definition of $\theta_p$ being $k\epsilon$-close to $\theta^{'}.$

Plugging the above inequalities into the formula of $\sup_{\theta,\theta'}$ for model $\theta^{'}$, and with the assumption that $0 \leq R(\theta)\leq R^{*}, \forall \theta\in\Theta$, we immediately have 

\begin{align*}\sup_{\theta}z(\theta,\theta') &\geq \sup_{\theta}\frac{L(\theta_{p};\mathcal{D}_{c})-L(\theta;\mathcal{D}_{c}) - Nk\epsilon + NC_{R}(R(\theta^{'})-R(\theta))}{\sup_{x,y}\big(l(\theta;x,y)-l(\theta_{p};x,y)\big) -k\epsilon + C_{R}(R(\theta)-R(\theta^{'}))}\\
&\geq
 \sup_{\theta}\Bigg(\frac{L(\theta_{p};\mathcal{D}_{c})-L(\theta;\mathcal{D}_{c}) - Nk\epsilon - NC_{R}\cdot R^*}{\sup_{x,y}\big(l(\theta;x,y)-l(\theta_{p};x,y)\big) -k\epsilon + C_{R}\cdot R^*}=z^{'}(\theta)\Bigg).
\end{align*}

Since the inequality holds for any $\theta^{'}$, we have 
\begin{align*}\inf_{\|\theta^{'}-\theta_p\|_{\cD_{l,\cX,\cY}}\leq \epsilon}\sup_{\theta}z(\theta,\theta^{'}) \geq \sup_{\theta}z^{'}(\theta)
\end{align*}
and hence $z^{'}(\theta)$ is a valid lower bound. 


\end{proof}

\begin{remark}[Improving Results in Corollary~\ref{cor:eps_lower_bound}]
Assuming $0\leq R(\theta) \leq R^{*}$ is not a strong assumption and actually can be satisfied by many common convex models. For example, for SVM model with $\ell_2$-regularizer (in fact, applies to any regularizer $R(\theta)$ with $R(\mathbf{0})=0$), we have $R(\theta)\leq \frac{1}{C_R}$ and hence $R^{*}\leq \frac{1}{C_R}$. Moreover, we can further tighten the lower bound by better bounding the term $R(\theta')-R(\theta)$. Specifically, $R(\theta')-R(\theta) = R(\theta') - R(\theta_p) + R(\theta_p) - R(\theta)$ and we only need to have a tighter upper and lower bounds on $R(\theta') - R(\theta_p)$ utilizing some special properties of the loss functions. For the constant $k$ in the bi-directional closeness, we can also compute its value for some specific loss functions. For example, for Hinge loss, we can compute the value based on Corollary~\ref{cor:bidirection-close} in Appendix~\ref{sec:closeness}.
\end{remark}

\section{Relating closeness of loss-based distance to closeness of parameters}\label{sec:closeness}
\newcommand{\loss}[0]{\mathsf{loss}}
\newcommand{\inner}[2]{\langle#1,#2\rangle}
In theorem below, we show how one can relate the notion of $\epsilon$-closeness in Definition~\ref{def:eps_close} in the main paper to closeness of parameters in the specific setting of hinge loss. We use this just as an example to show that our notion of $\epsilon$-closeness can be tightly related to the closeness of the models.
\begin{theorem} \label{thm:hinge_to_norm}
Consider the hinge loss function $l(\theta; x,y) = \max(1-y\cdot\inner{x}{\theta},0)$ for $\theta\in \R^d$ and $x\in \R^d$ and $y\in\set{-1,+1}$. For $\theta, \theta' \in \R^d$ such that $\|\theta\|_1\leq r$ and $\|\theta'\|_1\leq r$, if $\theta$ is $\epsilon$-close to $\theta'$ in the loss-based distance, then,
$\|\theta-\theta'\|_1\leq r\cdot \epsilon$. 
\end{theorem}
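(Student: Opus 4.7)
The plan is to exhibit a single adversarial example $(x,y)\in\R^d\times\{-1,+1\}$ whose hinge-loss gap $l(\theta;x,y)-l(\theta';x,y)$ equals (up to a factor of $1/r$) the full $\ell_1$-difference $\|\theta-\theta'\|_1$. Once such an $(x,y)$ is in hand, the hypothesis that $\theta$ is $\epsilon$-close to $\theta'$ (Definition~\ref{def:eps_close}) immediately upper bounds this gap by $\epsilon$, and the claim follows by multiplying through by $r$. The $r=0$ case is vacuous since then $\theta=\theta'=0$, so I may assume $r>0$.

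Concretely, I would pick $y=-1$ and define $x_i = \mathrm{sign}(\theta_i-\theta'_i)/r$ for each coordinate $i$ (with the convention $\mathrm{sign}(0):=0$). By construction $\|x\|_\infty \leq 1/r$ and $\langle x,\theta-\theta'\rangle = \tfrac{1}{r}\sum_i |\theta_i-\theta'_i| = \tfrac{1}{r}\|\theta-\theta'\|_1$, so $x$ aligns perfectly with the sign pattern of the coordinate-wise difference, witnessing the entire $\ell_1$-gap in a single inner product.

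The key technical step is to check that both hinges land in their linear (positive) regime at $(x,-1)$, so that the hinge difference collapses to the inner-product difference. Using H\"older's inequality together with the assumptions $\|\theta\|_1,\|\theta'\|_1\leq r$, we get $|\langle x,\theta\rangle|\leq \|x\|_\infty\|\theta\|_1 \leq \tfrac{1}{r}\cdot r = 1$ and likewise for $\theta'$. Hence $1+\langle x,\theta\rangle\geq 0$ and $1+\langle x,\theta'\rangle\geq 0$, so $l(\theta;x,-1)=1+\langle x,\theta\rangle$ and $l(\theta';x,-1)=1+\langle x,\theta'\rangle$. Subtracting gives $l(\theta;x,-1)-l(\theta';x,-1)= \tfrac{1}{r}\|\theta-\theta'\|_1$, and $\epsilon$-closeness yields $\|\theta-\theta'\|_1\leq r\epsilon$.

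The main subtlety is the choice of normalization: scaling $x$ by $1/r$ is precisely calibrated to balance the $\ell_1$-bound on $\theta,\theta'$ against the hinge kink at zero. If one instead used $\|x\|_\infty = 1$, the inner products could exceed $1$ in magnitude and the hinge would saturate, obscuring the $\ell_1$-structure and forcing either extra assumptions (e.g., $r\leq 1$) or a weaker bound. This normalization trick is really the only creative step in the argument; once $x$ is chosen appropriately, the rest is a direct application of H\"older's inequality and the definition of loss-based distance.
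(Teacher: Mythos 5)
Your proof is correct and follows essentially the same route as the paper: construct a witness point with $\|x\|_\infty = 1/r$ whose coordinates align with the sign pattern of $\theta-\theta'$, take $y=-1$, use H\"older's inequality (via the $\ell_1$ bounds on $\theta,\theta'$) to show both hinges are in their linear regime, and conclude from the definition of $\epsilon$-closeness. Your sign convention for $x_i$ is in fact the consistent one (the paper's stated $x^*$ has the signs reversed relative to what its own computation requires), and your handling of the $r=0$ and $\theta_i=\theta'_i$ edge cases is a minor tidy addition.
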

\begin{remark}
\label{rm:norm_upper}
In Theorem \ref{thm:hinge_to_norm} above with $\ell_2$-regularizer, an upper bound on the $\ell_1$-norm of $\theta$ and $\theta'$ is $\sqrt{d/C_R}$. however, the models that we care about in practice usually have smaller norms.
 \end{remark}
 
Remark~\ref{rm:norm_upper} can be obtained by plugging $\mathbf{0}\in\R^d$ and compare the resulting (regularized) optimization loss to the model $\theta^{*}$ that minimizes the model loss.
\begin{proof}[Proof of Theorem~\ref{thm:hinge_to_norm}]
We construct a point $x^*$ as follows:
\begin{align*}
    x^*_i = \begin{cases}
    -\frac{1}{r}, \text{ if } \theta_i > \theta_i', i\in[d]\\
    +\frac{1}{r} \text{ if } \theta_i \leq \theta_i', i\in[d]
    \end{cases}
\end{align*}
    Then we have 
    \begin{equation}\label{eq:norm_to_hinge}
        \inner{\theta-\theta'}{x^*} = \frac{1}{r}\cdot\|\theta-\theta'\|_1
    \end{equation}
    Since $\|\theta\|_1\leq r$ we have
    \begin{equation}\label{eq:0001}
        \inner{x^*}{\theta} \geq -1
    \end{equation}
    and similarly since $\|\theta'\|_1\leq r$
    we have 
    \begin{equation}\label{eq:0002}
        \inner{x^*}{\theta'} \geq -1.
    \end{equation}
    Therefore by Inequalities~\eqref{eq:0001} and~\eqref{eq:0002} we have
    \begin{align*}l(\theta; x^*,-1) - l(\theta';x^*,-1) &= \max(1 + \inner{x^*}{\theta},0) - \max(1 + \inner{x^*}{\theta'},0) = \inner{\theta-\theta'}{x^*} 
    \end{align*}
    which by ~\eqref{eq:norm_to_hinge} implies 
        \begin{equation}\label{eq:0003}
        l(\theta; x^*,-1) - l(\theta';x^*,-1) = \frac{1}{r}\cdot \|\theta-\theta'\|_1.
    \end{equation}
    Now since we know that, $\forall x\in \R^d$, the loss difference between $\theta$ and $\theta'$ is bounded by $\epsilon$, the bound should also hold for the point $(x^*, -1)$, meaning that
    $$\frac{1}{r}\cdot\|\theta-\theta'\|_1 \leq \epsilon.$$
    which completes the proof.
\end{proof}

\begin{theorem} \label{thm:norm_to_hinge}
Consider the hinge loss function $l(\theta; x,y) = \max(1-y\cdot\inner{x}{\theta},0)$ for $\theta\in \R^d$ and $x\in \R^d$ and $y\in\set{-1,+1}$. For $\cX = \set{x \in \R^d\colon \|x\|_1\leq q}$ and $\cY=\set{-1,+1}$,
For any two models $\theta, \theta^{'}$ if $\|\theta-\theta^{'}\|_1\leq \epsilon$, then $\theta$ is $q\cdot\epsilon$-close to $\theta^{'}$ in the loss-based distance. Namely, $$D_{\ell, \cX, \cY}(\theta,\theta') \leq q\cdot \epsilon.$$
\end{theorem}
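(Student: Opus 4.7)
The plan is to bound the loss-based distance pointwise and then take the supremum. Fix an arbitrary $(x, y) \in \cX \times \cY$; it suffices to show $l(\theta; x,y) - l(\theta'; x,y) \leq q\cdot\epsilon$.

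First, I would exploit that the hinge loss is a 1-Lipschitz function of its margin. Writing $l(\theta; x,y) = \phi(y\inner{x}{\theta})$ with $\phi(u) = \max(1-u, 0)$, the map $\phi$ is 1-Lipschitz, so
\begin{align*}
  \bigl|l(\theta; x,y) - l(\theta'; x,y)\bigr|
  &= \bigl|\phi(y\inner{x}{\theta}) - \phi(y\inner{x}{\theta'})\bigr| \\
  &\leq \bigl|y\inner{x}{\theta} - y\inner{x}{\theta'}\bigr|
  = \bigl|\inner{x}{\theta - \theta'}\bigr|,
\end{align*}
where the final equality uses $|y| = 1$.

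Next, I would apply H\"older's inequality in the form $|\inner{a}{b}| \leq \|a\|_1 \|b\|_\infty$ to control the inner product by the norms we have hypotheses on. This gives
\[
  \bigl|\inner{x}{\theta-\theta'}\bigr|
  \leq \|x\|_1 \cdot \|\theta - \theta'\|_\infty
  \leq q \cdot \|\theta - \theta'\|_\infty
  \leq q \cdot \|\theta-\theta'\|_1
  \leq q\cdot\epsilon,
\]
using $\|x\|_1 \leq q$ by the definition of $\cX$, the elementary inequality $\|\cdot\|_\infty \leq \|\cdot\|_1$ on $\R^d$, and the hypothesis $\|\theta-\theta'\|_1 \leq \epsilon$. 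Taking the maximum over $(x,y) \in \cX \times \cY$ then yields $D_{\ell,\cX,\cY}(\theta,\theta') \leq q\cdot\epsilon$, as required.

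There is no real obstacle here; the only subtle point is choosing the right pairing of the dual norms in H\"older's inequality. A naive attempt using the pairing $|\inner{x}{z}| \leq \|x\|_\infty\|z\|_1$ would require an $\ell_\infty$ bound on inputs, which is weaker than (and not implied by) the $\ell_1$ bound provided by the hypothesis $\|x\|_1 \leq q$; the pairing above is the one that marries the two given norm bounds cleanly. Combined with the trivial monotonicity $\|\cdot\|_\infty \leq \|\cdot\|_1$, this immediately delivers the claimed $q\cdot\epsilon$ factor and establishes an essentially tight converse to Theorem~\ref{thm:hinge_to_norm}.
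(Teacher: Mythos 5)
Your proof is correct and follows essentially the same route as the paper's: both reduce the pointwise loss difference to the inner product $|\langle x,\theta-\theta'\rangle|$ (you via the $1$-Lipschitzness of $u\mapsto\max(1-u,0)$, the paper via the equivalent ``triangle inequality for maxima'') and then finish with a H\"older-type bound yielding $q\cdot\epsilon$. One correction to your closing remark: the alternative pairing $|\langle x,z\rangle|\leq\|x\|_\infty\|z\|_1$ would also work, since $\|x\|_1\leq q$ \emph{does} imply $\|x\|_\infty\leq q$ --- indeed the paper's coordinate-wise computation is precisely that pairing --- so the two H\"older pairings are interchangeable here and your claim that the $\ell_\infty$ bound on inputs is ``not implied'' by the $\ell_1$ bound is mistaken, though this does not affect the validity of your argument.
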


\begin{proof}
    For any given $\theta$ and $\theta^{'}$, by triangle inequality for maximum, we have $$
    l(\theta;x,y) - l(\theta^{'},x,y) = \max(1-y\cdot\inner{x}{\theta},0) - \max(1-y\cdot\inner{x}{\theta^{'}},0) \leq \max(0,\inner{yx}{\theta^{'}-\theta}).$$ Therefore, we have  $$\max_{(x,y) \in \cX\times \cY} l(\theta; x, y) - l(\theta^{'}; x,y) \leq \max_{(x,y) \in \cX\times \cY} \max(0,\inner{yx}{\theta^{'}-\theta}).$$ Our goal is then to obtain an upper bound of $O(\epsilon)$ for $\max_{(x,y) \in \cX\times \cY} \inner{yx}{\theta^{'}-\theta}$ when $\|\theta-\theta^{'}\|_1\leq \epsilon.$ To maximize $\inner{yx}{\theta^{'}-\theta}$ by choosing $x$ and $y$, we only need to ensure that $\sign yx_i = \sign \theta_i, i\in [d]$. Therefore, based on the assumption that $\frac{1}{q}\|x\|\leq 1$ (i.e.,$ \frac{1}{q}|x_i|\leq 1, i\in [d])$ we have 
    $$
    \max_{(x,y) \in \cX\times \cY}\frac{1}{q}\inner{yx}{\theta^{'}-\theta} =\sum_{i=1}^{d}\frac{1}{q}|x|_{i}|\theta_i-\theta^{'}_{i}|\leq \sum_{i=1}^{d}|\theta_i-\theta^{'}_{i}| = \|\theta - \theta^{'}\|_1 \leq \epsilon,
    $$
    which concludes the proof. 
\end{proof}

\begin{corollary}
\label{cor:bidirection-close}
    For Hinge loss, with Theorem~\ref{thm:hinge_to_norm} and Theorem~\ref{thm:norm_to_hinge}, if $\theta$ is $\epsilon$-close to $\theta^{'}$, then $\theta^{'}$ is $r\cdot q\cdot\epsilon$-close to $\theta$.   
\end{corollary}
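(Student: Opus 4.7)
The plan is to prove Corollary \ref{cor:bidirection-close} as a direct two-step composition of Theorem \ref{thm:hinge_to_norm} and Theorem \ref{thm:norm_to_hinge}. The statement has the flavor of a ``round trip'' through parameter space: loss-based closeness implies $\ell_1$-closeness, and $\ell_1$-closeness implies loss-based closeness in the other direction. Since both of the supporting theorems are already stated in the excerpt, the corollary should follow in essentially one line once the hypotheses are lined up correctly.

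First, I would invoke the hypothesis that $\theta$ is $\epsilon$-close to $\theta'$ under $D_{l,\mathcal{X},\mathcal{Y}}$, and appeal to Theorem \ref{thm:hinge_to_norm} (which requires $\|\theta\|_1, \|\theta'\|_1 \leq r$) to obtain the parameter-space bound $\|\theta - \theta'\|_1 \leq r \cdot \epsilon$. This is the direction that uses the specific adversarial point $x^*$ constructed in the proof of Theorem \ref{thm:hinge_to_norm}; here I just consume it as a black box.

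Next, I would feed this $\ell_1$-bound into Theorem \ref{thm:norm_to_hinge}, taking the roles of $\theta$ and $\theta'$ in that theorem to match what we need: from $\|\theta' - \theta\|_1 \leq r \epsilon$ and $\mathcal{X} = \{x \in \mathbb{R}^d : \|x\|_1 \leq q\}$, the theorem yields $D_{l,\mathcal{X},\mathcal{Y}}(\theta', \theta) \leq q \cdot (r \epsilon) = r q \epsilon$, which is exactly the claim. Note that Theorem \ref{thm:norm_to_hinge} is symmetric in the roles of the two models (it only uses $\|\theta - \theta'\|_1$), so no additional care is needed in swapping arguments.

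The only real thing to watch is hypothesis bookkeeping rather than any genuine mathematical obstacle: the corollary's phrasing is terse and does not restate the assumptions $\|\theta\|_1, \|\theta'\|_1 \leq r$ and $\mathcal{X} \subseteq \{x : \|x\|_1 \leq q\}$ that the two supporting theorems require. I would either inherit these hypotheses implicitly from the statements of Theorem \ref{thm:hinge_to_norm} and Theorem \ref{thm:norm_to_hinge} or add one sentence pointing to Remark \ref{rm:norm_upper} to justify that the parameter-norm bound $r$ is finite in the regularized setting used throughout the paper. With that bookkeeping in place, the proof is a two-sentence chain and nothing else needs to be done.
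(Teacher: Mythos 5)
Your proposal is correct and is exactly the argument the paper intends: the corollary is stated as an immediate composition of Theorem~\ref{thm:hinge_to_norm} (loss-based $\epsilon$-closeness plus $\|\theta\|_1,\|\theta'\|_1\le r$ gives $\|\theta-\theta'\|_1\le r\epsilon$) with Theorem~\ref{thm:norm_to_hinge} (whose hypothesis depends only on the symmetric quantity $\|\theta-\theta'\|_1$, so it yields $q\cdot r\epsilon$-closeness in the reverse direction), and the paper supplies no further proof beyond this chaining. Your note that the only work is inheriting the hypotheses $\|\theta\|_1,\|\theta'\|_1\le r$ and $\mathcal{X}\subseteq\{x:\|x\|_1\le q\}$ from the two theorems matches the paper's (implicit) treatment.
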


\section{Instantiating Theorem~\ref{theorem:convergence_main} for the Case of SVM}\label{sec:compute_const}
Here we show how to instantiate Theorem \ref{theorem:convergence_main} for SVM with exact constants instead of the asymptotic notations. We need to calculate the constant $\gamma$ to get the exact constant. Imagine the feature domain is $\R^d$. Now we calculate the constant $C$ as follows. Let $i^*_\theta = \argmin_{i\in[d]}|\theta[i]/\theta_p[i]|$ and $\alpha_\theta=|\theta[i^*_\theta]/\theta_p[i^*_\theta]|$. Let $x_\theta^*\in \R^d$ be a point where is equal to 0 everywhere and is equal to $1/\theta_p[i_\theta^*]$ on the $i^*$ coordinate. We have, \begin{equation}\label{ineq:sup_lower} l(\theta,x_\theta^*,+1) - l(\theta_p,x_\theta^*,+1)=l(\theta,x_\theta^*,+1)\geq(1-\alpha_\theta).\end{equation}
Now we can calculate $C$ as follows
\begin{align*}
    C&=\left(\inf_{\substack{\theta\in F\\
\text{s.t. } R(\theta_p) > R(\theta) }}\sup_{x,y} (l(\theta;x,y)-l(\theta_p;x,y))/(R(\theta_p) - R(\theta))\right)\\
 &\geq \left(\inf_{\substack{\theta\in F\\
\text{s.t. } R(\theta_p) > R(\theta) }} (l(\theta,x_\theta^*,+1) - l(\theta_p,x_\theta^*,+1))/(R(\theta_p) - R(\theta))\right)\\
\text{(By Inequality \ref{ineq:sup_lower})~~~~}
  &\geq \inf_{\substack{\theta\in F\\
\text{s.t. } R(\theta_p) > R(\theta) }}\sup_{x,y} \frac{1-\alpha_\theta}{R(\theta_p)-R(\theta)}\\
\text{(By definition of $\alpha_\theta$)~~~~}
 &\geq \inf_{\substack{\theta\in F\\ \text{s.t. } R(\theta_p) > R(\theta) }} \frac{1-\alpha_\theta}{R(\theta_p)(1-\alpha_\theta^2)}\\
 &\geq \inf_{\substack{\theta\in F\\ \text{s.t. } R(\theta_p) > R(\theta) }} \frac{1-\alpha_\theta}{R(\theta_p)(1-\alpha_\theta^2)}\\
&\geq \frac{1}{2R(\theta_p)}
\end{align*}
Therefore $\gamma\geq 1-2\cdot C_R\cdot R(\theta_p)$. On the other hand, we can also calculate $\alpha(T)$ based on the exact form given in the proof of Theorem~\ref{theorem:convergence_main}. 




\section{Additional Experimental Results}
\label{sec:additional_exps}
In this section, we provide more results in addition to the results in the main paper. In Section~\ref{sec:addi_svm_results}, we show the additional results on SVM model and more results on logistic regression model are given in Section~\ref{sec:addi_lr_results}. In Section~\ref{ssec:bettertarget}, we show results on improved target model generation process, which helps to validate the implications we made (below Theorem~\ref{theorem:convergence_main}) in the main paper.  

\subsection{More Results on SVM model}
\label{sec:addi_svm_results}
In this section, we first compare our attack to the KKT attack regarding the convergence to the target model. Then compare their attack success in achieving the attacker goals. Last, we provide the lower bound for inducing the model that are induced by our attack and the KKT attack. We use the exact same setup in Section~\ref{sec:experiments} in the main paper regarding the datasets and related models.

    \begin{figure*}[!tb]
        \centering
        \begin{subfigure}[b]{0.45\textwidth} 
            \centering 
            \includegraphics[width=\textwidth]{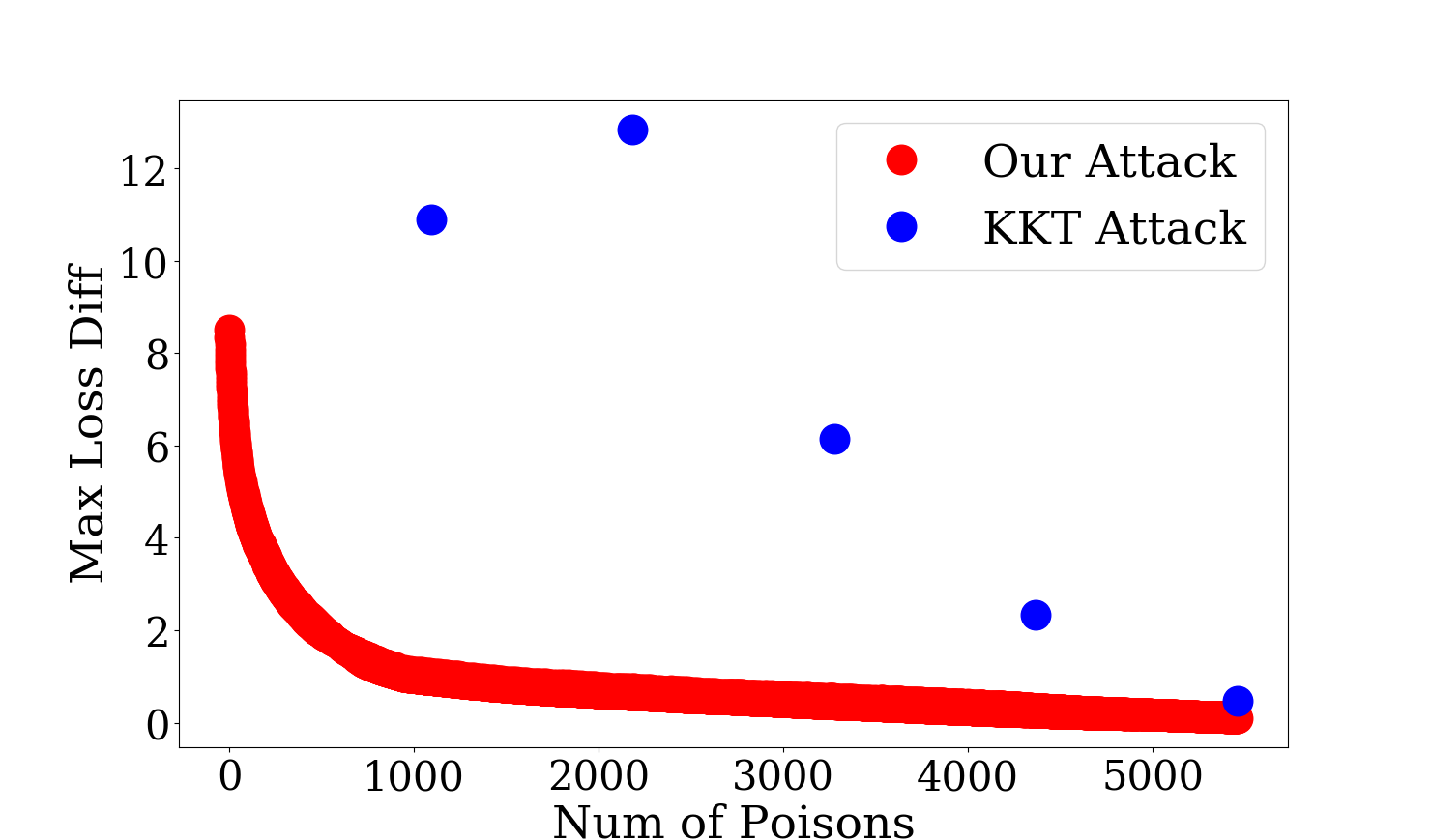}
            \caption[]%
            {Max Loss Difference}    
            \label{fig:mnist_svm_error_01_max_loss_diff}
        \end{subfigure}
        \begin{subfigure}[b]{0.45\textwidth}  
            \centering 
            \includegraphics[width=\textwidth]{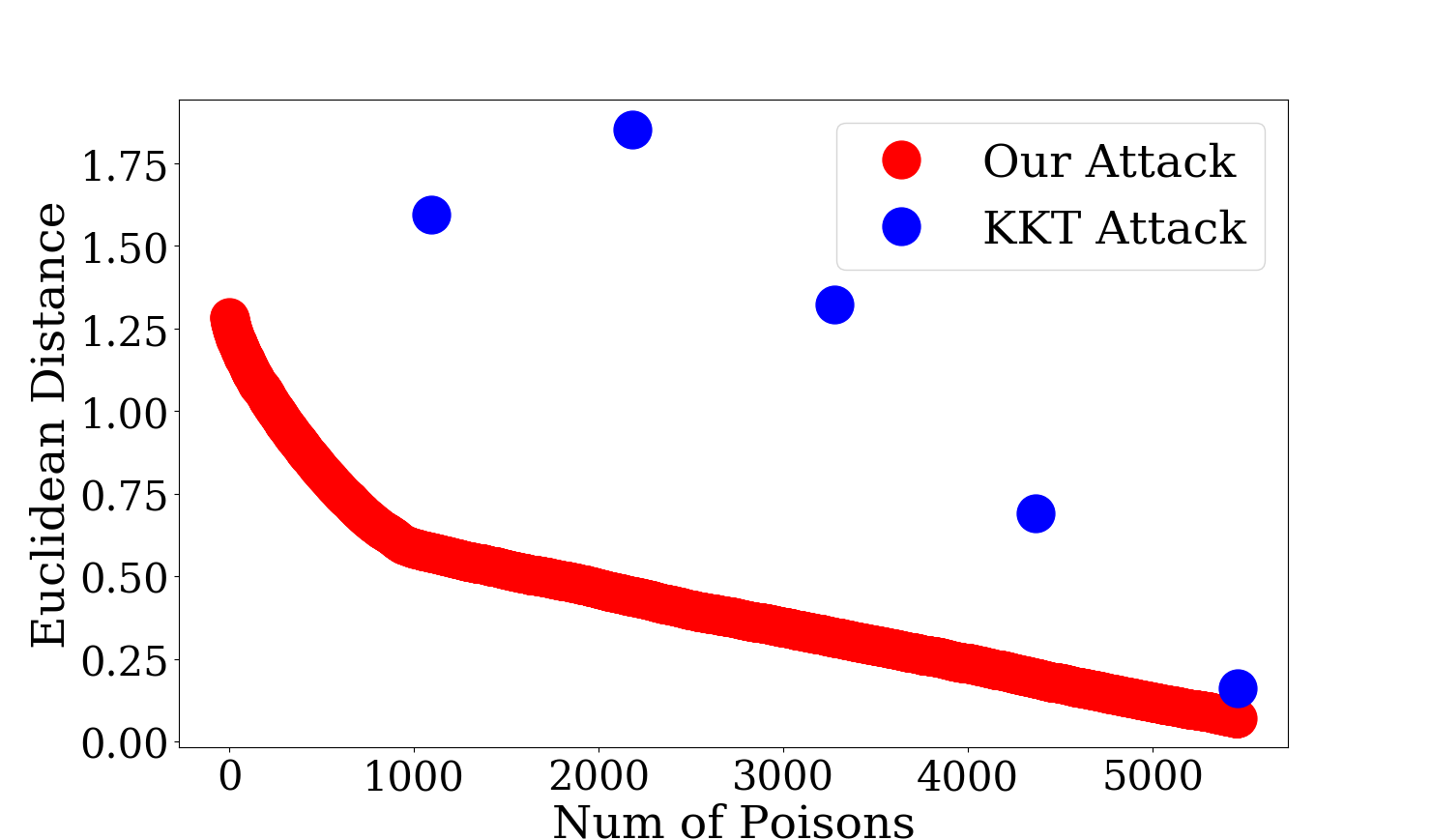}
            \caption[]%
            {Euclidean Distance}    
            \label{fig:mnist_svm_error_01_norm_diff}
        \end{subfigure}

        \caption[]
        {SVM on \MNIST\ dataset: attack convergence (results shown are for the target classifier of error rate 10\%). The maximum number of poisons is set using the $0.1$-close threshold to target classifier} 
        \label{fig:mnist_svm_error_01_convergence}
    \end{figure*}

    \begin{figure*}[!tb]
        \centering
        \begin{subfigure}[b]{0.45\textwidth}  
            \centering 
            \includegraphics[width=\textwidth]{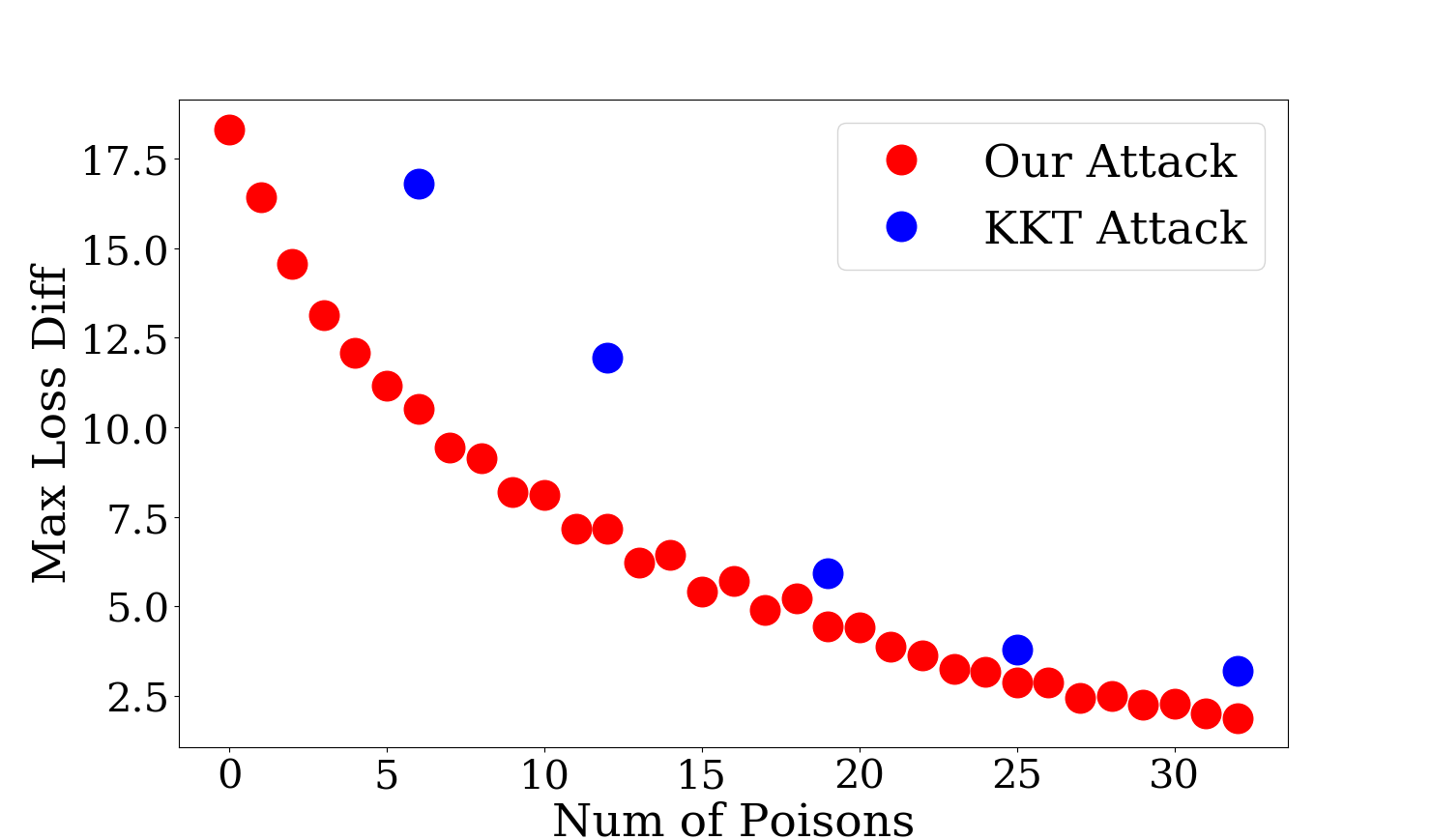}
            \caption[]%
            {Max Loss Difference}    
            \label{fig:dogfish_svm_error_01_max_loss_diff}
        \end{subfigure}
        \begin{subfigure}[b]{0.45\textwidth} 
            \centering 
            \includegraphics[width=\textwidth]{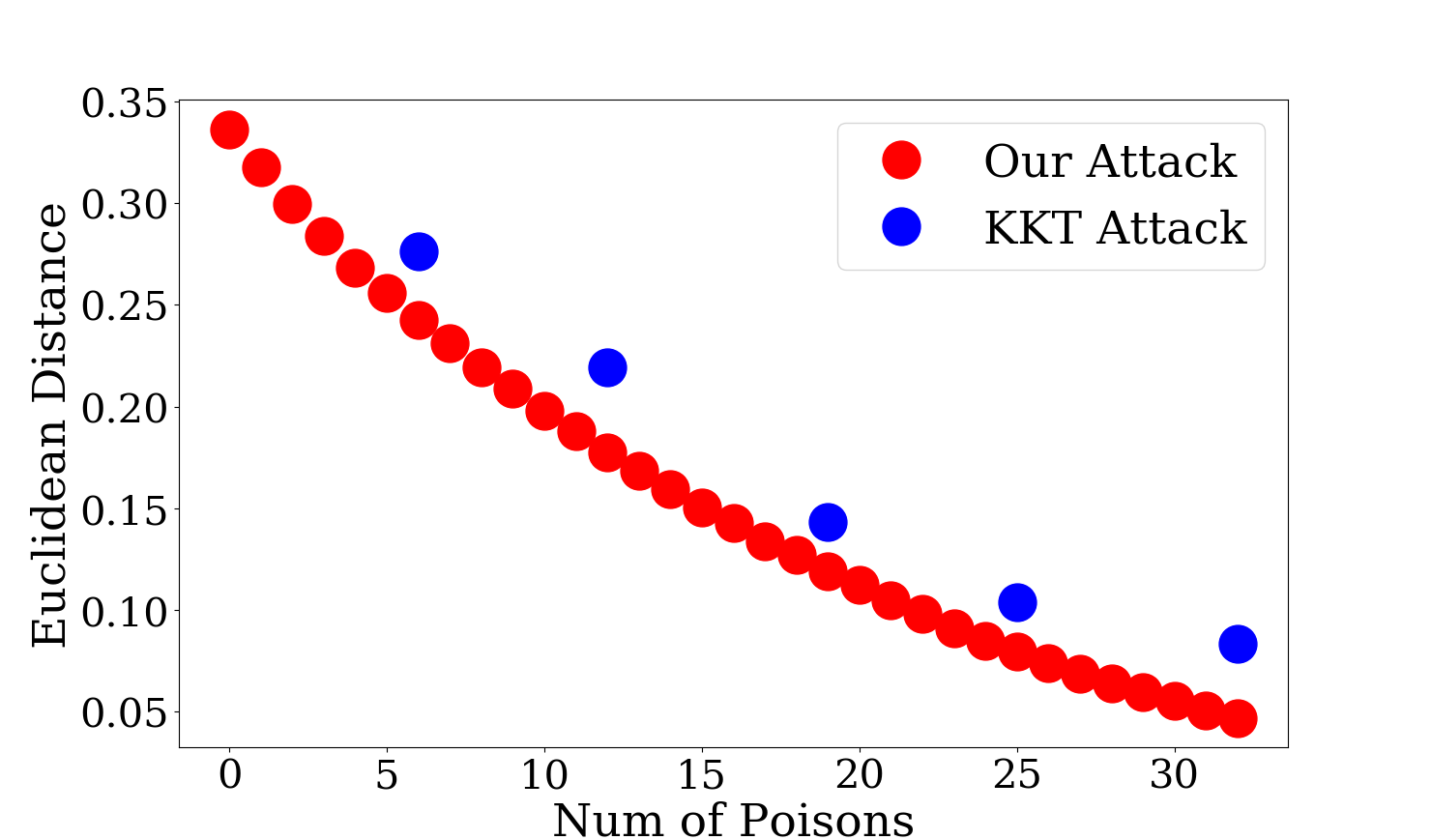}
            \caption[]%
            {Euclidean Distance}    
            \label{fig:dogfish_svm_error_01_norm_diff}
        \end{subfigure}

        \caption[]
        {SVM on Dogfish dataset: attack convergence (results shown are for the target classifier of error rate 10\%). The maximum number of poisons is set using the $2.0$-close threshold to target classifier} 
        \label{fig:dogfish_svm_error_01_convergence}
    \end{figure*}


\shortsection{Convergence} 
We show the convergence of Algorithm~\ref{algorithm} by reporting the maximum loss difference and Euclidean distance between the classifier induced by the attack and the target classifier. Figures~\ref{fig:mnist_svm_error_01_convergence} summarizes the results on \MNIST\ dataset for the target classifier of 10\% error rate. The maximum number of poisoning points in the figure is obtained when the classifier from Algorithm~\ref{algorithm} is $0.1$-close to the target classifier in the loss-based distance. Figure~\ref{fig:dogfish_svm_error_01_convergence} shows the results on Dogfish dataset with the target classifier of 10\% error rate and the maximum number of poisoning points is obtained when the induced classifier is $2.0$-close to the target classifier. From the two figures, we observe that classifiers induced by our algorithm steadily converge to the target classifier both in the maximum loss difference and Euclidean distance, while the classifier induced by the KKT attack either cannot converge reliably (Figure~\ref{fig:mnist_svm_error_01_convergence}) or converges slower than our attack (Figure~\ref{fig:dogfish_svm_error_01_convergence}). We observe similar observations for other indiscriminate attack settings, and omitted those results here for clarity in presentation. 


    \begin{figure*}[!tb]
        \centering
        \begin{subfigure}[b]{0.33\textwidth}
            \centering
            \includegraphics[width=0.98\textwidth]{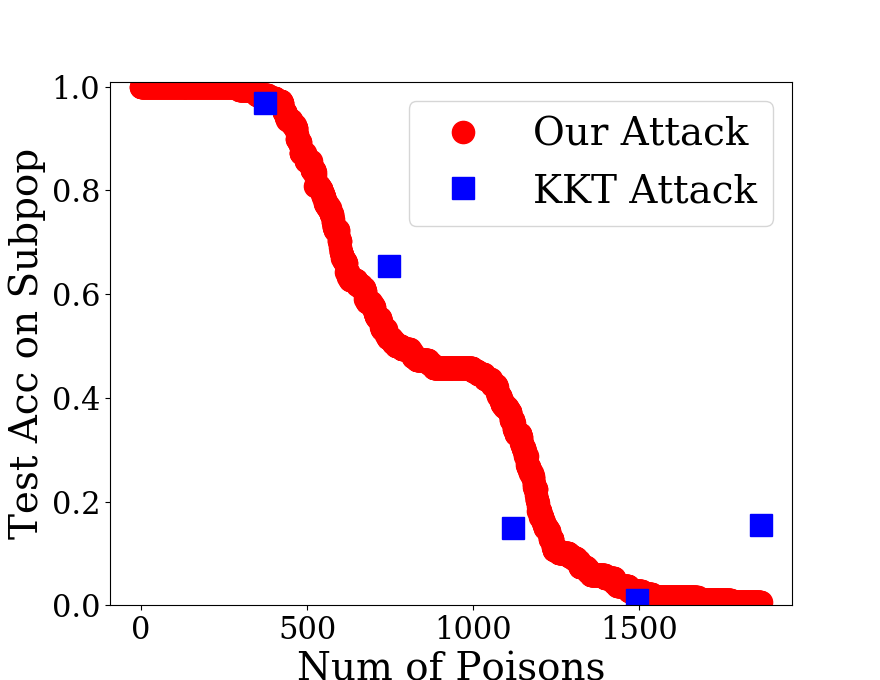}
           \caption[]
            {Cluster 0} 
            \label{fig:adult_svm_subpop0_acc_scores}
        \end{subfigure}
        \begin{subfigure}[b]{0.33\textwidth}  
            \centering 
            \includegraphics[width=0.98\textwidth]{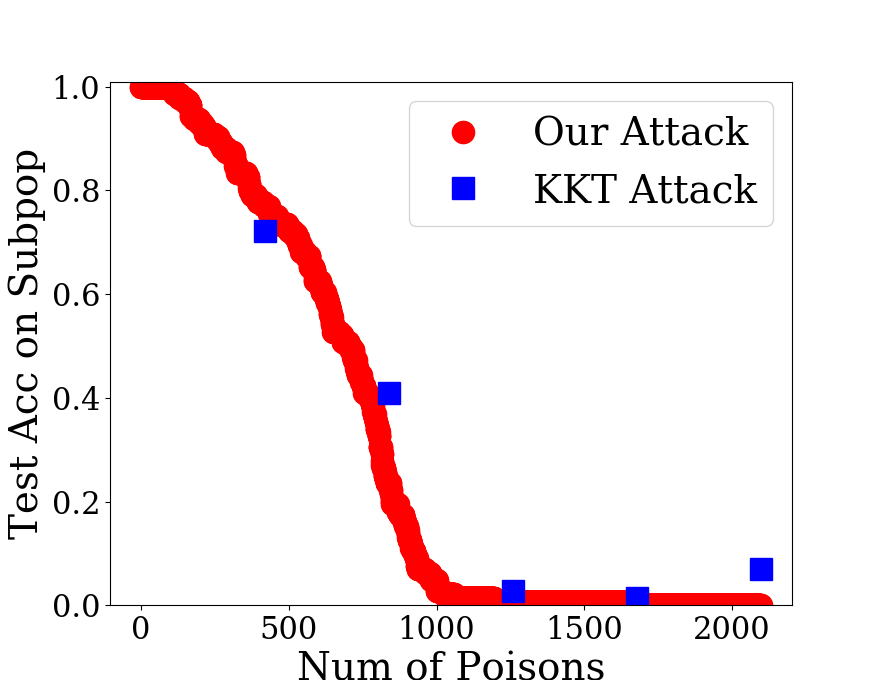}
            \caption[]%
            {Cluster 1}    
            \label{fig:adult_svm_subpop1_acc_scores}
        \end{subfigure}
        \begin{subfigure}[b]{0.33\textwidth} 
            \centering 
            \includegraphics[width=0.98\textwidth]{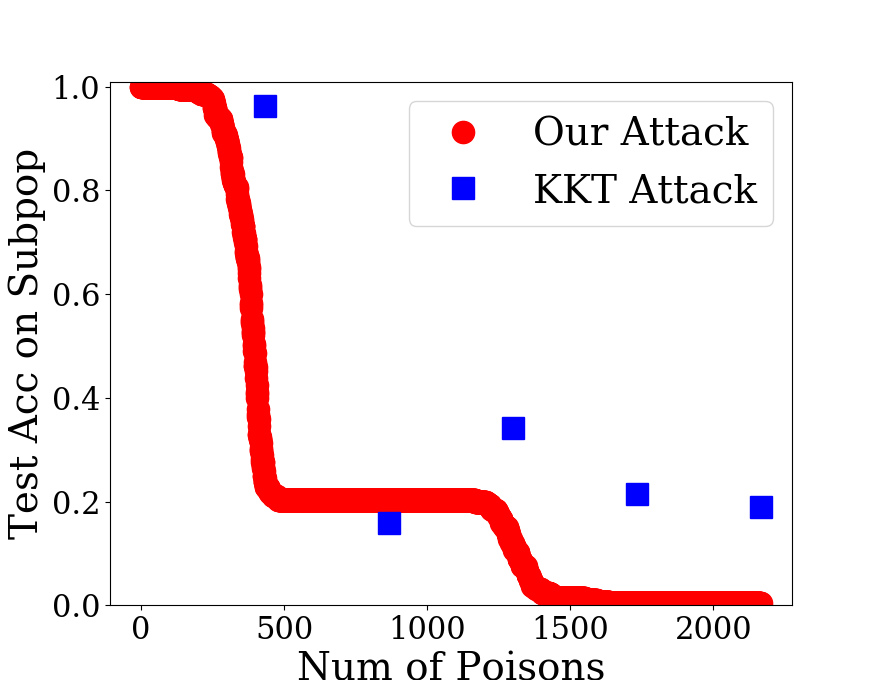}
            \caption[]%
            {Cluster 2}    
            \label{fig:adult_svm_subpop2_acc_scores}
        \end{subfigure}
        \caption[]
        {SVM on Adult dataset: test accuracy for each target model of given error rate with classifiers induced by poisoning points obtained from our attack and the KKT attack.} 
        \label{fig:adult_svm_subpop_acc}
    \end{figure*}
    
    \begin{figure*}[!tb]
        \centering
        \begin{subfigure}[b]{0.33\textwidth}
            \centering
            \includegraphics[width=0.98\textwidth]{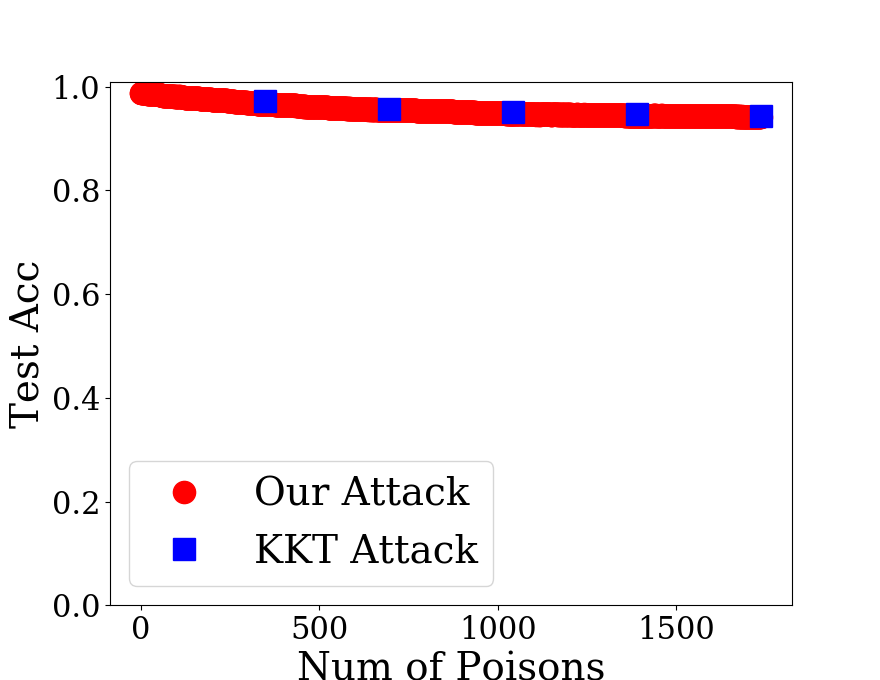}
           \caption[]
            {5\% Error Rate} 
            \label{fig:mnist_svm_error_005_acc_scores}
        \end{subfigure}
        \begin{subfigure}[b]{0.33\textwidth}  
            \centering 
            \includegraphics[width=0.98\textwidth]{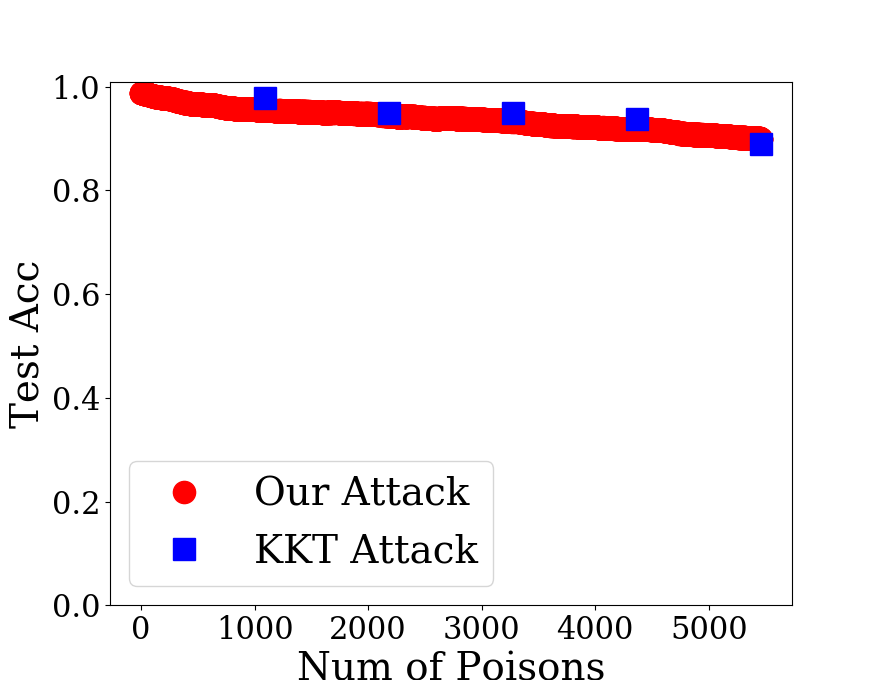}
            \caption[]%
            {10\% Error Rate}    
            \label{fig:mnist_svm_error_01_acc_scores}
        \end{subfigure}
        \begin{subfigure}[b]{0.33\textwidth} 
            \centering 
            \includegraphics[width=0.98\textwidth]{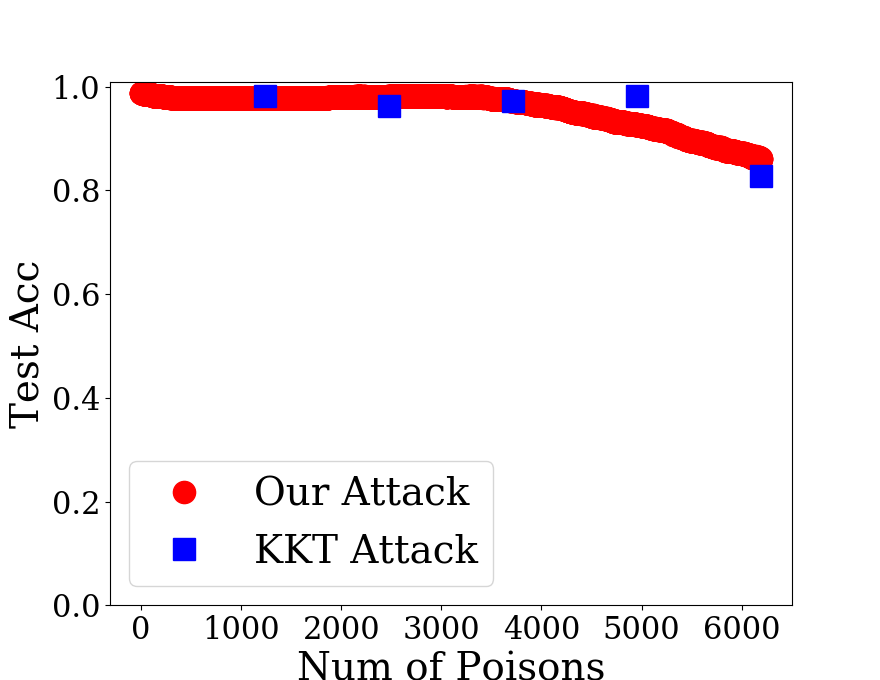}
            \caption[]%
            {15\% Error Rate}    
            \label{fig:mnist_svm_error_015_acc_scores}
        \end{subfigure}
        \caption[]
        {SVM on \MNIST\ dataset: test accuracy for each target model of given error rate with classifiers induced by poisoning points obtained from our attack and the KKT attack.} 
        \label{fig:mnist_svm_indiscriminate_acc}
    \end{figure*}

    \begin{figure*}[!tb]
        \centering
        \begin{subfigure}[b]{0.33\textwidth}
            \centering
            \includegraphics[width=0.98\textwidth]{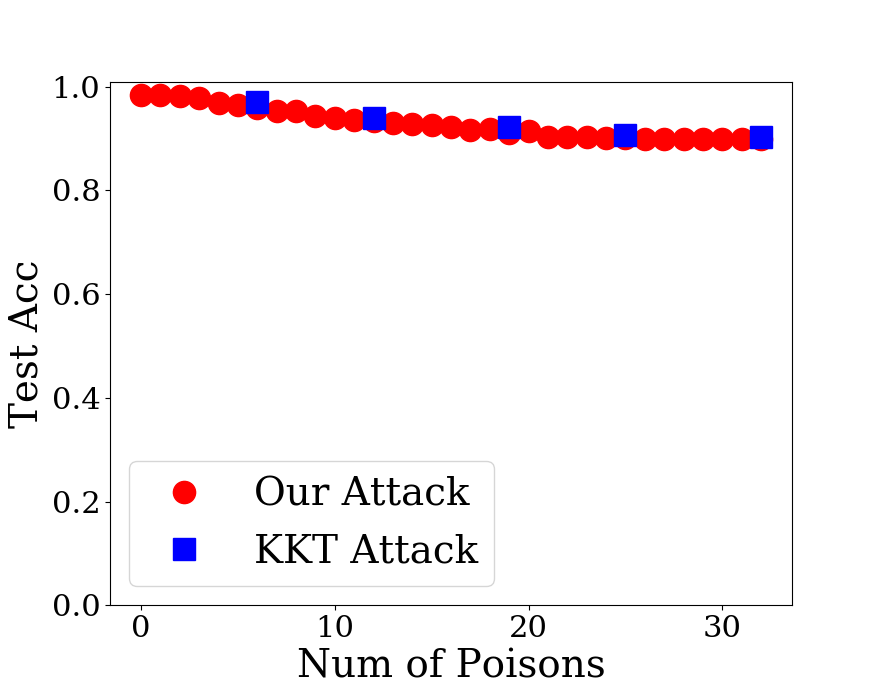}
           \caption[]
            {10\% Error Rate} 
            \label{fig:dogfish_svm_01_acc}
        \end{subfigure}
        \begin{subfigure}[b]{0.33\textwidth} 
            \centering 
            \includegraphics[width=0.98\textwidth]{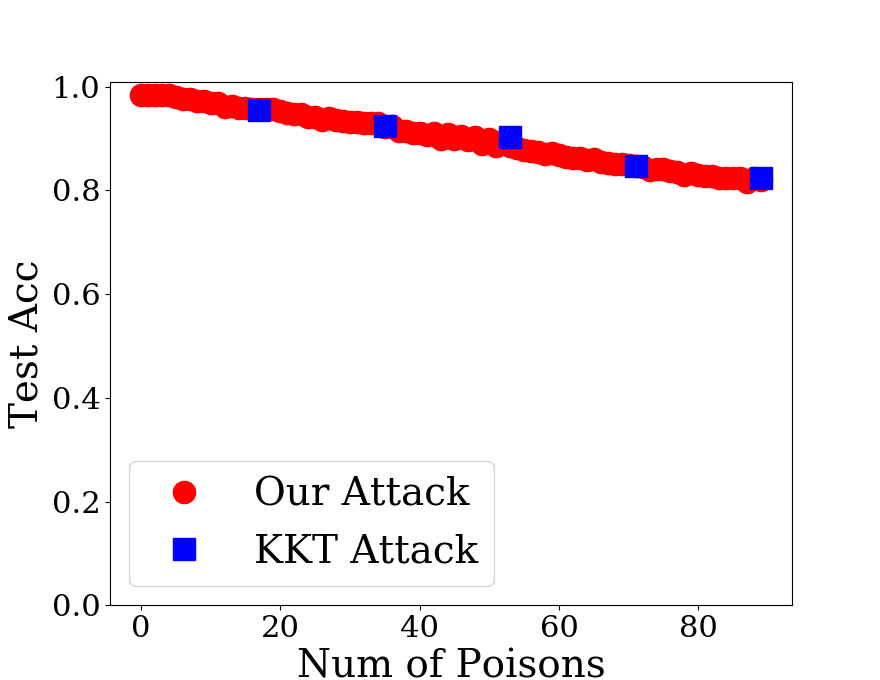}
            \caption[]%
            {20\% Error Rate}    
            \label{fig:dogfish_svm_02_acc}
        \end{subfigure}
        \begin{subfigure}[b]{0.33\textwidth} 
            \centering 
            \includegraphics[width=0.98\textwidth]{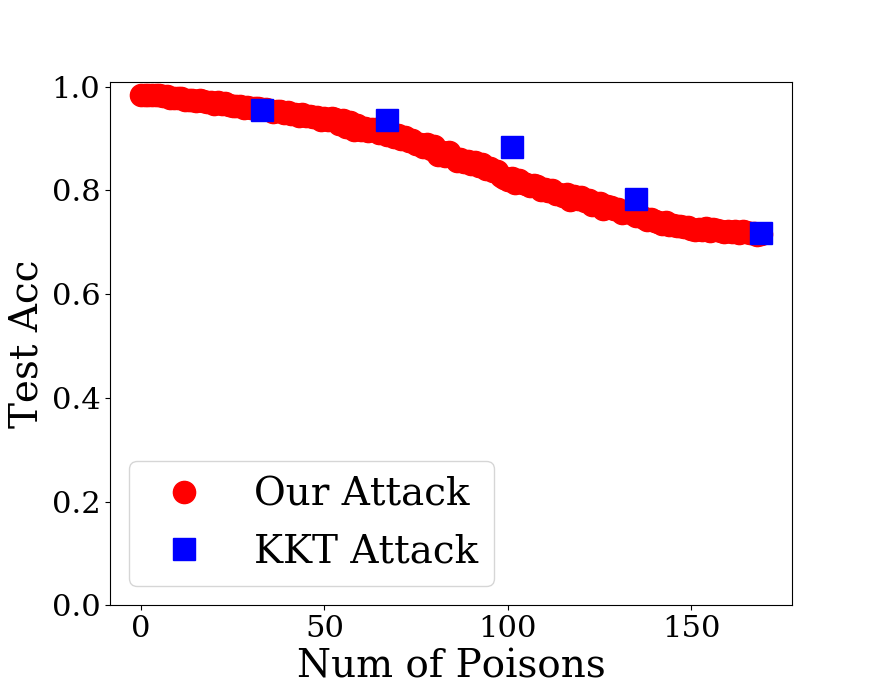}
            \caption[]%
            {30\% Error Rate}    
            \label{fig:dogfish_svm_03_acc}
        \end{subfigure}
        \caption[]
        {SVM on Dogfish dataset: test accuracy of each target model of given error rate with classifiers induced by poisoning points obtained from our attack and the KKT attack.} 
        \label{fig:dogfish_svm_acc}
    \end{figure*}

\shortsection{Attack Success} In Figure~\ref{fig:adult_svm_subpop_acc} - Figure~\ref{fig:dogfish_svm_acc}, we show the attack success of our attack as the number of poisoning points gradually increases. 
These figures present Table~\ref{tab:model-targeted-comparison-subpop} and Table ~\ref{tab:model-targeted-comparison-indis} (in the main paper) in the form of figures. The main purpose of these figures is to highlight the online nature of our attack -- in contrast to the KKT attack, our attack does not require the number of poisoning points in advance and the attack performance in each iteration can be easily tracked. Besides the online and incremental property, the conclusion from the figures is the same as the conclusion for SVM model in Table~\ref{tab:model-targeted-comparison-subpop} and Table ~\ref{tab:model-targeted-comparison-indis} --  our attack has better attack success than the KKT attack in subpopulation setting and has comparable performance in the indiscriminate setting.

    \begin{figure*}[!tb]
        \centering
        \begin{subfigure}[b]{0.33\textwidth}
            \centering
            \includegraphics[width=0.98\textwidth]{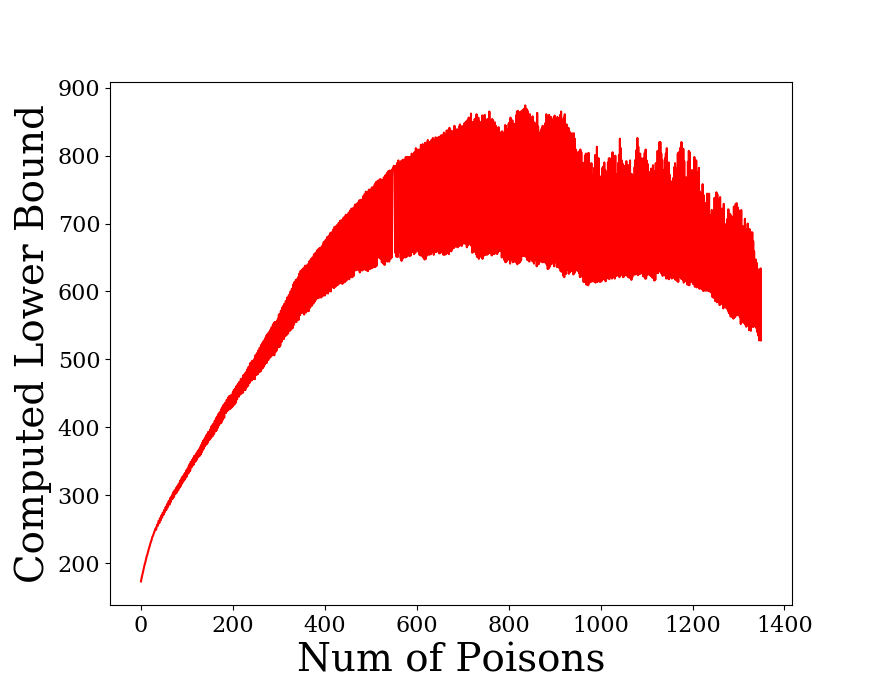}
           \caption[]
            {5\% Error Rate} 
            \label{fig:mnist_svm_005_lower_bound_ol}
        \end{subfigure}
        \begin{subfigure}[b]{0.33\textwidth} 
            \centering 
            \includegraphics[width=0.98\textwidth]{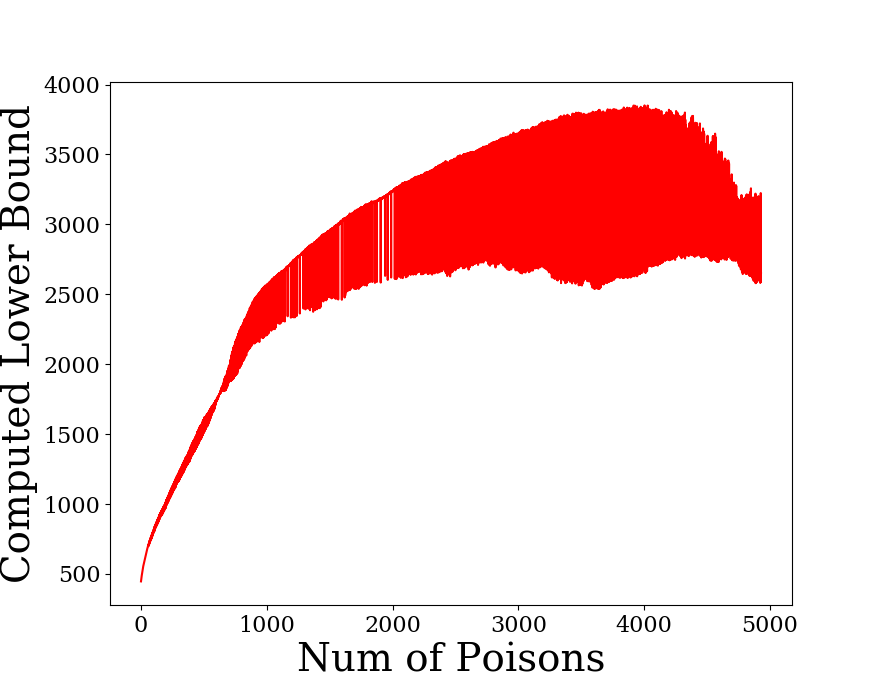}
            \caption[]%
            {5\% Error Rate}    
            \label{fig:mnist_svm_01_lower_bound_ol}
        \end{subfigure}
        \begin{subfigure}[b]{0.33\textwidth}  
            \centering 
            \includegraphics[width=0.98\textwidth]{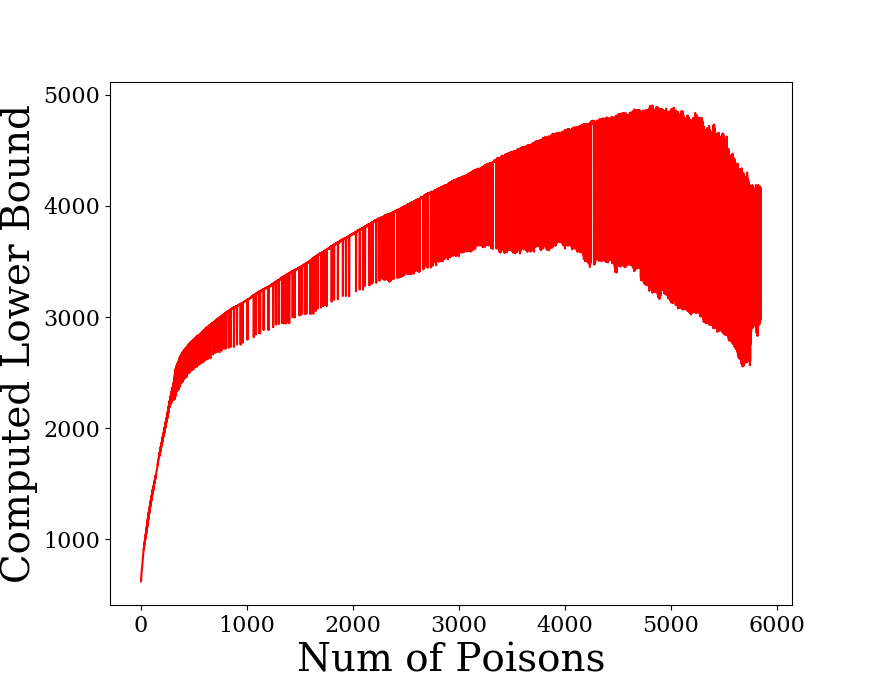}
            \caption[]%
            {15\% Error Rate}    
            \label{fig:mnist_svm_015_lower_bound_ol}
        \end{subfigure}
        \caption[]
        {SVM on \MNIST: lower bound computed in each iteration of running algorithm~\ref{algorithm}. The target classifier of the algorithm is the classifier induced from our Attack. The maximum number of poisons is obtained when the induced classifier is $0.1$-close to the target classifier.} 
        \label{fig:mnist_svm_indiscriminate_lower_bound_ol}
    \end{figure*}    
  
    \begin{figure*}[!tb]
        \centering
        \begin{subfigure}[b]{0.33\textwidth}
            \centering
            \includegraphics[width=0.98\textwidth]{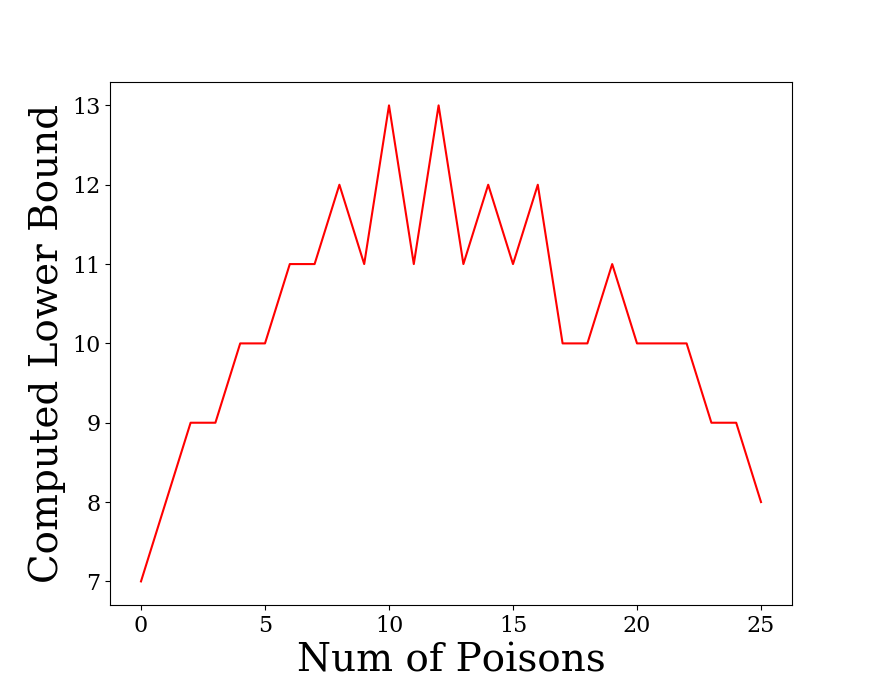}
           \caption[]
            {10\% Error Rate} 
            \label{fig:dogfish_svm_01_lower_bound_ol}
        \end{subfigure}
        \begin{subfigure}[b]{0.33\textwidth} 
            \centering 
            \includegraphics[width=0.98\textwidth]{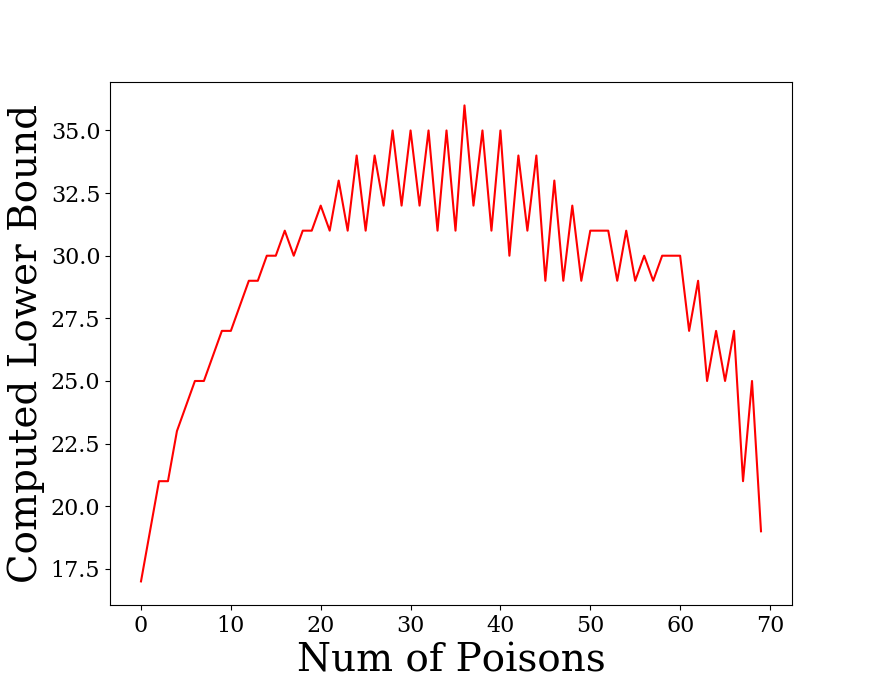}
            \caption[]%
            {20\% Error Rate}    
            \label{fig:dogfish_svm_02_lower_bound_ol}
        \end{subfigure}
        \begin{subfigure}[b]{0.33\textwidth}  
            \centering 
            \includegraphics[width=0.98\textwidth]{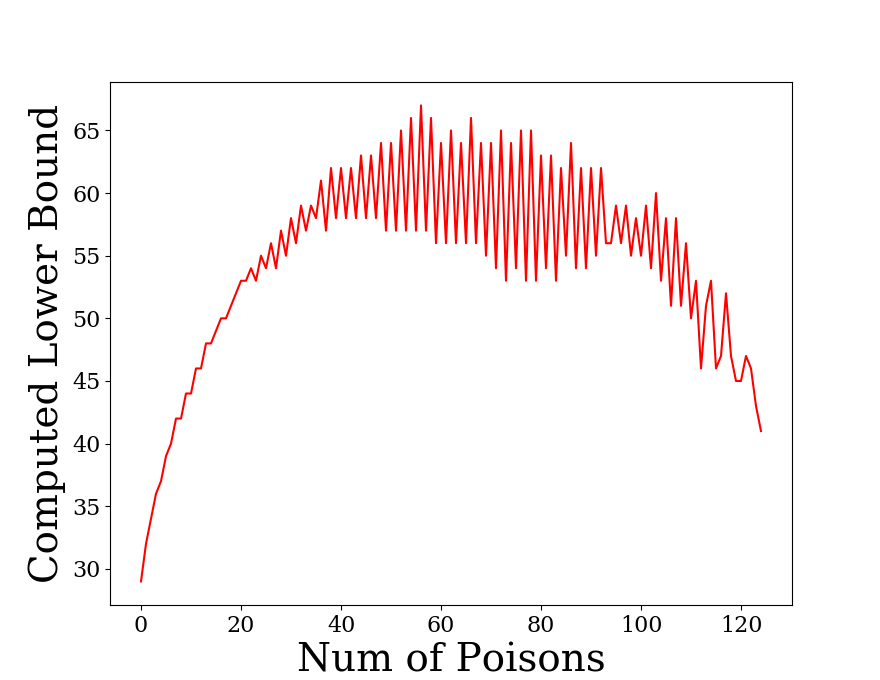}
            \caption[]%
            {30\% Error Rate}    
            \label{fig:dogfish_svm_03_lower_bound_ol}
        \end{subfigure}
        \caption[]
        {SVM on Dogfish: lower bound computed in each iteration when running algorithm~\ref{algorithm}. The target classifier of for the algorithm is the classifier induced from our Attack. The maximum number of poisons is obtained when the induced classifier is $2.0$-close to the target classifier.} 
        \label{fig:dogfish_svm_indiscriminate_lower_bound_ol}
    \end{figure*}

\shortsection{Lower Bound on Number of Poisons} The lower bounds for SVM in Table~\ref{tab:model-targeted-comparison-subpop} and Table~\ref{tab:model-targeted-comparison-indis} in the main paper is obtained by running Algorithm~\ref{algorithm} and using the intermediate classifier $\theta_t$ to compute the lower bound (with Theorem~\ref{theorem:lower_bound}) in each iteration, and returning the highest lower bound computed across all iterations. In this section, we directly plot the computed lower bound in each iteration to show the trend of the lower bound as more number of poisoning points are added. Figure~\ref{fig:mnist_svm_indiscriminate_lower_bound_ol} and Figure~\ref{fig:dogfish_svm_indiscriminate_lower_bound_ol} shows the results on \MNIST\ and Dogfish datasets. From the figures, we can easily observe that the peak value of the lower bound is obtained in the middle of the attack process. Therefore, it might be the case that the computed lower bound is already very tight, as we cannot improve the highest lower bound by running the attack for more iterations. This implies that, it is more likely that our attack is not very optimal on these two datasets and we should seek for more efficient data poisoning attacks. We did not show the curves for Adult dataset because the gap between the lower bound and the number of poisoning points used by our attack is small, indicating our attack is nearly optimal. 

For completeness, we also repeat the same experiment, but now with the model induced from the KKT attack as the target model for our attack to compute its lower bound. In Table~\ref{tab:mnist_svm_indiscrim_validate_lower_bound_kkt} and Figure~\ref{fig:mnist_svm_indiscriminate_lower_bound_kkt}, we report the lower bound results on \MNIST\ dataset. Table~\ref{tab:mnist_svm_indiscrim_validate_lower_bound_kkt} shows the highest computed lower bound and Figure~\ref{fig:mnist_svm_indiscriminate_lower_bound_kkt} plots the lower bound computed in each iteration. The conclusion is still the same as our attack -- there still exists a large gap between the lower bound and the number of poisoning points used by the KKT attack, which indicates that the KKT attack is also not very efficient. We have similar observations on the Dogfish dataset using the KKT attack. 

    \begin{table}[!tb]
    \centering
    \begin{tabular}{cccc}
    \toprule
     & 5\% Error & 10\% Error & 15\% Error \\ \midrule
    \# of Poisons & 1737 & 5458 & 6192 \\
    Lower Bound & 856 & 4058.4$\pm$1.4 & 5031.4$\pm$4.8 \\ \bottomrule
    \end{tabular}
    \vspace{0.5em}
    \caption[]{SVM on \MNIST: poisoning points needed to achieve target classifiers induced from the KKT attack. Top row means number of poisoning points used by the KKT attack. Bottom row means the lower bound computed from Theorem~\ref{theorem:lower_bound} for the target classifier, which is the model induced by the KKT attack. All results are averaged over 4 runs, integer value in the cell means we get exactly same value for 4 runs and others are shown with the average and standard error.
    }
    \label{tab:mnist_svm_indiscrim_validate_lower_bound_kkt} 
    \end{table}

    \begin{figure*}[!tb]
        \centering
        \begin{subfigure}[b]{0.33\textwidth}
            \centering
            \includegraphics[width=0.98\textwidth]{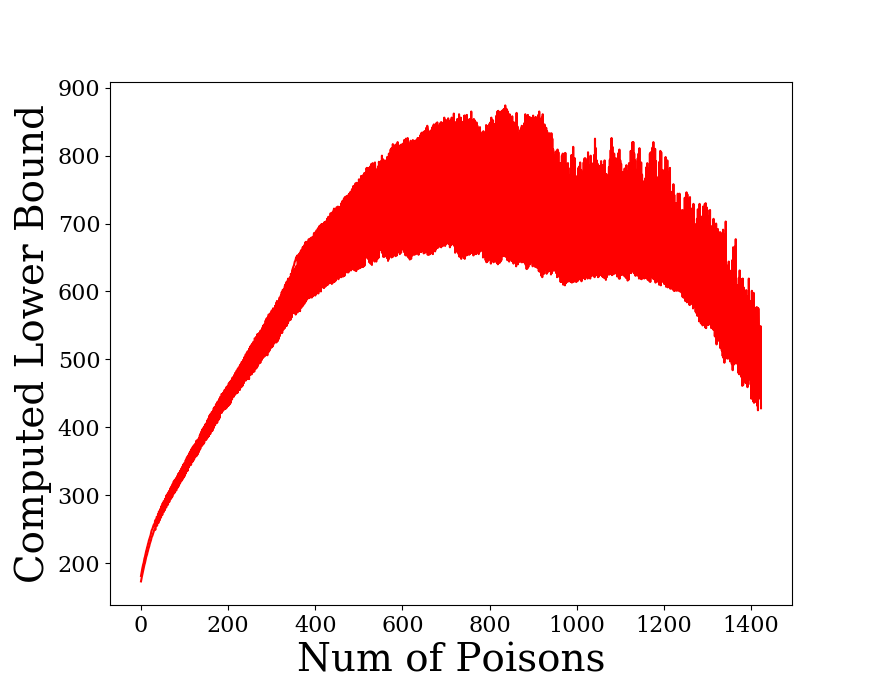}
           \caption[]
            {5\% Error Rate} 
            \label{fig:mnist_svm_005_lower_bound_kkt}
        \end{subfigure}
        \begin{subfigure}[b]{0.33\textwidth}  
            \centering 
            \includegraphics[width=0.98\textwidth]{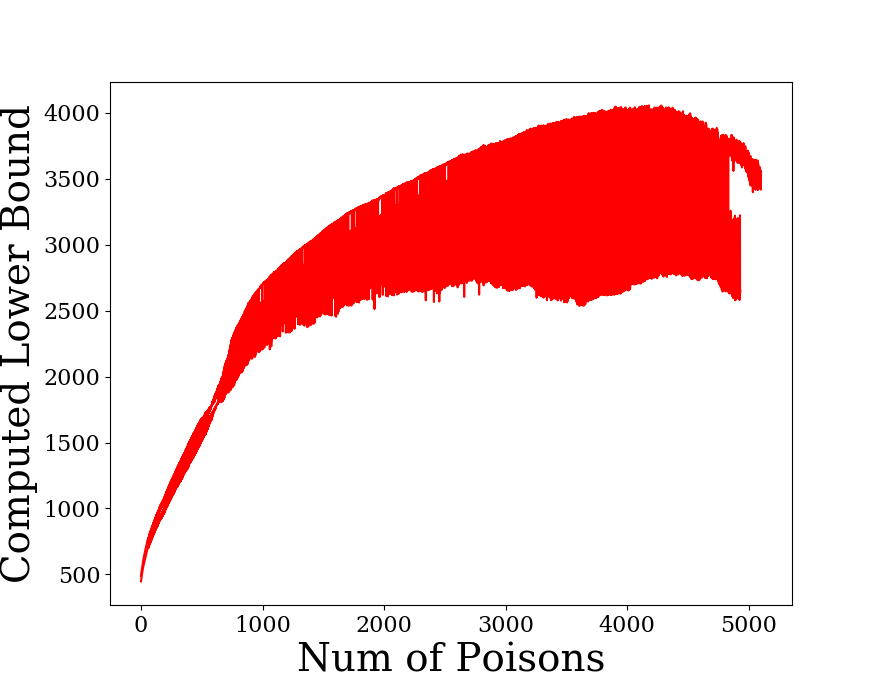}
            \caption[]%
            {5\% Error Rate}    
            \label{fig:mnist_svm_01_lower_bound_kkt}
        \end{subfigure}
        \begin{subfigure}[b]{0.33\textwidth} 
            \centering 
            \includegraphics[width=0.98\textwidth]{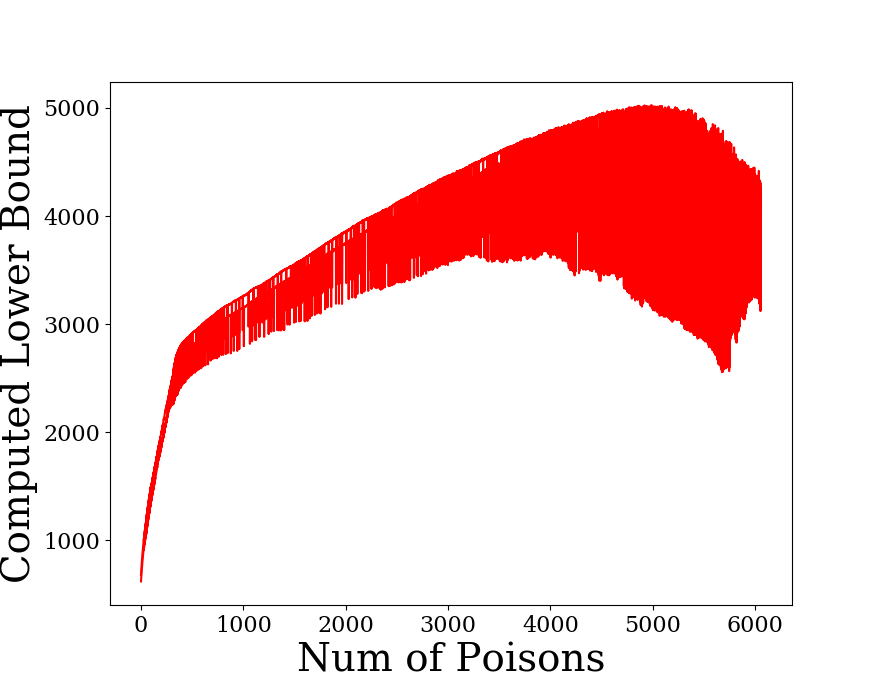}
            \caption[]%
            {15\% Error Rate}    
            \label{fig:mnist_svm_015_lower_bound_kkt}
        \end{subfigure}
        
        \caption[]
        {SVM on \MNIST: lower bound computed in each iteration of running algorithm~\ref{algorithm} when the target classifier of the algorithm is the classifier induced from the KKT Attack. The maximum number of poisons is set using the $0.1$-close threshold to KKT induced classifier.} 
        \label{fig:mnist_svm_indiscriminate_lower_bound_kkt}
    \end{figure*}

\subsection{More Results on Logistic Regression}
\label{sec:addi_lr_results}
In this section, we provide additional results on the logistic regression model. The experiment setup is as the same as in Section~\ref{sec:addi_svm_results}. Compared to SVM, we do not report the lower bound results for logistic regression because the maximum loss difference found for logistic regression is an approximate solution and hence the lower bound can be invalid. In what follows, we first discuss the impact of approximate maximum loss difference and then show results on the attack convergence and attack success. 

\shortsection{Approximate Maximum Loss Difference} The convergence guarantee in the paper also holds for logistic regression model (more generally, holds for any Lipschitz and convex function with strongly convex regularizer). However, for logistic regression, we may not be able to efficiently search for the globally optimal point with maximum loss difference (Line 4 in Algorithm~\ref{algorithm}) because the difference of two logistic losses is not concave. Therefore, we adopt gradient descent strategy, using the Adam optimizer~\citep{kingma2014adam} to search for the point that (approximately) maximizes the loss difference. This is in contrast to the SVM model, where the difference of Hinge loss is piece-wise linear and we can deploy general (convex) solvers to search for the globally optimal point in each linear segment~\citep{diamond2016cvxpy,gurobi2020manual}. 
However, as will be demonstrated next, poisoning points with approximate maximum loss difference can still be very effective. More formally, if the approximate maximum loss difference $\hat{l}$ found from local optimization techniques is within a constant factor from the globally optimal value $l^{*}$ (i.e., $\hat{l}\geq\alpha l^{*}, 0 <\alpha<1$), then we still enjoy similar convergence guarantees. A similar issue of global optimality also applies to the KKT attack~\citep{koh2018stronger}, where the attack objective function is no longer convex for logistic regression models, and therefore, we also utilize gradient based technique to (approximately) solve the optimization problem and present the results below. 

    \begin{figure*}[tbp]
        \centering
        \begin{subfigure}[b]{0.45\textwidth} 
            \centering 
            \includegraphics[width=\textwidth]{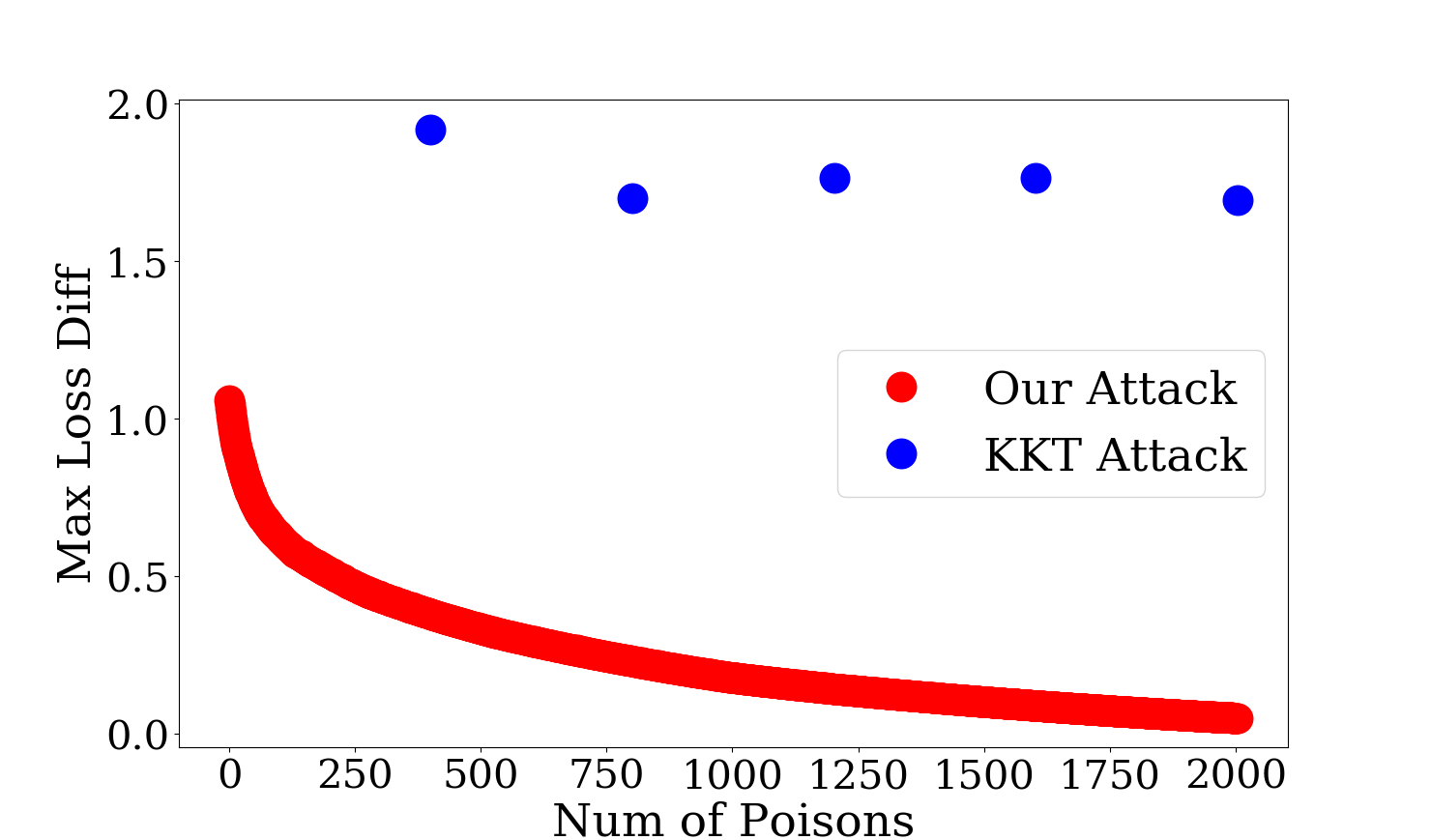}
            \caption[]%
            {Max Loss Difference}    
            \label{fig:adult_lr_subpop0_max_loss_diff}
        \end{subfigure}
        \begin{subfigure}[b]{0.45\textwidth} 
            \centering 
        \includegraphics[width=\textwidth]{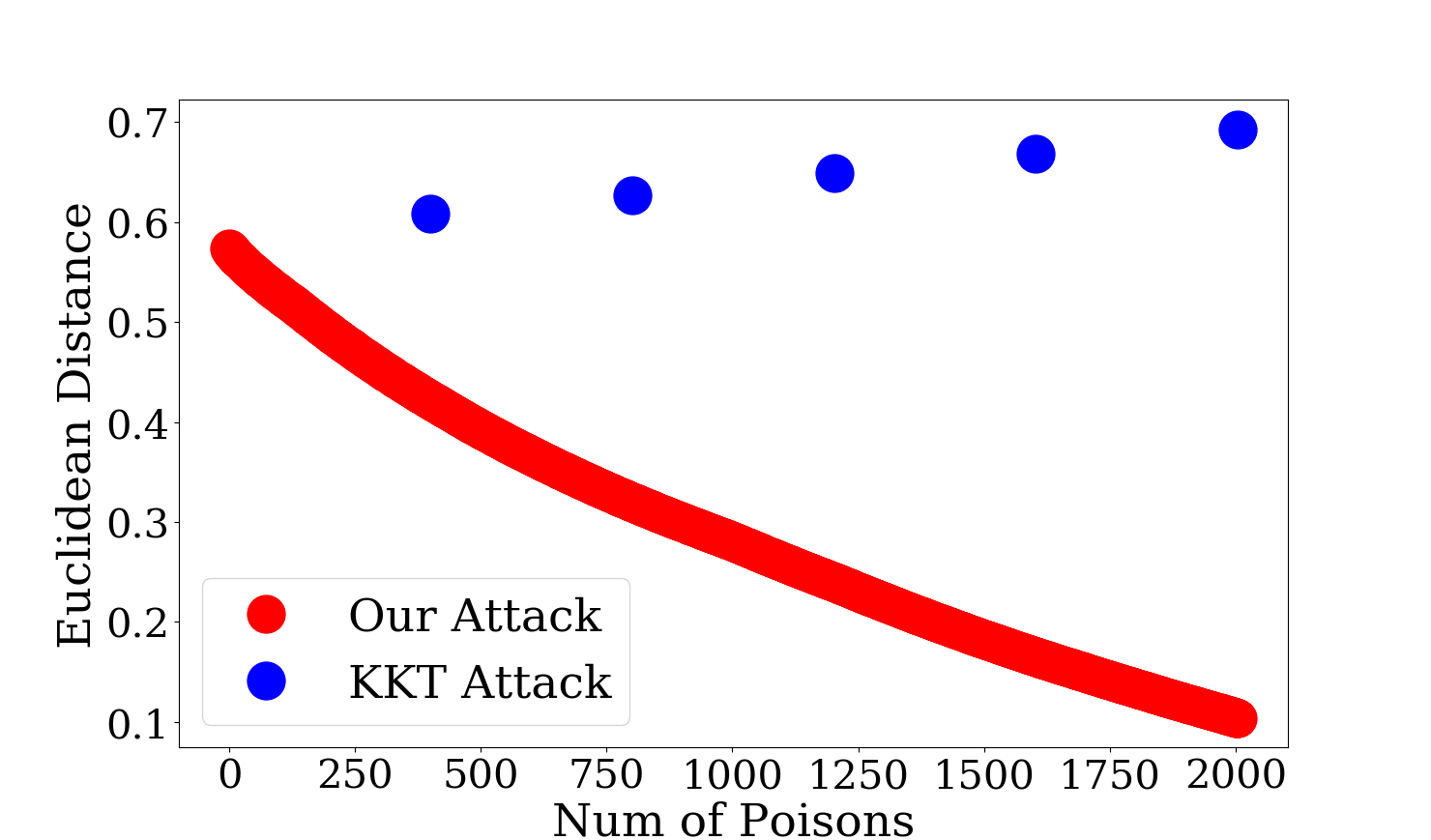}
            \caption[]%
            {Euclidean Distance}    
            \label{fig:adult_lr_subpop0_norm_diff}
        \end{subfigure}
        \caption[]
        {Logistic regression model on Adult: attack convergence (results shown are for the first subpopulation, Cluster 0). The maximum number of poisons is set using the $0.05$-close threshold to target classifier.}
        \label{fig:adult_lr_subpop0_convergence}
    \end{figure*}
    
    \begin{figure*}[tbp]
        \centering
        \begin{subfigure}[b]{0.45\textwidth}  
            \centering 
            \includegraphics[width=\textwidth]{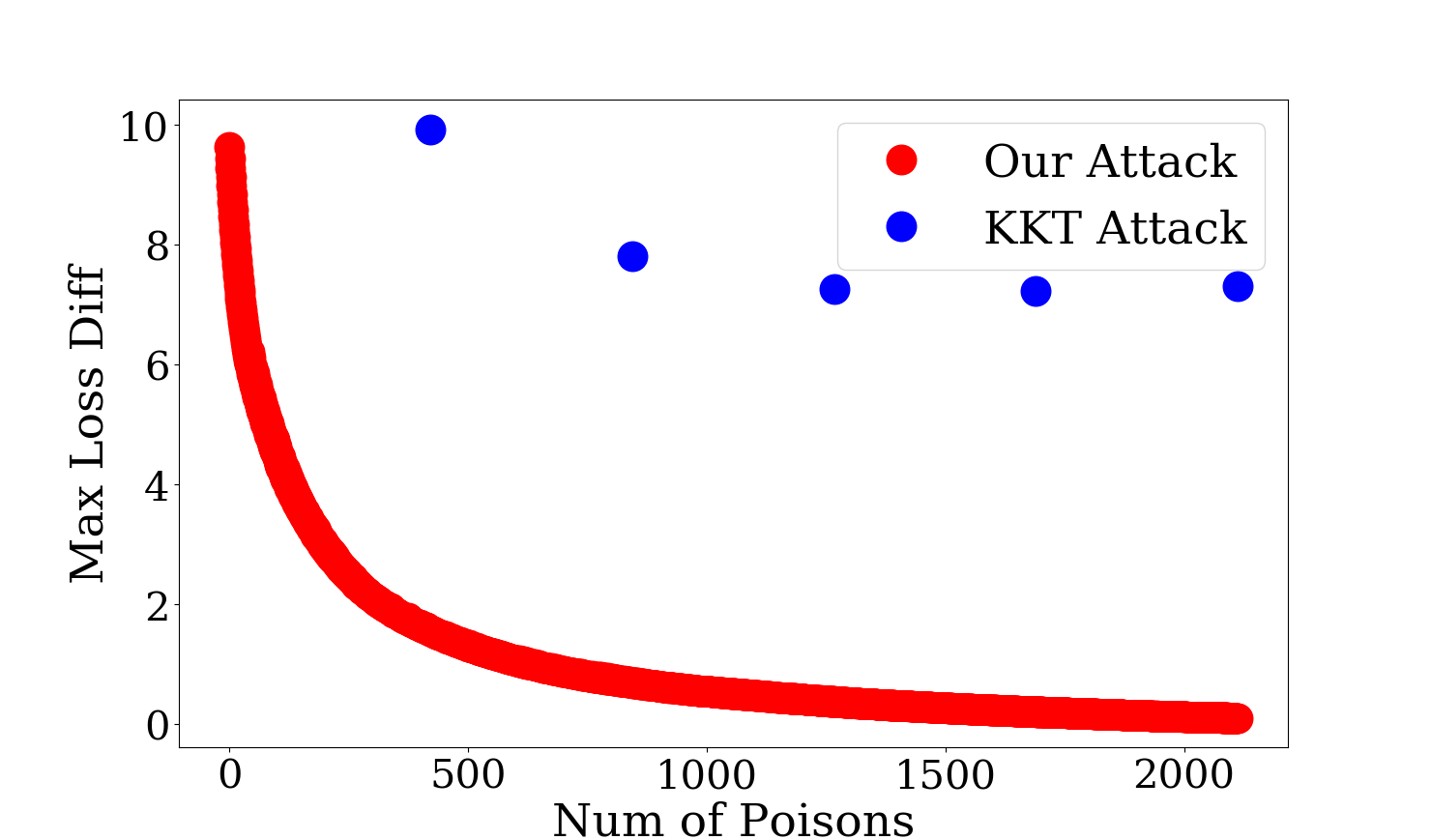}
            \caption[]%
            {Max Loss Difference}    
            \label{fig:mnist_lr_error_01_max_loss_diff}
        \end{subfigure}
        \begin{subfigure}[b]{0.45\textwidth} 
            \centering 
            \includegraphics[width=\textwidth]{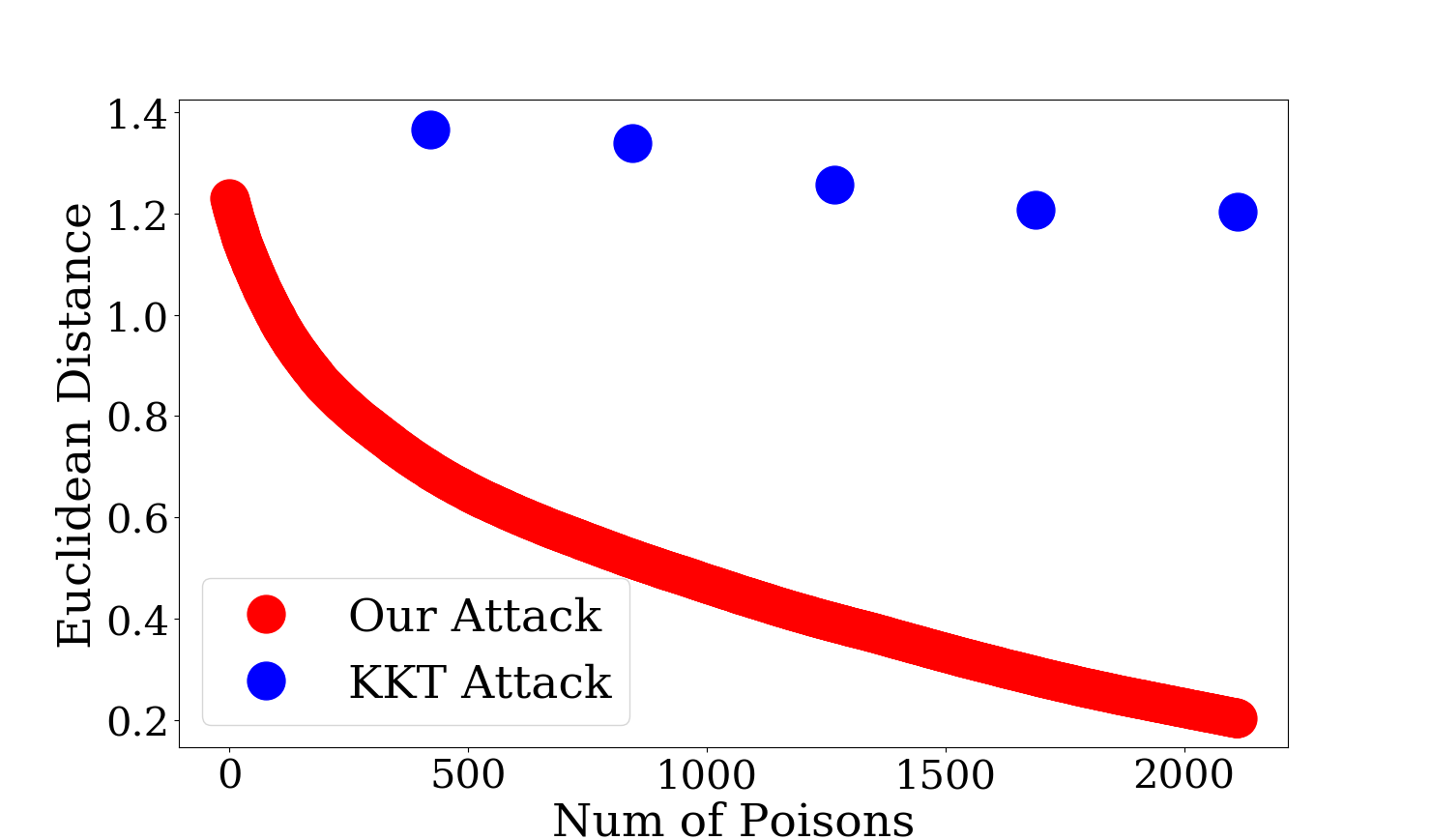}
            \caption[]%
            {Euclidean Distance}    
            \label{fig:mnist_lr_error_01_norm_diff}
        \end{subfigure}

        \caption[]
        {Logistic regression model on \MNIST\ dataset: attack convergence (results shown are for the target classifier of error rate 10\%). The maximum number of poisons is set using the $0.1$-close threshold to target classifier.} 
        \label{fig:mnist_lr_error_01_convergence}
    \end{figure*}

    \begin{figure*}[tbp]
        \centering
        \begin{subfigure}[b]{0.45\textwidth}  
            \centering 
            \includegraphics[width=\textwidth]{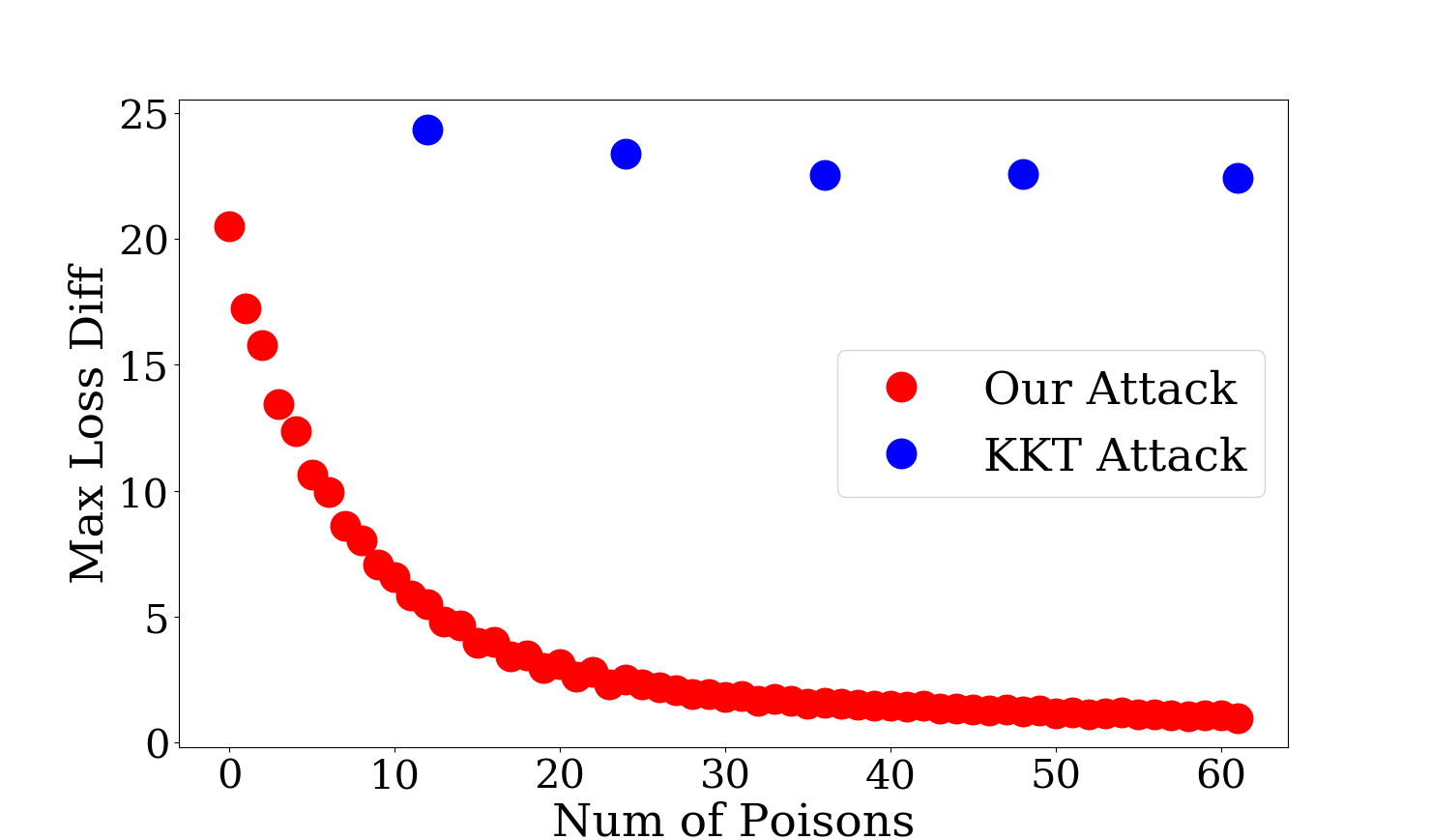}
            \caption[]%
            {Max Loss Difference}    
            \label{fig:dogfish_lr_error_01_max_loss_diff}
        \end{subfigure}
        \begin{subfigure}[b]{0.45\textwidth} 
            \centering 
            \includegraphics[width=\textwidth]{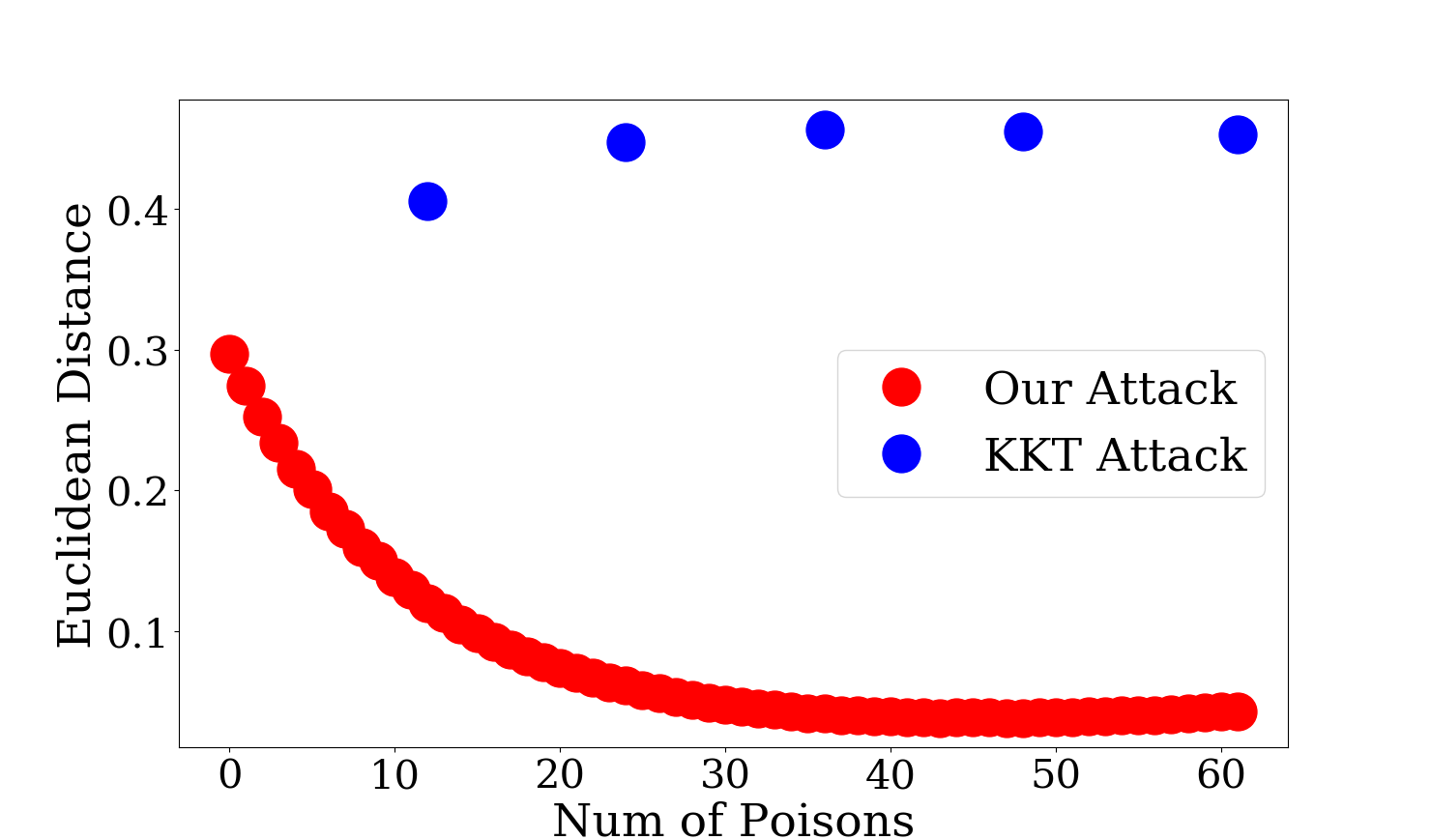}
            \caption[]%
            {Euclidean Distance}    
            \label{fig:dogfish_lr_error_01_norm_diff}
        \end{subfigure}

        \caption[]
        {Logistic regression model on Dogfish: attack convergence (results shown are for the target classifier of error rate 10\%). The maximum number of poisons is set using the $1.0$-close threshold to target classifier.} 
        \label{fig:dogfish_lr_error_01_convergence}
    \end{figure*}

\shortsection{Convergence} The results results for logistic regression on Adult, \MNIST\ and Dogfish datasets are show in Figure~\ref{fig:adult_lr_subpop0_convergence}, Figure~\ref{fig:mnist_lr_error_01_convergence} and Figure~\ref{fig:dogfish_lr_error_01_convergence} respectively. For the Adult dataset, we show the convergence on the first subpopulation (cluster 0). For \MNIST\ and Dogfish, similar to Section~\ref{sec:addi_svm_results}, we show the convergence on the target models of 10\% error rates. All results show that, our attack steadily converges to the target model while the KKT attack fails to have a reliable convergence. Similar observations are also found in other settings (i.e., different clusters for the subpopulation setting and different target models in the indiscriminate settings). 

\shortsection{Attack Success}
The attack success results on Adult, \MNIST\ and Dogfish datasets are show in Figure~\ref{fig:adult_lr_subpop_acc}, Figure~\ref{fig:mnist_lr_indiscriminate_acc} and Figure \ref{fig:dogfish_lr_acc} respectively. These figures present the logistic regression results in Table~\ref{tab:model-targeted-comparison-subpop} and Table~\ref{tab:model-targeted-comparison-indis} (in the main paper) in the form of figures. All the results show that our attack is much more effective than the KKT attack on logistic regression models, and in fact, the KKT attack cannot effectively poison the models in most cases. In addition, our attack runs in an online fashion and we can easily track the attack performance in each iteration.



    \begin{figure*}[tbh!]
        \centering
        \begin{subfigure}[b]{0.33\textwidth}
            \centering
            \includegraphics[width=0.98\textwidth]{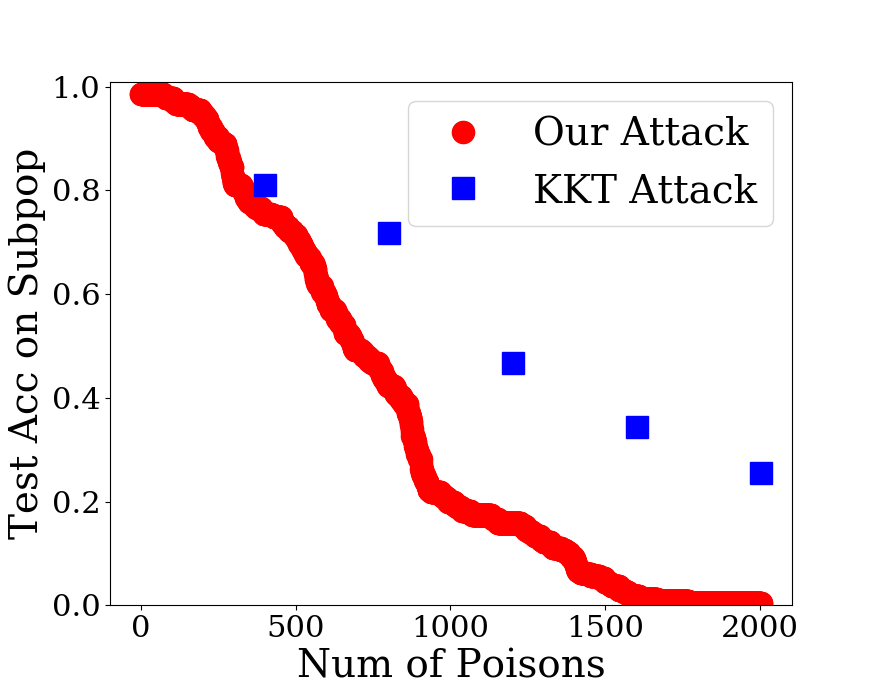}
            \caption[]
            {Cluster 0} 
            \label{fig:adult_lr_subpop0_acc_scores}
        \end{subfigure}
        \begin{subfigure}[b]{0.33\textwidth}  
            \centering 
            \includegraphics[width=0.98\textwidth]{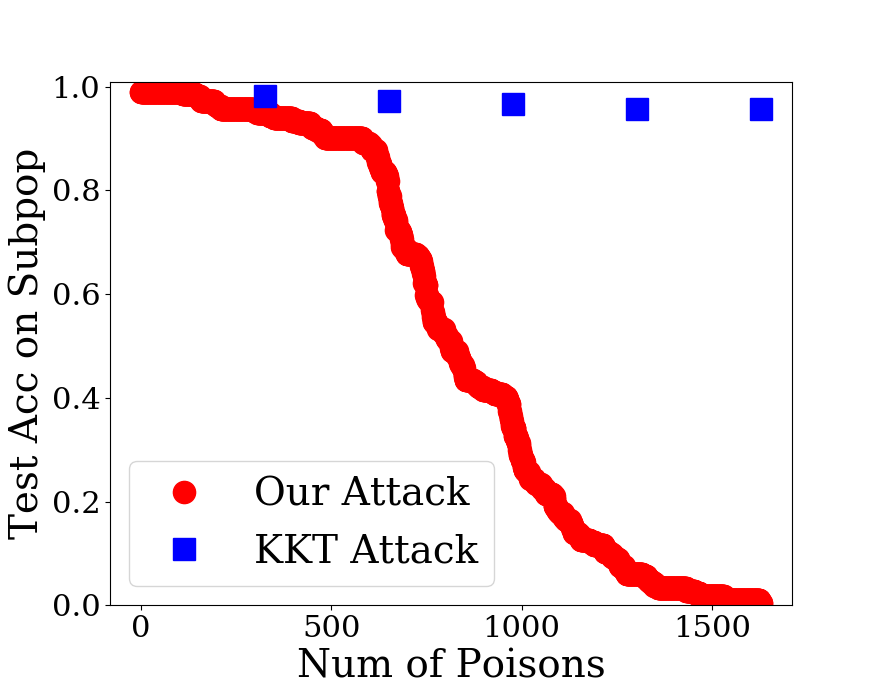}
            \caption[]%
            {Cluster 1}    
            \label{fig:adult_lr_subpop1_acc_scores}
        \end{subfigure}
        \begin{subfigure}[b]{0.33\textwidth}  
            \centering 
            \includegraphics[width=0.98\textwidth]{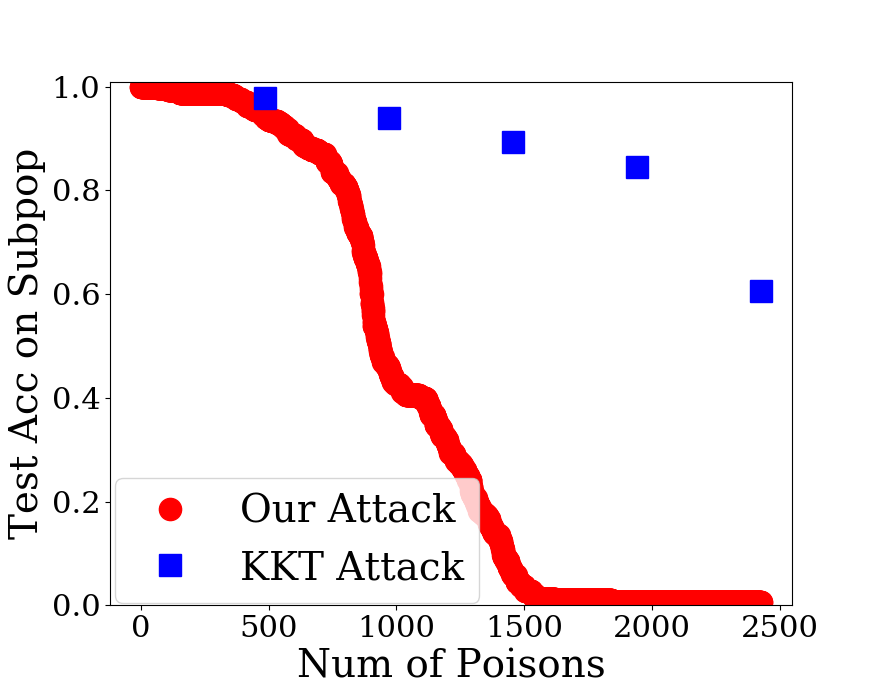}
            \caption[]%
            {Cluster 2}    
            \label{fig:adult_lr_subpop2_acc_scores}
        \end{subfigure}
        \caption[]
        {Logistic regression model on Adult: test accuracy for each subpopulation with classifiers induced by poisoning points obtained from our attack and the KKT attack.     
        } 
        \label{fig:adult_lr_subpop_acc}
    \end{figure*}
    
    \begin{figure*}[tbh!]
        \centering
        \begin{subfigure}[b]{0.33\textwidth}
            \centering
            \includegraphics[width=0.98\textwidth]{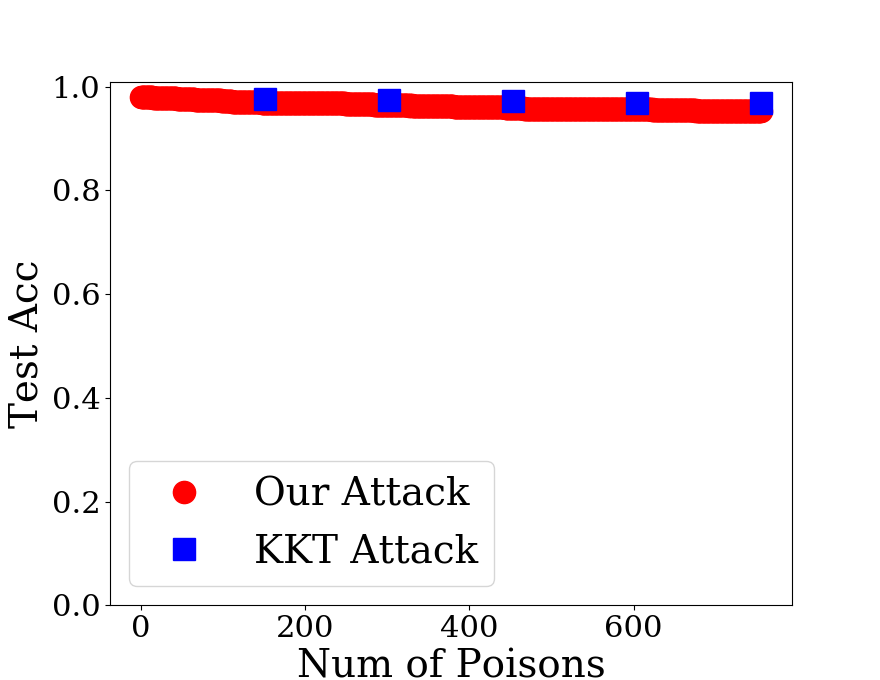}
           \caption[]
            {5\% Error Rate} 
            \label{fig:mnist_lr_error_005_acc_scores}
        \end{subfigure}
        \begin{subfigure}[b]{0.33\textwidth}  
            \centering 
            \includegraphics[width=0.98\textwidth]{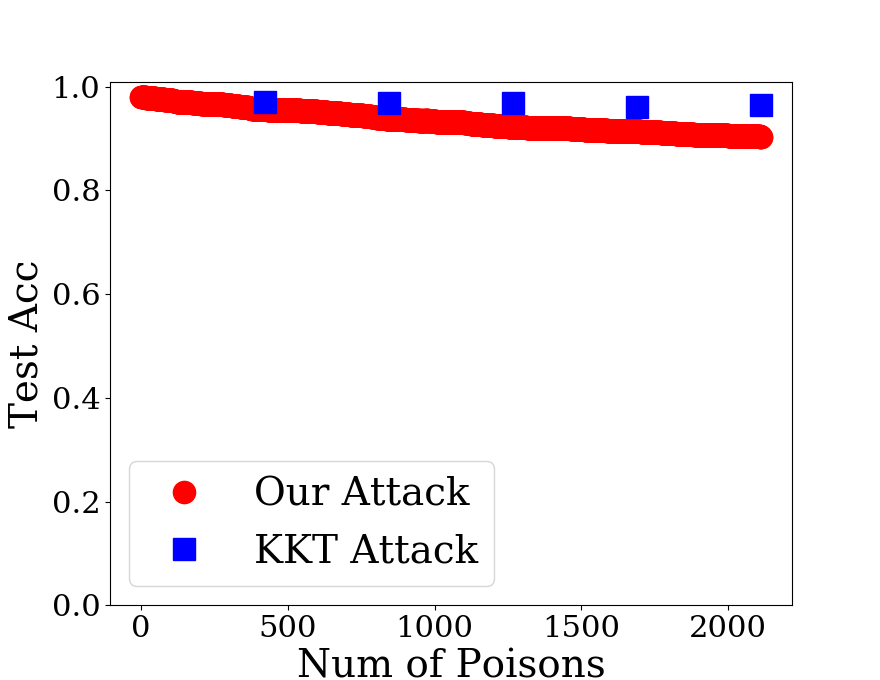}
            \caption[]%
            {10\% Error Rate}    
            \label{fig:mnist_lr_error_01_acc_scores}
        \end{subfigure}
        \begin{subfigure}[b]{0.33\textwidth}  
            \centering 
            \includegraphics[width=0.98\textwidth]{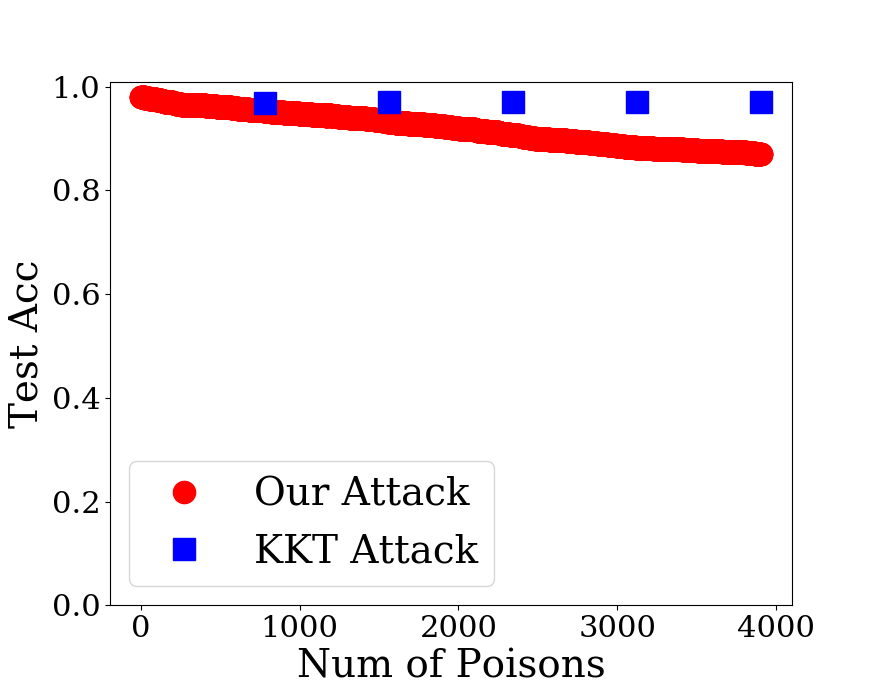}
            \caption[]%
            {15\% Error Rate}    
            \label{fig:mnist_lr_error_015_acc_scores}
        \end{subfigure}
        \caption[]
        {Logistic regression model on \MNIST: test accuracy for each target model of given error rate with classifiers induced by poisoning points obtained from our attack and the KKT attack.} 
        \label{fig:mnist_lr_indiscriminate_acc}
    \end{figure*}
 
    \begin{figure*}[tbp]
        \centering
        \begin{subfigure}[b]{0.33\textwidth}
            \centering
            \includegraphics[width=0.98\textwidth]{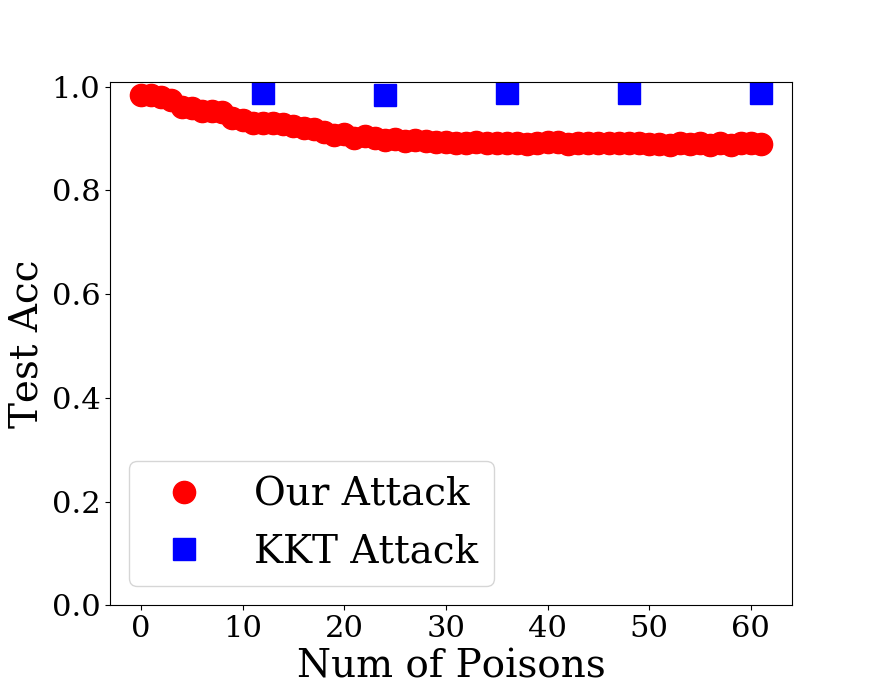}
           \caption[]
            {10\% Error Rate} 
            \label{fig:dogfish_lr_01_acc}
        \end{subfigure}
        \begin{subfigure}[b]{0.33\textwidth} 
            \centering 
            \includegraphics[width=0.98\textwidth]{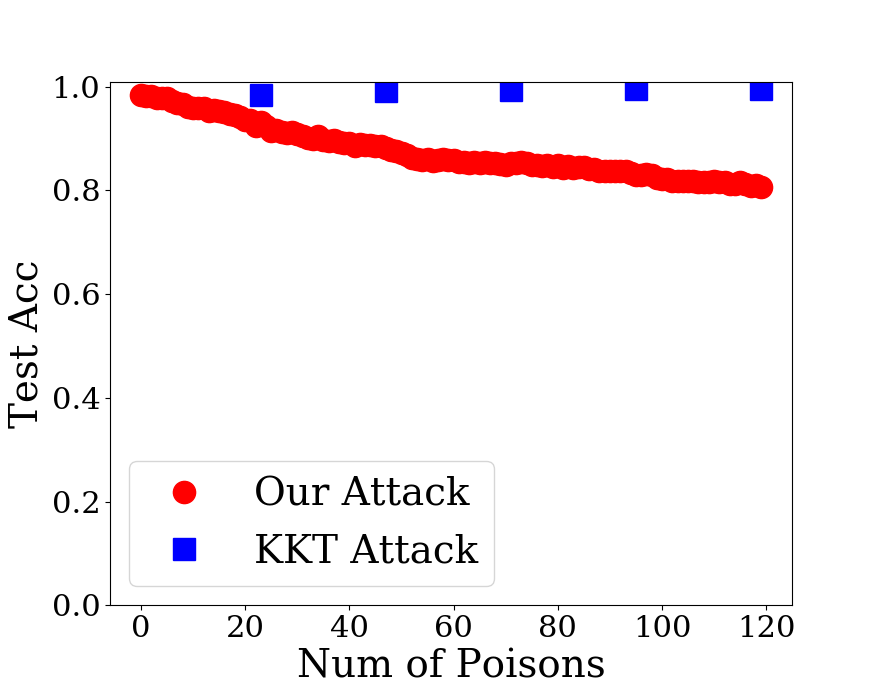}
            \caption[]%
            {20\% Error Rate}    
            \label{fig:dogfish_lr_02_acc}
        \end{subfigure}
        \begin{subfigure}[b]{0.33\textwidth}  
            \centering 
            \includegraphics[width=0.98\textwidth]{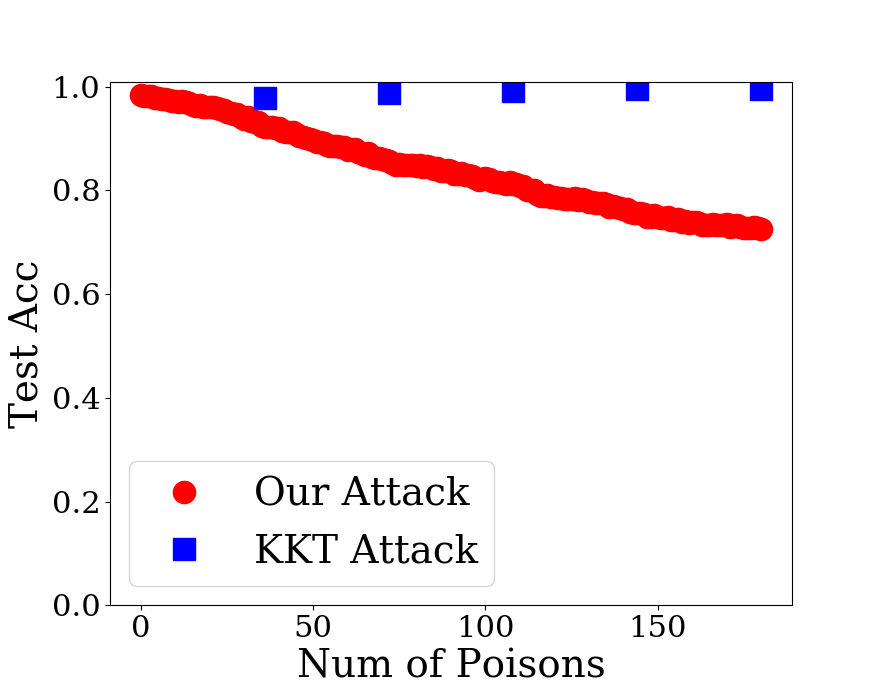}
            \caption[]%
            {30\% Error Rate}    
            \label{fig:dogfish_lr_03_acc}
        \end{subfigure}
        \caption[]
        {Logistic regression model on Dogfish: test accuracy of each target model of given error rate with classifiers induced by poisoning points obtained from our attack and the KKT attack.} 
        \label{fig:dogfish_lr_acc}
    \end{figure*}

\subsection{Improved Target Generation Process}\label{ssec:bettertarget} 
The original heuristic approach in~\citet{koh2018stronger} works by finding different quantiles of training points that have higher loss on the clean model, flipping their labels, repeating those points for multiple copies, and adding them to the clean training set. We find that, in the process of trying different quantiles and copies of high loss points, if we also adaptively update the model where the high loss points are found (instead of just always fixing it to be the clean model), we can generate a target classifier that still satisfies the attack objective but with much lower loss on the clean training. Such an improved generation process can significantly reduce the number of poisoning points needed to reach the same $\epsilon$-closeness (with respect to the loss-based distance) to the target classifier, consistent with the claims in Theorem~\ref{theorem:convergence_main} in the main paper. In addition, we find that, if we compare our attack with improved generation process to the KKT attack with the original generation process~\citep{koh2018stronger}, we can also reach the desired target error rate much faster using our attack. 

    \begin{table}[tbp]
    \centering
    \begin{tabular}{c|cc|cc|cc}
    \toprule
    \multirow{2}{*}{Target Models} & \multicolumn{2}{c|}{Test Acc (\%)} & \multicolumn{2}{c|}{Loss on Clean Set} & \multicolumn{2}{c}{\# of Poisons} \\
     & Original & Improved & Original & Improved & Original & Improved \\ \midrule
    5\% Error & 94.0 & 94.9 & 2254.6 & 1767.1 & 2170 & 1340 \\ 
    10\% Error & 88.8 & 88.9 & 4941.0 & 3233.1 & 5810 & 2432 \\ 
    15\% Error & 83.3 & 84.5 & 5428.4 & 4641.6 & 6762 & 3206 \\ \bottomrule
    \end{tabular}
    \vspace{0.5em}
    \caption[]
    {SVM on \MNIST: comparison of two target generation methods on number of poisoning points used to reach $0.1$-closeness to the target. {\it Original} indicates the original target generation process from~\citet{koh2018stronger}. {\it Improved} denotes our improved target generation process with adaptive model updating.}
    \label{tab:target_gen_compare}
    \end{table}
    
    \begin{figure*}[!tb]
        \centering
        \begin{subfigure}[b]{0.33\textwidth}
            \centering
            \includegraphics[width=0.98\textwidth]{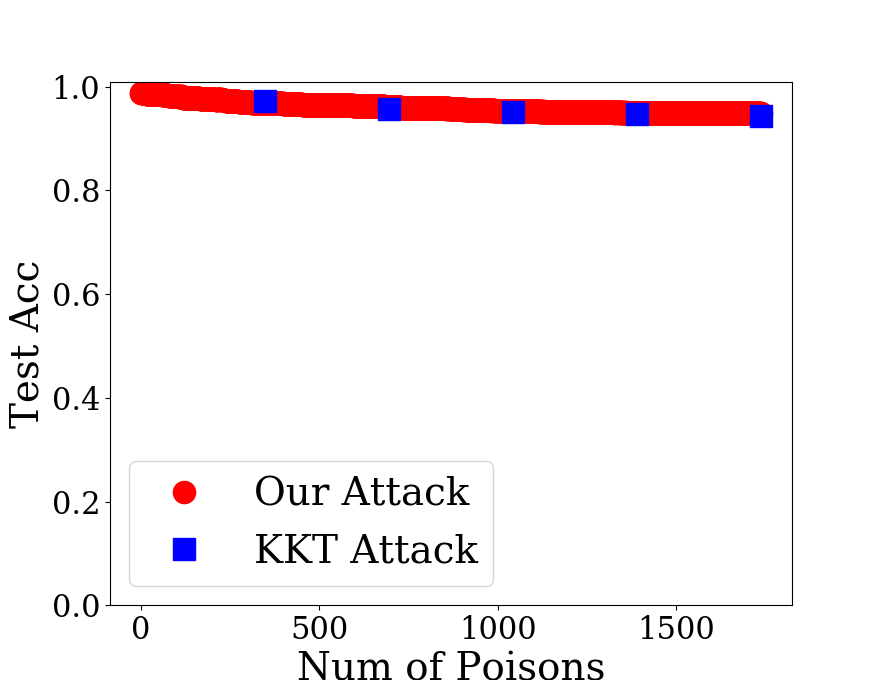}
           \caption[]
            {5\% Error Rate} 
            \label{fig:error_005_acc_scores_compare}
        \end{subfigure}
        \begin{subfigure}[b]{0.33\textwidth}  
            \centering 
            \includegraphics[width=0.98\textwidth]{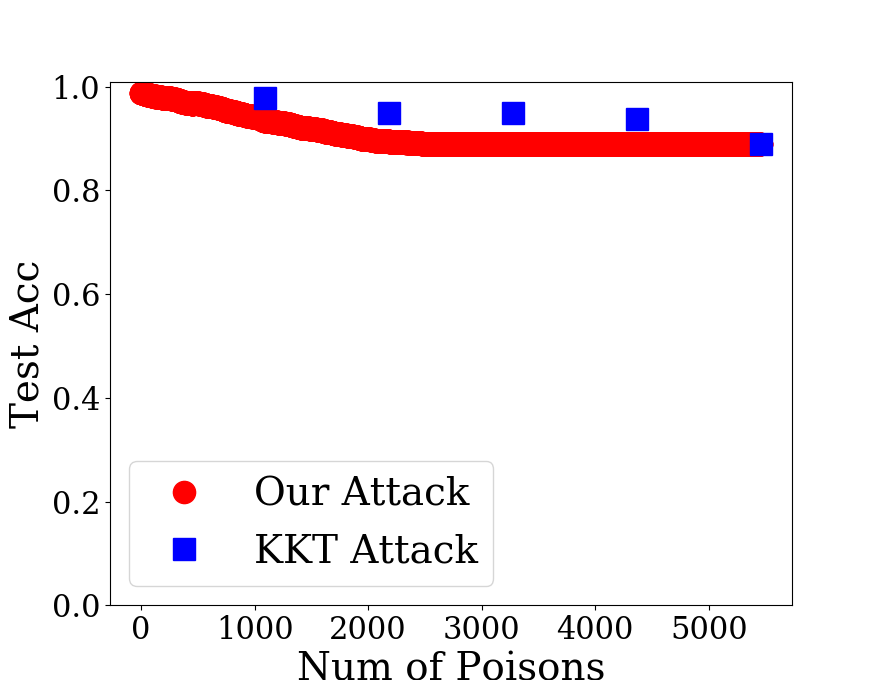}
            \caption[]%
            {10\% Error Rate}    
            \label{fig:error_01_acc_scores_compare}
        \end{subfigure}
        \begin{subfigure}[b]{0.33\textwidth}  
            \centering 
            \includegraphics[width=0.98\textwidth]{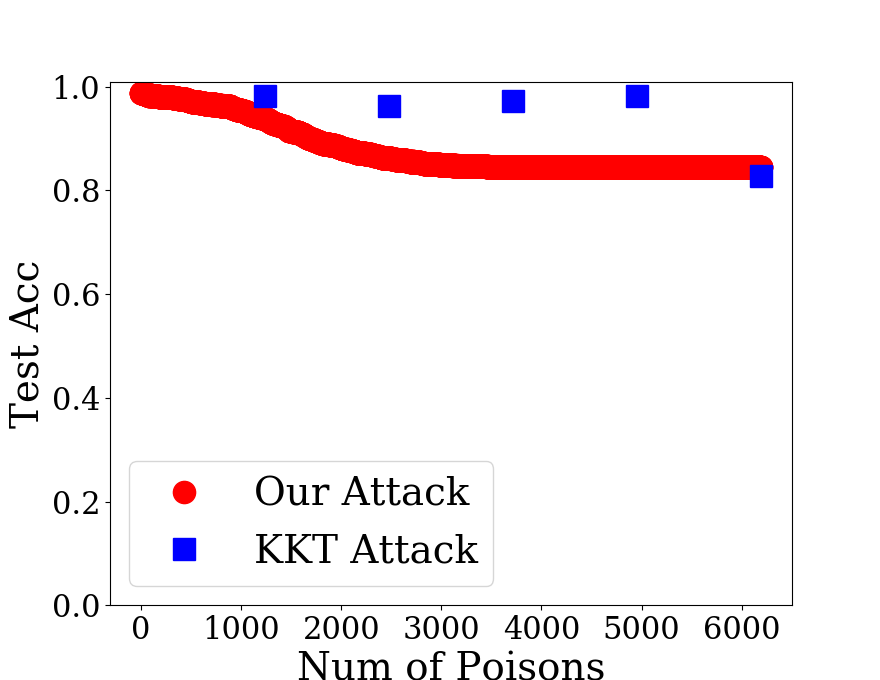}
            \caption[]%
            {15\% Error Rate}    
            \label{fig:error_015_acc_scores_compare}
        \end{subfigure}

        \caption[]
        {SVM on \MNIST: test accuracy with classifiers obtained from our attack and KKT attack. Target model for KKT attack is generated from the original generation process and target model for our attack is generated from the improved generation process. Maximum number of poisoning points is obtained by running our attack with target model generated from the original process and resultant classifier is $0.1$-close to the target.} 
        \label{fig:indiscriminate_acc_compare}
    \end{figure*}

\shortsection{Implication of Theorem~\ref{theorem:convergence_main}} 
We first empirically validate the implication of Theorem~\ref{theorem:convergence_main} in the main paper: to obtain the same $\epsilon$-closeness in loss-based distance, a target classifier with lower loss on the clean training set $\cD_c$ requires fewer poisoning points. Therefore, when adversaries have multiple target classifiers that satisfy the attack goal, the one with lower loss on clean training set is preferred. 

We run experiments on the SVM and the \MNIST\ dataset. For both the original and improved target generation methods, we generate three target classifiers with error rates of 5\%, 10\% and 15\%. The original target classifier generation method returns classifiers with test accuracy of 94.0\%, 88.8\% and 82.3\% respectively (also used in the previous experiments on indiscriminate attack). The improved target generation process returns target classifiers with approximately the same test accuracy (94.9\%, 88.9\% and 84.5\%). However, for classifiers of same error rate returned from the two target generation processes, the improved generation method produces classifiers with significantly lower loss compared to the original one. 

Table~\ref{tab:target_gen_compare} compares the two target generation approaches by showing the number of poisoning points needed to get $0.1$-close to the corresponding target model of same error rate. For example, for target models of 15\% error rate, the model from the original approach has a total clean loss of 5428.4 while our improved method reduces it to 4641.6. With the reduced clean loss, getting $0.1$-close to the target model generated from our improved process only requires 3206 poisoning points, while reaching the same distance from the target model produced by the original method would require 6762 poisoning points, a more than 50\% reduction. 

\shortsection{End-to-End Comparison} Figure~\ref{fig:indiscriminate_acc_compare} compares the two attacks in an end-to-end manner in terms of their attack success (show as the overall test accuracy after poisoning). With the improved target generation process, our attack can achieve the desired error rate much faster than the KKT attack with the original process. For the KKT attack with target model generated from the original process, we determine the target number of poisoning points by running our attack with $0.1$-closeness as the stopping criteria and the model generated from the original process as the target classifier. To run our attack with improved generation process, we terminate the algorithm when the size of the poisoning points is same as the number of poisoning points used by the KKT attack with original process. Such a termination criteria helps us to ensure that both attacks use same number of poisoning points and can be compared easily. We also evaluate the KKT attack on fractions of the maximum target number of poisoning points (0.2, 0.4, 0.6, and 0.8), as in the previous experiments. The accuracy plot shows that our attack (with improved target model) can achieve the desired error rate (e.g., 10\% and 15\%) much faster than the KKT attack (with original target model). For example, for the attacker objective of having 15\% error rate, with target classifier of error rate of 15\% error, our attack can achieve the attacker goal much faster than the KKT attack. 

\section{Comparison of Model-Targeted and Objective-Driven Attacks}
\label{sec:model_vs_objective}

Although model-targeted attacks work to induce the given target classifiers by generating poisoning points, the end goal is still to achieve the attacker objectives encoded in the target models. In terms of the comparison to the objective-driven attacks, we first demonstrate that objective-driven attacks can be used to generate a target model, which can then be used as the target for a model-targeted attack, resulting in an attack that achieves the desired attacker objective with fewer poisoning points. Then, we show that to have competitive performance against state-of-the-art objective-driven attacks (e.g., the min-max attack~\citep{steinhardt2017certified}), the target classifiers should be generated carefully, such that the attacker objectives of the target classifiers can be achieved efficiently with model-targeted attacks using fewer poisoning points. Although the investigation of a systematic approach to generate such ``desired'' classifiers is out of the scope of this paper, in the indiscriminate setting, we have some empirical evidence. Specifically, we find that target classifiers with a lower loss on the clean training set and higher error rates (higher than what are desired in the attacker objectives) often require fewer poisoning points to achieve the attacker objectives. The following experiments are conducted on the \MNIST\ dataset.

\shortsection{Target Models Generated from Objective-driven Label-Flipping Attacks} In our experiments, the target classifiers are generated from the label-flipping based objective-driven attacks that are effective but need too many poisoning points to achieve their objective. Then, our attacks are deployed to achieve the same objective with fewer poisoning points. Table~\ref{tab:label-flip-results} shows the number of poisoning points used by the label-flipping attack described in \citet{koh2018stronger} and our model-targeted attack, to achieve desired attack objectives of increasing the test error to a certain amount. We can see that using our attack, the number of poisoning points used by label-flipping attacks can be saved up to 73\%. 

\begin{table}[t]
\centering
\begin{tabular}{cccc}
\toprule
Attacker Objectives & 5\% Error & 10\% Error & 15\% Error \\ \midrule
Label-flipping Attack & 6,510 & 8,648 & 10,825 \\ 
Our Attack & 1,737 & 5,458 & 6,192\\
\bottomrule
\end{tabular}
\caption{Generate target classifiers using objective-driven label-flipping attacks and achieve similar attacker objectives using our attack with fewer poisoning points. The attacker objectives are to increase the test error to certain amounts (i.e., 5\%, 10\% and 15\%) and the target classifiers to our attack are generated by running the label-flipping attacks with given attacker objectives.}
\label{tab:label-flip-results}
\end{table}

\shortsection{Comparison to Objective-driven Attacks}
Still using target classifiers generated from label-flipping attacks, we show that our attack can outperform existing objective-driven attacks (including the state-of-the-art min-max attack~\citep{steinhardt2017certified}) at reducing the overall test accuracy, under the same amount of poisoning points. We still experiment on the SVM model and the \MNIST\ dataset.
Since we aim to produce target classifiers with lower loss on clean training set and higher error rates, we adopt the improved target model generation process described in Section~\ref{ssec:bettertarget} (helps to reduce the loss on clean training set) and generate a classifier of 15\% error rate. With the target model, we terminate our attack when a fixed number of poisoning points are generated, and then compare the attack effectiveness to existing objective-driven attacks under same number of poisoning points. We compare the test accuracies of all attacks at poisoning ratios of 5\%, 15\% and 30\%. We also modified the baseline objective-driven attacks slightly for a fair comparison:
\begin{enumerate}
    \item The min-max attack~\citep{steinhardt2017certified} and the gradient attack~\citep{koh2017understanding} consider evading defenses during the attack process, which degrades their effectiveness. We simply remove those defenses in our evaluation.
    \item Since the generated poisoning points should be valid normalized images in [0,1] range (need not be semantically meaningful), we clip their generated poisoning points into the [0,1] range.
    \item The attacks by \citet{biggio2011support,demontis2019adversarial} use validation data to compute gradients. However, our splits only contain training and testing data, To avoid leaking test-data information or using gradients from data already used to train the model, we create a 70:30 train-validation split using the original training data: this new 70\% of the training data is used while the adversary trains its models, and the remaining 30\% is used as validation data for gradient computations. The victim then trains the model on the mixture of the original (100\%) training data and the generated poisoning points.
\end{enumerate}
We note that the gradient attacks in~\citet{biggio2011support,demontis2019adversarial} are extremely slow to run on \MNIST~dataset (when we use the full training set) because the poisoning points are generated sequentially and the computational cost in each step of generation is very high. Therefore, we choose to improve the attack efficiency by repeating each generated poisoning point $N$ times and produce the desired number of poisoning points faster. We set $N=10$ for the attack on Logistic regression by \citet{demontis2019adversarial} and $N=100$ for the attack on SVM by \citet{biggio2011support} (still took 3 days to finish on the linear SVM model). For the attack by \citet{demontis2019adversarial}, we compared the setting of $N=10$ to the default setting of $N=1$ for poisoning ratios of 5\% and 10\% \footnote{We did not compare $N=1$ and $N=10$ for larger poisoning ratios because the $N=1$ case will take too long to finish.}, and did not find a significant degradation in the attack effectiveness when $N=10$ (the attack effectiveness drops by 0.3\% at most). This might be explained by the size of the training dataset: the impact of just one data point in nearly 13,000 (and even more, once the poison data generation starts) might not vary significantly across iterations, which results in adding multiple copies of the same poison data to be a fair approximation while giving a significant speedup in the runtime. We did not repeat this comparison for the attack from \citet{biggio2011support} because running it for the case of $N=1$ is simply infeasible. 

The results are summarized in Table~\ref{tab:min-max-results}. From the table, we observe that, compared to the existing objective-driven attacks, our attack reduces more on the test accuracy under the same poisoning budget and the gap becomes larger when the poisoning ratio increases. At 5\% of poisoning ratio, our attack outperforms all baseline attacks by at least 0.8\% in terms of the reduced test accuracy, and this gap increases to at least 8.6\% at 30\% of poisoning ratio.

\begin{table}[tb]
\centering
\begin{tabular}{cccc}
\toprule
Attack/Model & 5\% Poison Ratio & 15\% Poison Ratio & 30\% Poison Ratio \\ \midrule
Min-Max Attack~\cite{steinhardt2017certified}/SVM & 97.0\% & 93.9\% & 92.9\% \\ 
\citet{biggio2011support}/SVM & 98.7\% & 98.2\%  & 96.8\%  \\
\citet{koh2017understanding}/SVM & 98.7\% & 98.0\%  & 97.2\%  \\ 
\textbf{Our Attack/SVM} & $\mathbf{96.2\%}$ & $\mathbf{88.6\%}$ & $\mathbf{84.3\%}$ \\ \midrule
\citet{demontis2019adversarial}/LR & 98.2\% & 97.6\%  & 95.7\% \\
\textbf{Our Attack /LR} & $\mathbf{96.5\%}$ & $\mathbf{89.1\%}$  & $\mathbf{83.1\%}$  \\
\bottomrule
\end{tabular}
\caption{Comparison of our attack to objective-driven attacks with different poisoning ratios. The target model of our attack is of 15\% error rate. The poisoning ratio is with respect to the full training set size of 13,007. Each cell in the table denotes the test accuracy of the classifier after poisoning. The clean test accuracies of SVM and LR models are 98.9\% and 99.1\% respectively.}
\label{tab:min-max-results}
\end{table}


\begin{table}[bt]
\centering
\begin{tabular}{llllrrr}
\toprule
 & \multicolumn{3}{c}{SVM} & \multicolumn{3}{c}{Logistic Regression} \\
 & Cluster 0 & Cluster 1 & Cluster 2 & Cluster 0 & Cluster 1 & Cluster 2 \\\midrule
\textbf{Our Attack} & $\mathbf{0.0\%}$ & $\mathbf{0.0\%}$ & $\mathbf{0.0\%}$ & $\mathbf{0.0\%}$ & $\mathbf{0.0\%}$ & $\mathbf{0.0\%}$ \\
Label-Fipping & 31.4\% & 2.8\% & 15.5\% & 15.9\% & 14.0\% & 19.1\% \\
\bottomrule
\end{tabular}
\caption{Comparison of our attack to the label-flipping based subpopulation attack. The table compares the test accuracy on subpoplation of Adult dataset under same number of poisning points. The number of poisons are determined when our attack achieves 0\% test accuracy on the subpopulation. Cluster 0-3 in the logistic regression and SVM models denote different clusters. For logistic regression, number of poisoning points for Cluster 0-3 are 1,575, 1,336 and 1,649 respectively. For SVM, number of poisoning points for Cluster 0-3 are 1,252, 1,268 and 1,179 respectively.}
\label{tab:subpop-baseline-results}
\end{table}

\shortsection{Comparison to the Label-Flipping Subpopulation Attack} We also compare our attack to the label-flipping subpopulation attack from \citet{jagielski2019subpop}. This attack works by randomly sampling fixed number (constrained by the poisoning budget) of instances from the training data of the subpopulation, flipping their labels and then injecting them into to the original training set. Although this attack is very simple, it shows relatively high attack success when the goal is to cause misclassification on the selected subpopulation~\citep{jagielski2019subpop}. 

To be consistent with our experiments in Section~\ref{sec:experiments}, we assume the attacker objectives are still to induce a model that has 0\% accuracy on a selected subpopulation. For each of the SVM and logistic regression models, we selected the three subpopulations with highest test accuracy (all end up having have 100\% accuracy). 
In indiscriminate setting, we already observed that models with lower loss on clean training set and larger overall error rates can achieve attacker objectives of smaller error rates faster. However, to leverage this observation into our subpopulation experiments, one challenge is the attacker objective is to have 100\% test error on the subpopulation, but no classifiers can have test errors larger than 100\%. To tackle this, we select models with larger loss on training samples from the subpopulation, with a hope that this process is ``equivalent'' to selecting target models with larger error rates (on subpopulation) than 100\%. To this end, we heuristically select targeted models that satisfy the attacker objective, have larger loss on the training data from the subpopulation, and have relatively low loss on the entire clean training set. Empirically, this selection strategy works better than the original target generation process (as done in Section~\ref{sec:experiments}) in achieving the attacker objectives. A more detailed and systematic investigation of the target model search process is left as the future work. 

To check the effectiveness of achieving the attacker objectives, we first run our attack and terminate when our attack achieves the attacker objective to have 0\% accuracy on the selected subpopulation, and record the number of poisoning points used. Then, we run the random label-flipping attack with the same number of poisoning points. For both attacks, we report the final test accuracies of the resulting models on the subpopulations. 

The attack comparisons on different subpopulation clusters and models are given in Table~\ref{tab:subpop-baseline-results}. Results in the table compare our attack and the label-flipping attack over the three distinct subpopulation clusters for the SVM and logistic regression models. Across all settings, our attack is considerably more successful. The number of poisoning points needed to reach the 0\% accuracy goal is small compared to the entire training set size (e.g., the maximum poisoning ratio is only 10.5\%). The gap between our attack and the label-flipping attack is fairly small.  For example, for Cluster 1 in the SVM experiment, the label-flipping attack is also quite successful and reduces the test accuracy to 2.8\% (our attack achieves 0\% accuracy). We believe the success of label-flipping attack is due to the following two reasons. First, label-flipping in the subpopulation setting can be successful because smaller subpopulations show some degree of locality and hence, injecting points (from the subpopulation) with flipped labels can have a strong impact on the selected subpopulation. This is confirmed by empirical evidence that increasing the subpopulation size (i.e., reducing its locality) gradually reduces the label-flipping effectiveness and the attack becomes much less effective in the indiscriminate setting (i.e., subpopulation is the entire population). Second, the Adult dataset only contains 57 features, where 53 of them are binary features with additional constraints. Therefore, the benefit from optimizing the feature values is less significant as the optimization search space of our attack is fairly limited. 


\section{Attacks on Deep Neural Networks}
\label{sec:dnn_results}

The theoretical guarantees of our proposed algorithm require convexity of the model loss. They do not hold for non-convex models such as deep neural networks (DNN). However, we hypothesize that our method of picking poisoning points incrementally might still perform well on non-convex models. Here, we report some preliminary results attacking DNNs.

Several poisoning attacks have been proposed for DNNs. However, all attacks with publicly available source code focus on causing misclassification for a given single instance~\citep{shafahi2018poison,zhu2019transferable,huang2020metapoison,geiping2020witches}, while we are more interested in the practical sub-population setting. Therefore, we use a random label-flipping attack as our baseline in the sub-population setting. The label-flipping attack selects a random image from the dataset in the targeted sub-population class without replacement and changes its label.

For these experiments, we use the \MNIST\ dataset and conduct sub-population poisoning attacks where the targeted sub-population is the class {\sf 1} (so, the adversary's goal is to have test images that would be correctly classified as {\sf 1} digits, classified as {\sf 7}s).  We compare the poisoning effectiveness in reducing the classification accuracy for the {\sf 1} class of our algorithm to the random label-flipping attack.
We implement our attack for DNNs with the cross-entropy loss as our loss function. We conduct experiments poisoning a non-convex three-layer neural network with non-linear activation functions, a multilayer perceptron (MLP).\footnote{We also tested our attack on convolutional neural networks with all modifications to our attack (described below) that work well on the MLP models. However, its performance is unstable and exhibits erratic accuracy curves despite the smooth loss convergence. We leave exploring loss functions and tuning the attack to make it work for convolutional neural networks as part of future work.}

We observe that direct implementations of these poisoning attacks do not work. Via experiments to understand the kind of randomness introduced by varying hyper-parameters like batch-size and weight-initialization, we take steps to remove these sources of variation (Section~\ref{sec:simplify_dnn_setting}). Then, we describe modifications to our attack to make it work on DNNs (Section~\ref{sec:modify-to-dnn}). Finally, we show results with the modified attack and how it performs well when one of these assumptions is relaxed (Section~\ref{sec:dnn-specific-results}), giving us some hope for the possibility of relaxing other assumptions as well.

\begin{figure*}[tbh!]
    \centering
    \begin{subfigure}[b]{0.45\textwidth}
        \centering
        \includegraphics[width=\textwidth]{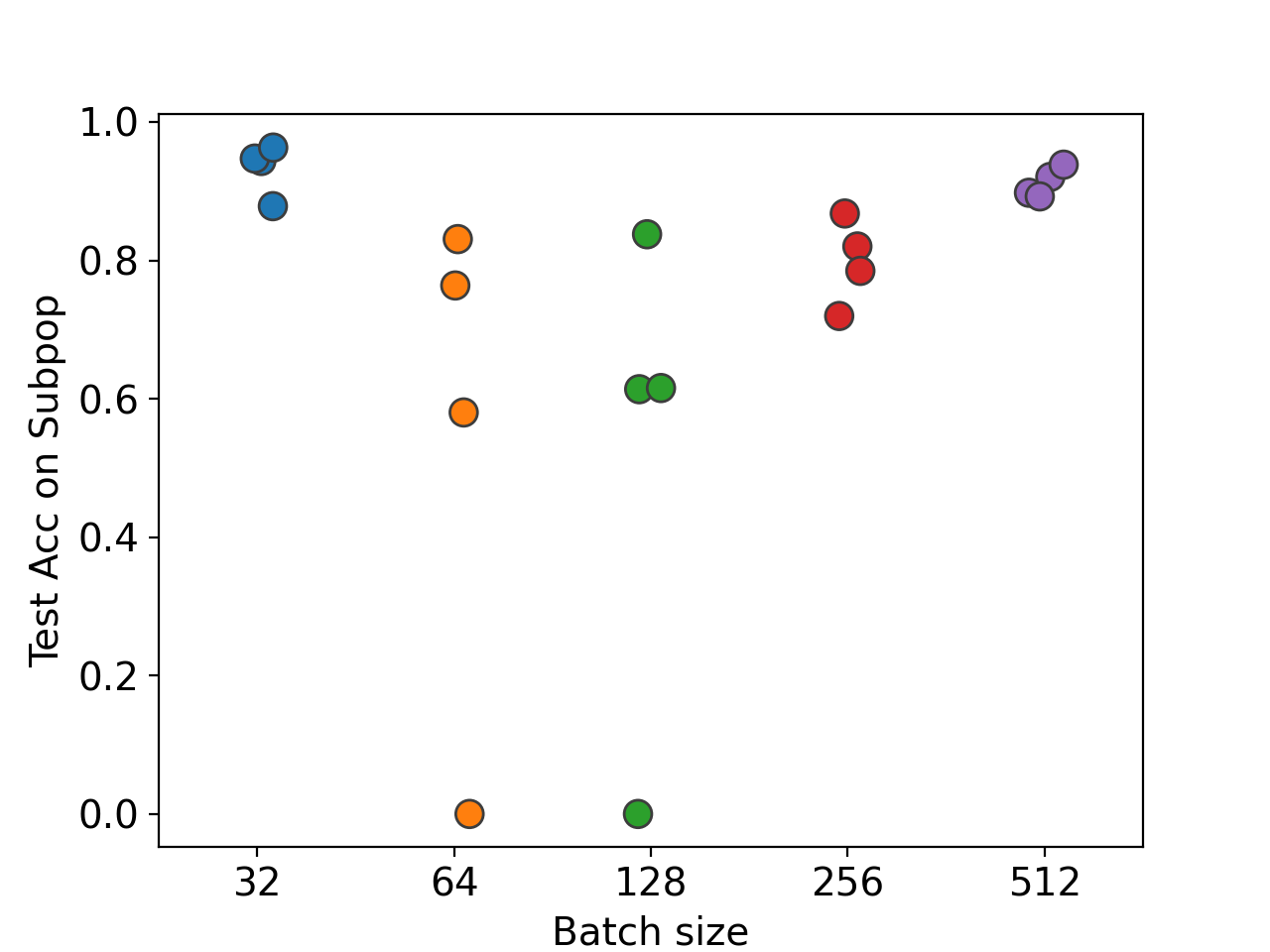}
          \caption[]
        {Fixed Model Initialization, Varying Batch-sizes.}
        \label{fig:vary_batch_size}
    \end{subfigure}
    \begin{subfigure}[b]{0.45\textwidth} 
        \centering 
        \includegraphics[width=\textwidth]{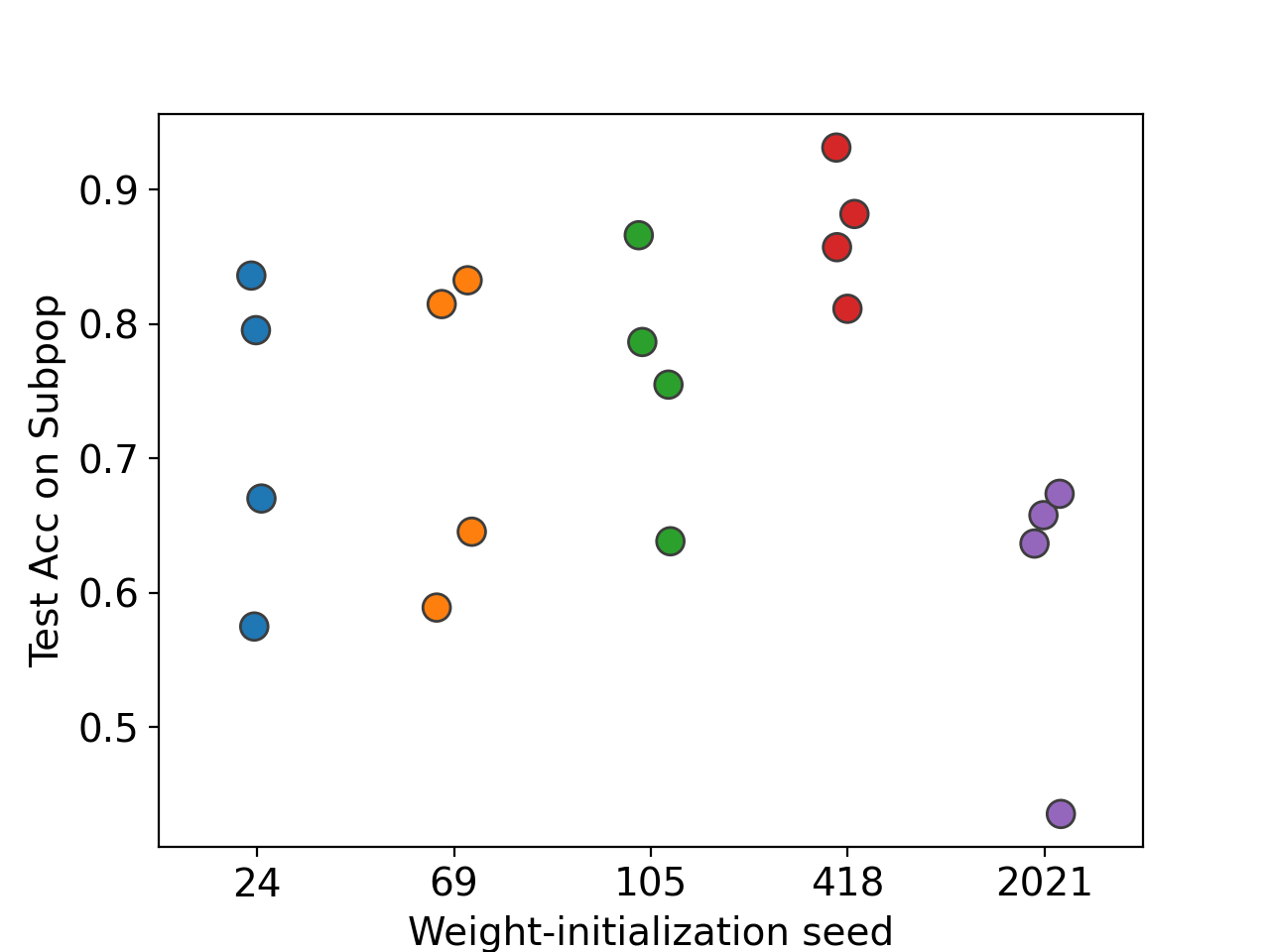}
        \caption[]%
        {Fixed Batch-size, Varying Model Initialization.}
        \label{fig:vary_model_init}
    \end{subfigure}
    \caption[]
    {Variance of Poisoning Attacks. Each figure shows the test accuracy on sub-population for the label-flipping attack (same poisoning ratio: 0.5) on \MNIST, with varying hyper-parameters of batch-size and model-weight initializations. Each setting is repeated four times, and each dot shows the result for one run. The attack is highly unstable, with large variations even when everything other than either the batch size or the weight initializations is changed.} 
    \label{fig:label_flip_unstable}
\end{figure*} 

\begin{figure*}[tbh!]
    \centering
    \begin{subfigure}[b]{0.45\textwidth}
        \centering
        \includegraphics[width=\textwidth]{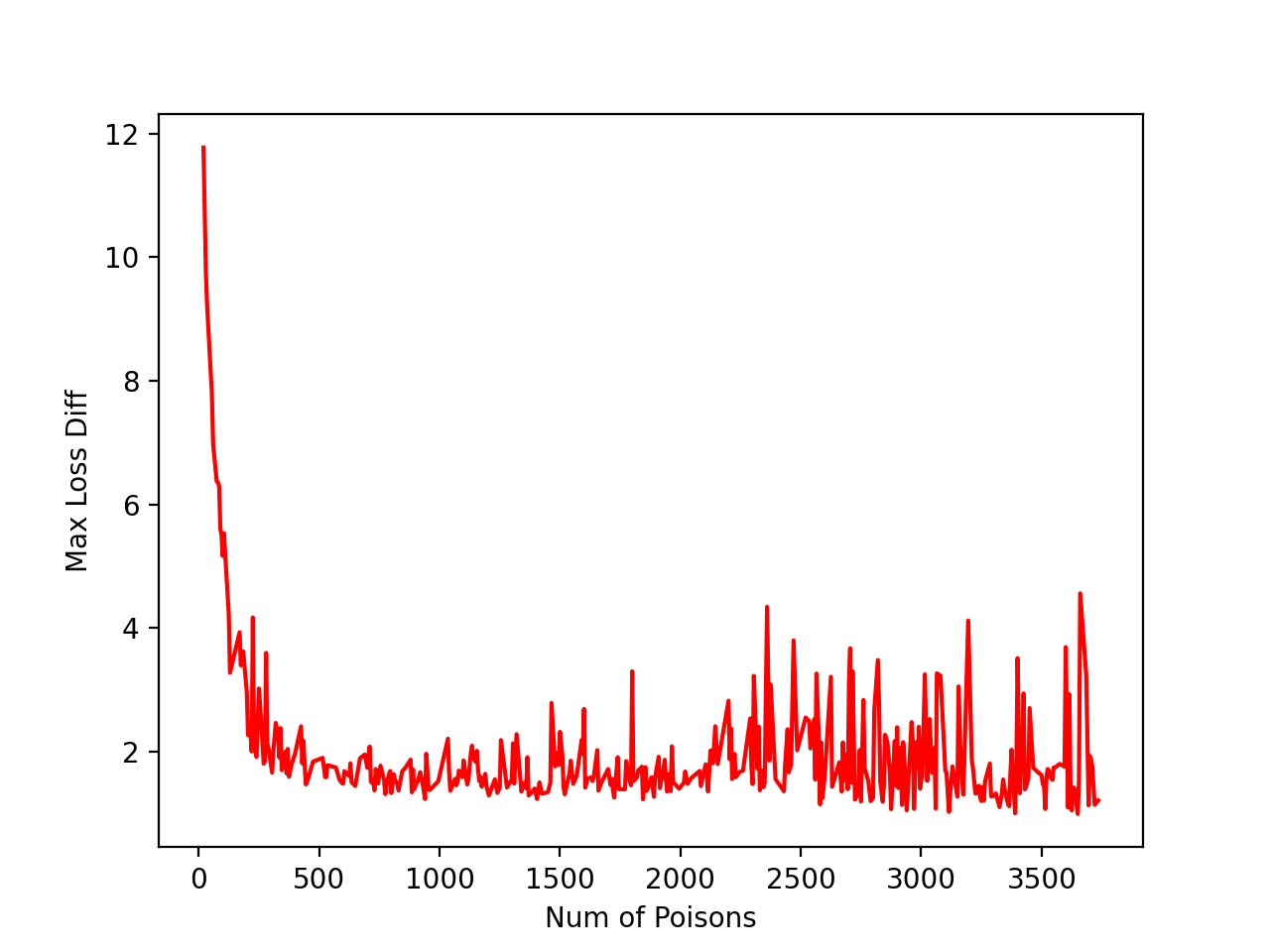}
          \caption[]
        {Maximum Loss Difference}
    \end{subfigure}
    \begin{subfigure}[b]{0.45\textwidth} 
        \centering 
        \includegraphics[width=\textwidth]{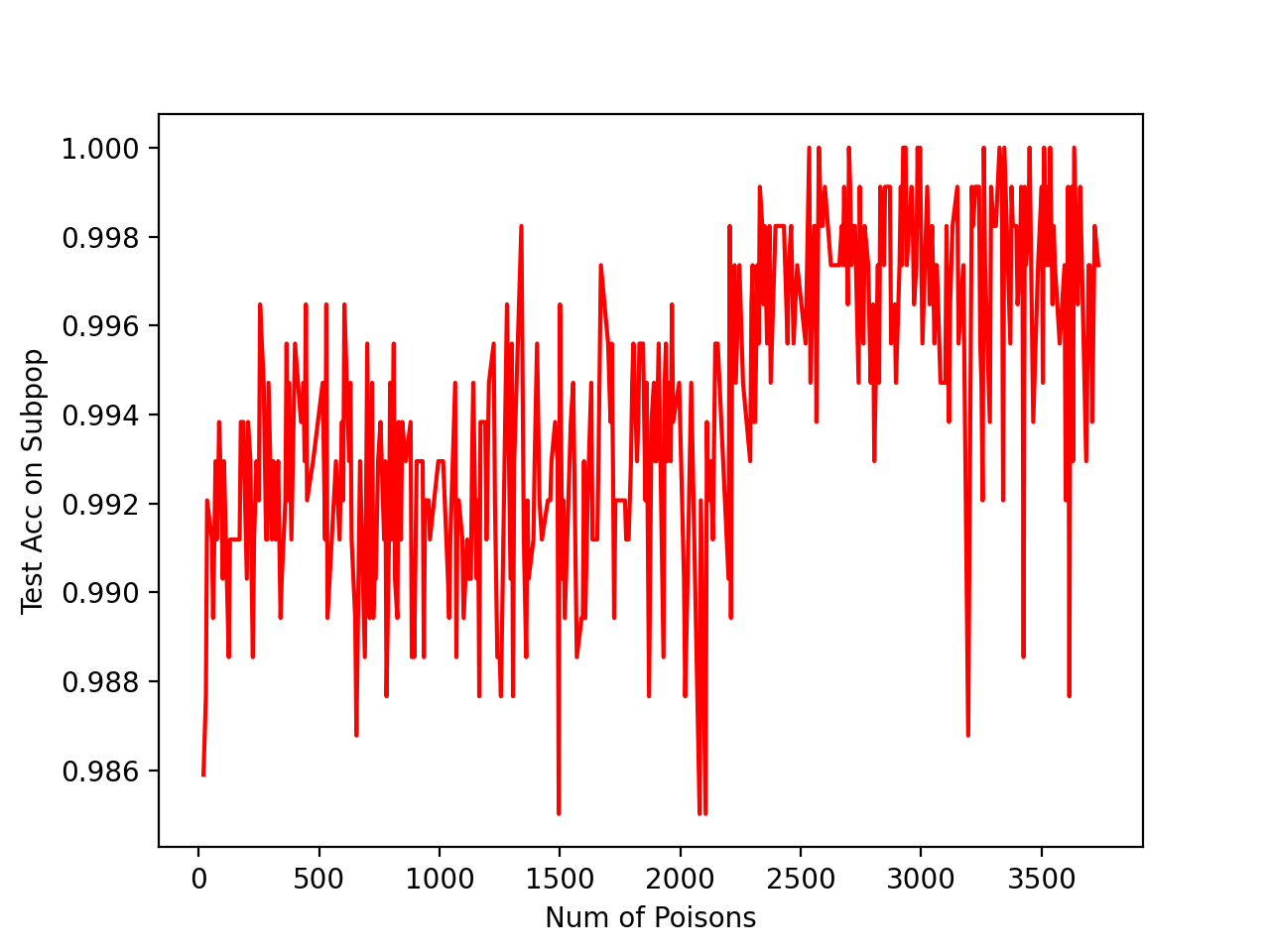}
        \caption[]%
        {Test Accuracy on Target Sub-population}
    \end{subfigure}
    \caption[]
    {Maximum loss difference and test accuracy on target sub-population across iterations for our algorithm. Data is not batched, and same weight-initializations for $\theta_t, \theta_p$ are used. The loss drops sharply within the first few iterations, but the accuracy fluctuates within a very small window, even when $|\cD_p| \sim 0.5|\cD_c|$ is added.}
    \label{fig:mtp_sameseed}
\end{figure*} 

\subsection{Simplifying the Setting} \label{sec:simplify_dnn_setting}

Both our attack and the label-flipping attack on DNNs are observed to be highly sensitive to hyper-parameters like batch-size, weight-initialization, and even randomness induced by the ordering of batches across epochs. As shown in Figure~\ref{fig:vary_batch_size}, for the same initialization for model weights and batch-size, different runs of the label-flipping attack with the same poisoning ratio lead to wildly varying error rates. Figure~\ref{fig:vary_model_init} shows that, even when we only vary the model weight-initialization and keep other hyper-parameters fixed, attack effectiveness fluctuates significantly across different random weight initializations.

To better compare our attack with the label-flipping baseline reliably, we design our experiments by not batching the data (\textit{i.e.} batch-size is the same as dataset size). The target model $\theta_p$ is trained with the label-flipping attack with fixed weight-initialization and no batching. Additionally, we ensure that the weight-initialization used to generate the intermediate model $\theta_t$ in each iteration of our attack is the same as the weight-initialization to train the target model $\theta_p$. Using a different weight initialization in each round of retraining interferes with model convergence and leads to unstable results. This way, we can substantially eliminate randomness introduced by batching data and different model weight initializations.

\subsection{Modifying Attack for DNNs}
\label{sec:modify-to-dnn}

Despite removing batching and setting the weight-initialization for $\theta_p$ and $\theta_t$ to be the same, we observe that our attack still fails. Even though the loss difference seems to converge, the model preserves its accuracy on the target sub-population; dropping by less than 2\% across the iterations even up to a poisoning rate of 0.55 (Figure~\ref{fig:mtp_sameseed}).
Although we do not understand what causes this behavior, we speculate that it is due to a disconnect in the attacker's objective and the loss-function used.

To mitigate this problem, we modify the algorithm to constrain the search space of possible poisoning points to a predefined set of candidates. By iterating over all the candidate points, the algorithm picks the most promising poisoning point (i.e., with maximum loss difference between $\theta_t$ and $\theta_p$) from this candidate set. To define the candidate set, we construct two non-overlapping, equal-sized stratified splits of the dataset. The first one is used for training purposes ($\cD_c$), while the second one is used as the candidate set for $(x^*, y^*)$ optimization.
We add an additional constraint on the candidate set that enforces the selection of points from the target sub-population but are assigned an incorrect label (i.e., the candidate set consists of digit 7, and the assigned labels are 1.).



\subsection{Results}
\label{sec:dnn-specific-results}


We start with the case where the weight-initialization used by the victim to train its models is known to the adversary. This setting is unrealistic, but shows how effective the attack could be when the adversary has full knowledge of everything about the victim's training process, including the random seeds used. With the constraints described in Section~\ref{sec:modify-to-dnn}, our attack consistently outperforms the label-flip attack by a large margin, as shown in Figure~\ref{fig:mtp_final}. For these experiments, we observe similar convergence and attack success rates between adding just one copy of $(x^*, y^*)$ per iteration and adding as many as ten copies, and with at most ten copies, we can reduce attack execution time by nearly 90\%.

\begin{figure*}[tbp]
    \centering
    \begin{subfigure}[b]{0.33\textwidth}
        \centering
        \includegraphics[width=0.98\textwidth]{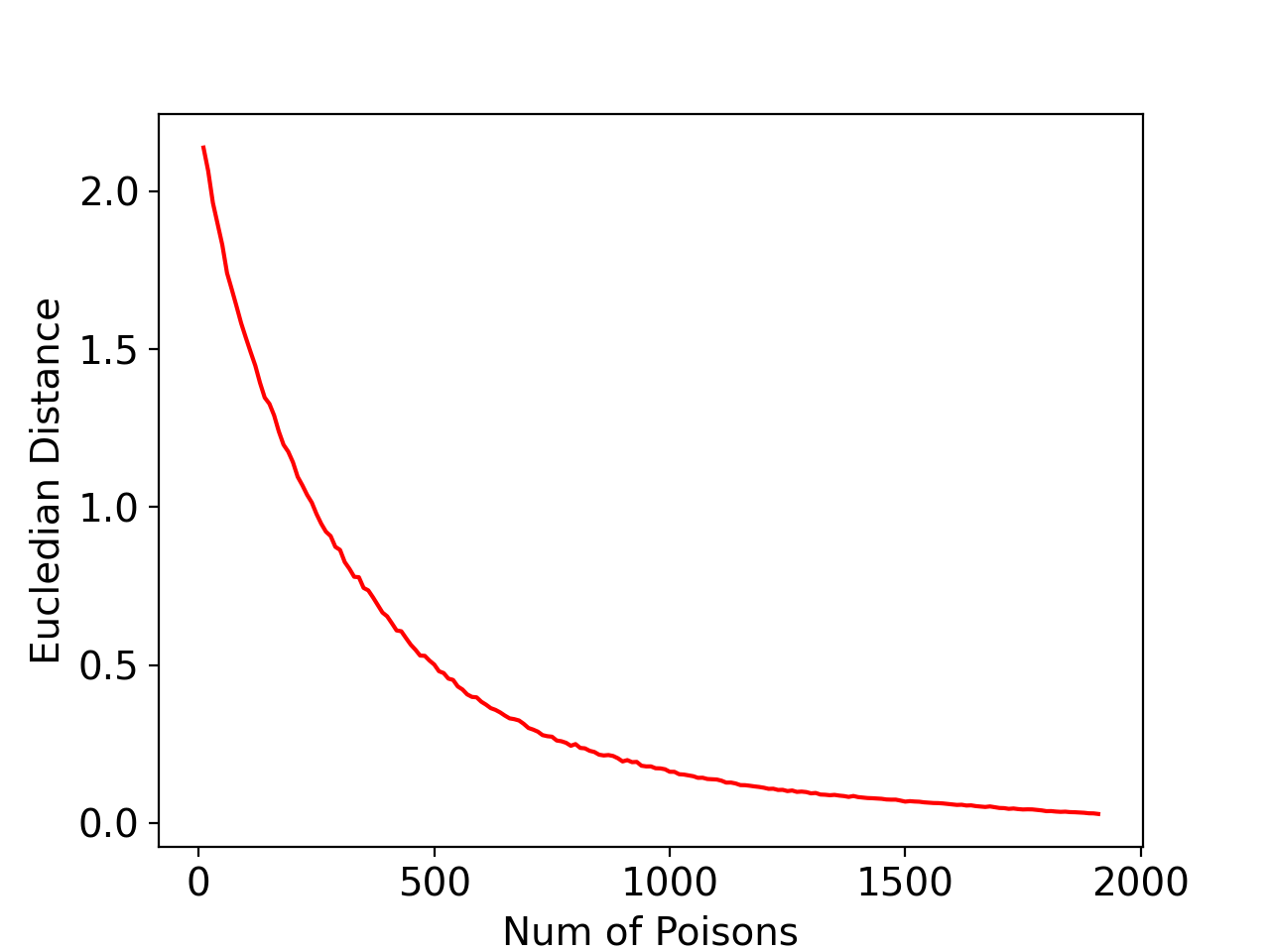}
          \caption[]
        {Maximum Loss Difference}
    \end{subfigure}
    \begin{subfigure}[b]{0.33\textwidth} 
        \centering 
        \includegraphics[width=0.98\textwidth]{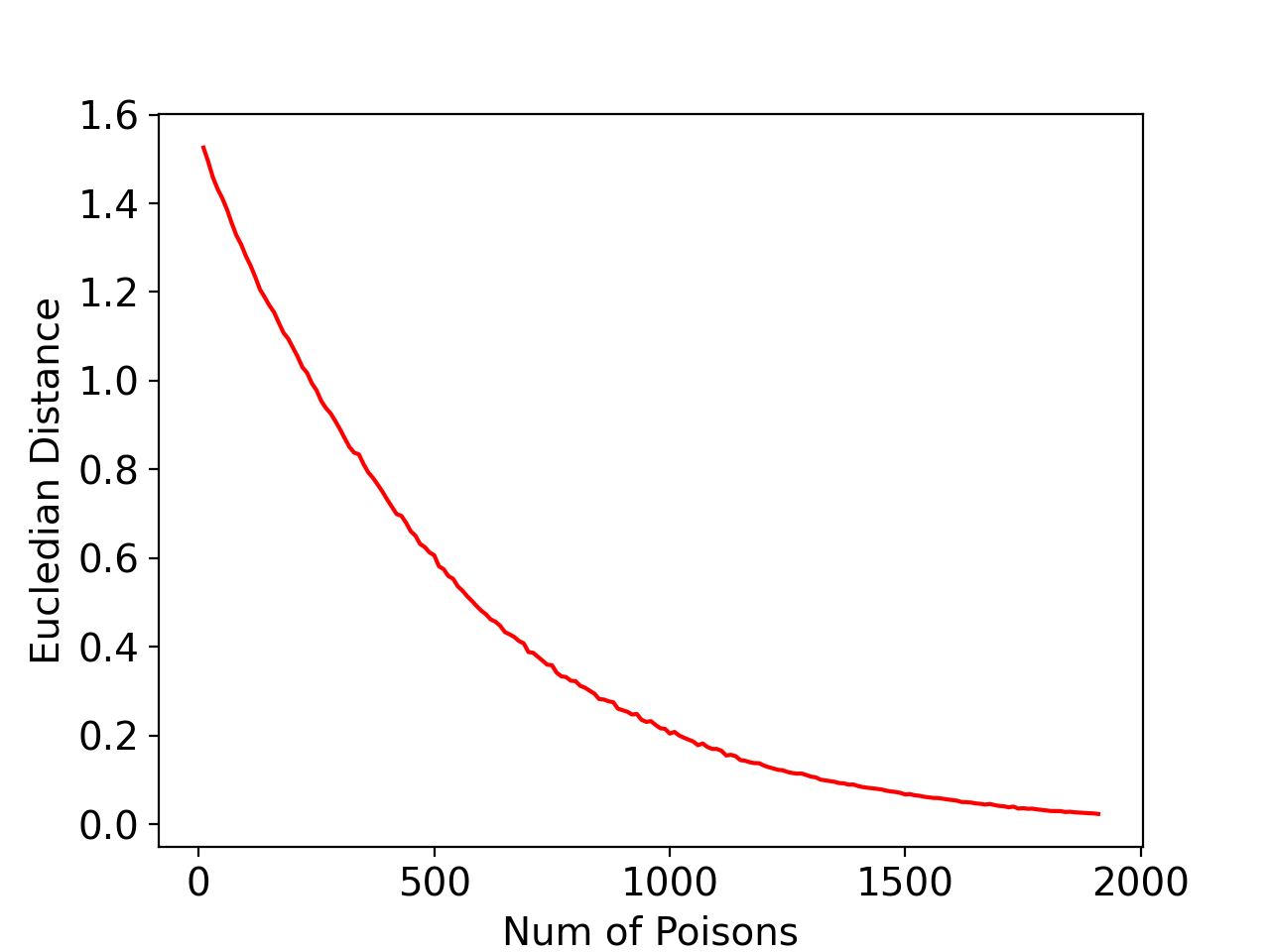}
        \caption[]%
        {Euclidean Distance}
    \end{subfigure}
    \begin{subfigure}[b]{0.33\textwidth} 
        \centering 
        \includegraphics[width=0.98\textwidth]{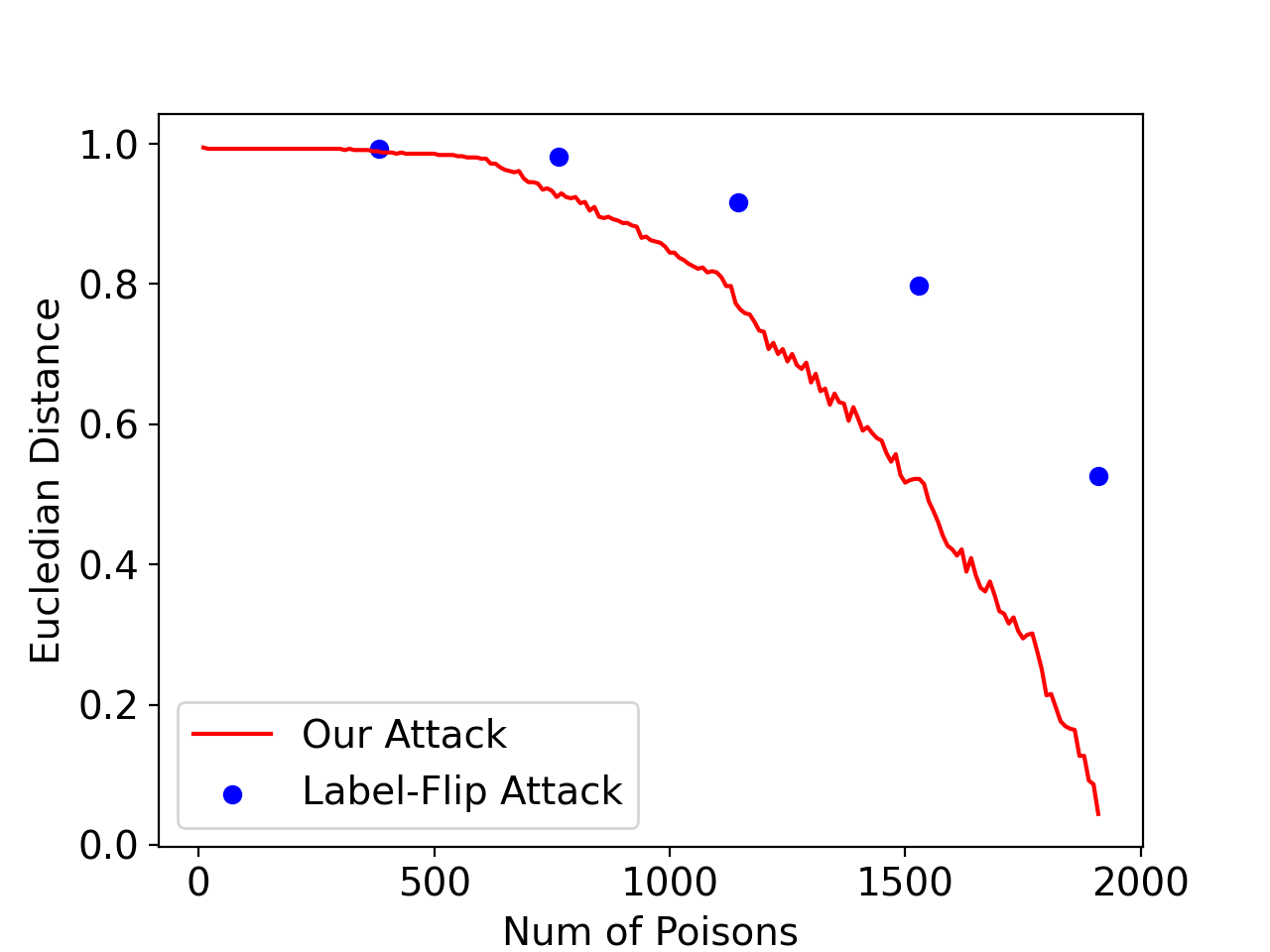}
        \caption[]%
        {Test Accuracy on Target Sub-population}
    \end{subfigure}
    \caption[]
    {Maximum loss difference and test accuracy on target sub-population across iterations for our algorithm. The optimization process is constrained to select points from the candidate set. As visible, both the loss and Euclidean distance converges to zero smoothly. Additionally, our attack outperforms label-flip attack by a significant margin. We observed consistent results across several seeds.}
    \label{fig:mtp_final}
\end{figure*}



Next, we evaluate the attack in a more realistic setting where the adversary does not know the weight-initializations used in training the victim model. As shown in Figure~\ref{fig:mtp_unseen_seeds}, we observe a large variation in the performance of trained models across different initial model weights, and the attack is not as effective as it can be when the initialization is known. The variance in attack performance is because these models are unstable to varying weight-initializations (Figure~\ref{fig:vary_model_init}) --- some initial weights are biased towards having larger errors on the target sub-population, making it possible to poison these models with fewer points.
Even in this setting, our model-targeted poisoning attack consistently outperforms the label-flipping attack.

\begin{figure*}[tbp]
    \centering
    \includegraphics[width=0.45\textwidth]{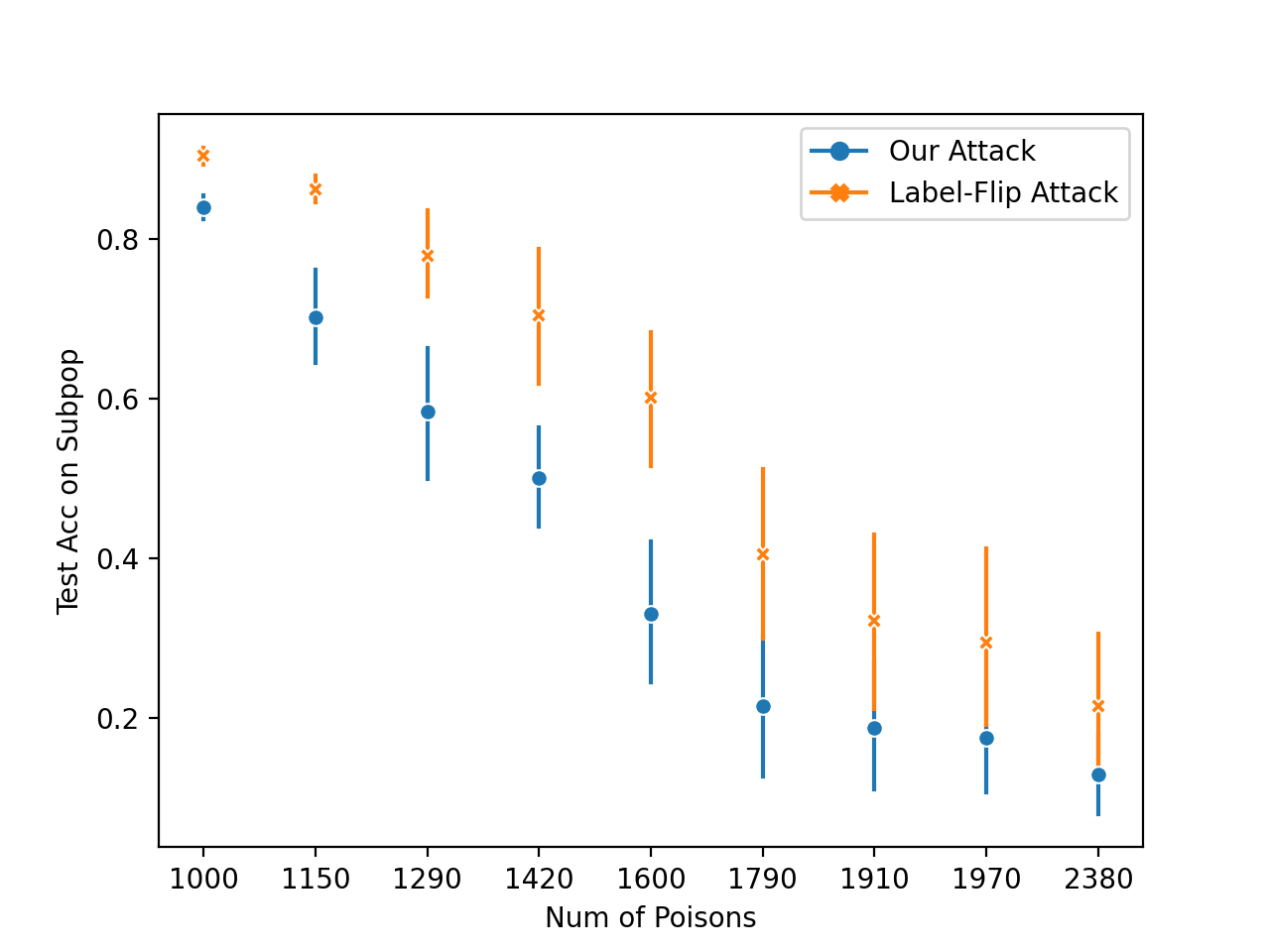}
    \caption[]
    {Poisoning attack effectiveness when the adversary does not know the victim's weight initializations (ten different seeds tried per experiment). The error bars show 68\% confidence intervals (standard error).}
    \label{fig:mtp_unseen_seeds}
\end{figure*}

\end{document}